\newcommand{\tablesize}{\footnotesize}
\declaretheorem[name=Definition]{definition}
\DeclareMathOperator*{\argmin}{arg\,min}
\DeclareMathOperator*{\argmax}{arg\,max}
  \providecommand\BibTeX{{%
    \normalfont B\kern-0.5em{\scshape i\kern-0.25em b}\kern-0.8em\TeX}}}
\begin{document}

\setcopyright{acmlicensed}
\acmPrice{}
\acmDOI{10.1145/3591299}
\acmYear{2023}
\copyrightyear{2023}
\acmSubmissionID{pldi23main-p706-p}
\acmJournal{PACMPL}
\acmVolume{7}
\acmNumber{PLDI}
\acmArticle{185}
\acmMonth{6}

\begin{CCSXML}
<ccs2012>
   <concept>
       <concept_id>10003752.10010124.10010138.10010143</concept_id>
       <concept_desc>Theory of computation~Program analysis</concept_desc>
       <concept_significance>500</concept_significance>
       </concept>
   <concept>
       <concept_id>10003752.10010124.10010138.10011119</concept_id>
       <concept_desc>Theory of computation~Abstraction</concept_desc>
       <concept_significance>500</concept_significance>
       </concept>
   <concept>
       <concept_id>10010147.10010257.10010293.10010294</concept_id>
       <concept_desc>Computing methodologies~Neural networks</concept_desc>
       <concept_significance>500</concept_significance>
       </concept>
 </ccs2012>
\end{CCSXML}

\ccsdesc[500]{Theory of computation~Program analysis}
\ccsdesc[500]{Theory of computation~Abstraction}
\ccsdesc[500]{Computing methodologies~Neural networks}
\keywords{Verification, Robustness, Deep Neural Networks}

\title{Incremental Verification of Neural Networks}

\author{Shubham Ugare}
\affiliation{%
  \institution{University of Illinois Urbana-Champaign}
  \country{USA}
}
\author{Debangshu Banerjee}
\affiliation{%
  \institution{University of Illinois Urbana-Champaign}
  \country{USA}
}

\author{Sasa Misailovic}
\affiliation{%
  \institution{University of Illinois Urbana-Champaign}
  \country{USA}
}

\author{Gagandeep Singh}
\affiliation{%
  \institution{University of Illinois Urbana-Champaign and VMware Research}
  \country{USA}
}


    \definecolor{WowColor}{rgb}{.75,0,.75}
\definecolor{SubtleColor}{rgb}{0,0,.50}



\newcommand{\Tool}{IVAN\xspace}

\newcommand{\vbound}{A}
\newcommand{\hbranch}{H}
\newcommand{\lb}{\textit{LB}}

\newcommand*{\node}{n}

\newcommand{\Norig}{N\xspace} 
\newcommand{\tbr}{t_\textit{H}}
\newcommand{\tbo}{t_\textit{A}}
\newcommand{\Timeb}{\textit{Time}_\Delta}
\newcommand{\Time}{\textit{Time}}

\newcommand{\inpreg}{\phi_t\xspace} 

\newcommand{\nodes}[1]{\textit{Nodes}(#1)}
\newcommand{\leaves}[1]{\textit{Leaves}(#1)}
\newcommand{\children}[1]{\textit{Children}(#1)}
\newcommand{\spec}[1]{\varphi_{#1}}
\newcommand{\nroot}{n_\textit{root}}
\newcommand{\T}[2]{T^{#1}_{#2}}
\newcommand{\Tinit}{T^{\perturbedNetwork}_0}
\newcommand{\add}{\textit{Split}}
\newcommand{\delete}{\textit{Delete}}
\newcommand{\treeset}{\mathcal{T}_\mathcal{N}}
\newcommand{\treesetforN}{\mathbf{T}}

\newcommand{\Tprune}{T_{\textit{P}}}
\newcommand{\queue}{Q}
\newcommand{\nqueue}{Q_\textit{new}}
\newcommand{\nodeMap}{M}

\newcommand{\hratio}{\alpha}
\newcommand{\threshold}{\theta}

\newcommand{\timebase}{\tau_\textit{B}}
\newcommand{\timetool}{\tau_\textit{\Tool{}}}
\newcommand{\speedup}{\mathit{Sp}}
\newcommand{\solved}{\textit{+Solved}}



\newcommand{\upto}{43x}
\newcommand{\geomean}{2.4x}
\newcommand{\geomeanhard}{3.1x}
\newcommand{\geomeanglobal}{3.8x}

\newcommand{\NA}[1]{\textcolor{SubtleColor}{ {\tiny \bf ($\star$)} #1}}
\newcommand{\NO}[1]{\textcolor{SubtleColor}{ {\tiny \bf ($\star$)} \sout{#1}}}
\newcommand{\LATER}[1]{\textcolor{SubtleColor}{ {\tiny \bf ($\dagger$)} #1}}
\newcommand{\TBD}[1]{\textcolor{SubtleColor}{ {\tiny \bf (!)} #1}}
\newcommand{\PROBLEM}[1]{\textcolor{WowColor}{ {\bf (!!)} {\bf #1}}}

\newcounter{margincounter}
\newcommand{\new}[1]{{#1}} 

\newcommand{\displaycounter}{{\arabic{margincounter}}}
\newcommand{\incdisplaycounter}{{\stepcounter{margincounter}\arabic{margincounter}}}

\newcommand{\fTBD}[1]{\textcolor{SubtleColor}{$\,^{(\incdisplaycounter)}$}\marginpar{\tiny\textcolor{SubtleColor}{ {\tiny $(\displaycounter)$} #1}}}

\newcommand{\fPROBLEM}[1]{\textcolor{WowColor}{$\,^{((\incdisplaycounter))}$}\marginpar{\tiny\textcolor{WowColor}{ {\bf $\mathbf{((\displaycounter))}$} {\bf #1}}}}

\newcommand{\fLATER}[1]{\textcolor{SubtleColor}{$\,^{(\incdisplaycounter\dagger)}$}\marginpar{\tiny\textcolor{SubtleColor}{ {\tiny $(\displaycounter\dagger)$} #1}}}

\newcommand*{\maxPaper}{\mathit{max}}
\newcommand*{\OrgNetwork}{N}
\newcommand*{\perturbedNetwork}{N^a}
\newcommand*{\Relu}{\mathit{ReLU}}
\newcommand*{\Reluabstract}[1]{\mathit{A_{ReLU}^{\#}}(#1)}
\newcommand*{\InputDimension}{m}
\newcommand*{\OutputDimension}{n}
\newcommand*{\Dimension}{n}
\newcommand*{\Layers}{l}
\newcommand*{\Reluset}{\mathcal{R}}
\newcommand*{\Statesp}{\mathcal{S}}
\newcommand*{\SplitRelu}{\mathcal{P}}
\newcommand*{\Solv}{\mathcal{F}}
\newcommand*{\GenericNet}{N}
\newcommand*{\PreCond}{\varphi}
\newcommand*{\PostCond}{\psi}
\newcommand*{\LinearProb}{\mathcal{L}}
\newcommand*{\SolvFunc}{V_{\mathcal{T}}}
\newcommand*{\ProblemMin}{LB}
\newcommand*{\ProblemMax}{\mathcal{\beta}}
\newcommand*{\FeasibleReg}{\mathcal{F}}
\newcommand*{\Eps}{\mathcal{E}}
\newcommand*{\Lpc}{\mathcal{C}}
\newcommand*{\EpsNorm}{\delta}
\newcommand*{\MaxNorm}{\eta}
\newcommand*{\Difflayer}{i_{0}}
\newcommand*{\Lpconst}{\zeta}
\newcommand*{\EpsNormBound}{\Delta_{max}}
\newcommand{\distri}[1]{\mathbf{C}_{#1}(\mathcal{N})}

\newcommand{\neta}{FCN-MNIST }
\newcommand{\netb}{CONV-MNIST } 
\newcommand{\netc}{CONV-CIFAR }
\newcommand{\netd}{CONV-CIFAR-WIDE }
\newcommand{\nete}{CONV-CIFAR-DEEP }

\newcommand{\ver}{\emph{Verified}}
\newcommand{\counterex}{\emph{Counterexample}}
\newcommand{\unknown}{\emph{Unknown}}

\newcommand{\activeList}{Active}
\newcommand{\unsolvedList}{Unsolved}

\begin{abstract}
Complete verification of deep neural networks (DNNs) can exactly determine whether the DNN satisfies a desired trustworthy property (e.g., robustness, fairness) on an infinite set of inputs or not. Despite the tremendous progress to improve the scalability of complete verifiers over the years on individual DNNs, they are inherently inefficient when a deployed DNN is updated to improve its inference speed or accuracy. The inefficiency is because the expensive verifier needs to be run from scratch on the updated DNN. To improve efficiency, we propose a new, general framework for incremental and complete DNN verification based on the design of novel theory, data structure, and algorithms. Our contributions implemented in a tool named \Tool{} yield an overall geometric mean speedup of \geomean{} for verifying challenging MNIST and CIFAR10 classifiers and a geometric mean speedup of \geomeanglobal{} for the ACAS-XU classifiers  over the state-of-the-art baselines. 
\end{abstract}

\maketitle

\vspace{-.1in}
\section{Introduction}
\label{sec:intro}

\par Deep neural networks (DNNs) are being increasingly deployed for safety-critical applications in many domains including autonomous driving \cite{bojarski2016end}, healthcare \cite{AMATO201347}, and aviation \cite{acasxu:18}. 
However, the black-box construction, vulnerability against adversarial changes to in-distribution
inputs~\cite{DBLP:journals/corr/SzegedyZSBEGF13,madry2017towards}, and fragility against out-of-distribution data~\cite{GokhaleAKTBY:21,chen2022robust} is the main hindrance to the trustworthy deployment of deep neural networks in real-world applications. 
Recent years have witnessed increasing work on developing verifiers for formally checking whether the behavior of DNNs (see \cite{https://doi.org/10.48550/arxiv.2104.02466, albarghouthi-book} for a survey) on an infinite set of inputs is trustworthy or not. For example, existing verifiers can formally prove~\cite{wang2018neurify, gehr2018ai2, bunel2020branch, bunel2020efficient, DBLP:conf/cav/BakTHJ20, ehlers2017formal} that the infinite number of images obtained after varying the intensity of pixels in an original image by a small amount will be classified correctly. Verification yields better insights into the trustworthiness of DNNs than standard test-set accuracy measurements, which only check DNN performance on a finite number of inputs. The insights can be used for selecting the most trustworthy DNN for deployment among a set of DNNs trained for the same task. 
%
Existing verifiers can be broadly classified as either complete or incomplete. Incomplete methods are more scalable but may fail to prove or disprove a trustworthiness property~\cite{gehr2018ai2,singh2018fast,singh2019abstract,singh2019beyond, zhang2018crown,Lirpa:20,DBLP:conf/nips/SalmanY0HZ19}. \new{A complete verifier always verifies the property if the property holds or otherwise returns a counterexample}. Complete verification methods are more desirable as they are guaranteed to provide an exact answer for the verification task~\cite{wang2018neurify, gehr2018ai2, bunel2020branch, bunel2020efficient, DBLP:conf/cav/BakTHJ20, ehlers2017formal, ferrari2022complete, fromherz2021fast, wang2021beta, depalma2021scaling, anderson2020strong, zhang2022general}.


\noindent \textbf{\bf Limitation of Existing Works:}
\new{
The deployed DNNs are modified for reasons such as approximation \cite{DBLP:journals/corr/abs-2103-13630, DBLP:conf/mlsys/BlalockOFG20}, fine-tuning \cite{tajbakhsh2016convolutional}, model repair \cite{SotoudehT19}, or transfer learning \cite{weiss2016survey}. 
Various approximations such as quantization, and pruning slightly perturb the DNN weights, and the updated DNN is used for the same task \cite{tf_quantization, DBLP:journals/corr/abs-2103-13630, 9586276}. Similarly, fine-tuning can also be performed to repair the network on buggy inputs while maintaining the accuracy on the original training inputs \cite{fu2022sound}.}
Each time a new DNN is created, expensive complete verification needs to be performed to check whether it is trustworthy. A fundamental limitation of all existing approaches for complete verification of DNNs is that the verifier needs to be run from scratch end-to-end every time the network is even slightly modified. As a result, developers still rely on test set accuracy as the main metric for measuring the quality of a trained network. This limitation of existing verifiers restricts their applicability as a tool for evaluating the trustworthiness of DNNs.

\noindent \textbf{This Work: Incremental and Complete Verification of DNNs:} In this work, we address the fundamental limitation of existing complete verifiers by presenting  \Tool{}, the first general technique for incremental and complete verification of DNNs. 
An original network and its updated network have similar behaviors on most of the inputs, therefore the proofs of property on these networks are also related. 
\Tool{} accelerates the complete verification of a trustworthy property on the updated network by leveraging the proof of the same property on the original network. \Tool{} can be built on top of any \new{Branch and Bound} (BaB) based method. \new{The BaB verifier recursively partitions the verification problem to gain precision. It is currently the dominant technology for constructing complete verifiers \cite{wang2018neurify, bunel2020branch, bunel2020efficient, DBLP:conf/cav/BakTHJ20, ehlers2017formal, ferrari2022complete, fromherz2021fast, wang2021beta, depalma2021scaling, Pailoor19, zhang2022general}.} 

\noindent \textbf{\bf Challenges:} 
The main challenge in building an incremental verifier on top of a non-incremental one is to determine which information to pass on and how to effectively reuse this information. Formal methods research has developed numerous techniques for incremental verification of programs, that reuse the proof from previous revisions for verifying the new revision of the program \cite{10.1145/2465449.2465456, Lachnech-etal:TACAS01, DBLP:conf/lics/OHearn18, DBLP:conf/pldi/0002CS21}.  However, often the program commits are local changes that affect only a small part of the big program. In contrast, most DNN updates result in weight perturbation across one or many layers of the network. This poses a different and more difficult challenge than incremental program verification. Additionally, DNN complete verifiers employ distinct heuristics for branching. A key challenge is to develop a generic method that incrementally verifies a network perturbed across multiple layers and is applicable to multiple complete verification methods, yet can provide significant performance benefits.

\noindent \textbf{Our Solution:} \Tool{} computes a specification tree -- a novel tree data structure representing the trace of BaB -- from the execution of the complete verifier on the original network. We design new algorithms to refine the specification tree to create a more compact tree. At a high level, the refinement involves reordering the branching decisions such that the decisions that worked well in the original verification are prioritized. Besides, it removes the branching decisions that worked poorly in the original verification by pruning nodes and edges in the specification tree. \Tool{} also improves the branching strategy in BaB for the updated network based on the observed effectiveness of branching choices when verifying the original DNN. The compact specification tree and the improved branching strategy guide the BaB execution on the updated network to faster verification, compared to non-incremental verification that starts from scratch. 
\Tool{} yields up to \upto{} speedup over the baseline based on state-of-the-art non-incremental verification techniques \cite{ijcai2021p351, bunel2020branch, singh2018fast}. It achieves a geometric mean speedup of \geomean{} across challenging fully-connected and convolutional networks over the baseline. \Tool{} is generic and can work with various common BaB branching strategies in the literature (input splitting, ReLU splitting). 

\noindent \textbf{\bf Main Contributions:}
The main contributions of this paper are:
\vspace{-0.07in}
\begin{itemize}
     \item We present a novel, general framework for incremental and complete DNN verification by designing new algorithms and data structure that allows us to succinctly encode influential branching strategies to perform efficient incremental verification of the updated network. 
    \item We identify a class of network modifications that can be efficiently verified by our framework by providing theoretical bounds on the amount of modifications. 
    \item  We implement our approach into a tool named \Tool{} and show its effectiveness over multiple state-of-the-art complete verification techniques, using distinct branching strategies (ReLU splitting and input splitting), in incrementally verifying both local and global properties of fully-connected and convolutional networks \new{with ReLU activations} trained on the popular ACAS-XU, MNIST, and CIFAR10 datasets. Our results show that for MNIST and CIFAR10 classifiers, using the ReLU splitting technique \cite{ijcai2021p351} \Tool{} yields a geometric mean speedup of \geomean{} over the state-of-the-art baseline \cite{bunel2020branch, ehlers2017formal}. For ACAS-XU, using the input splitting technique \Tool{} achieves a geometric mean speedup of \geomeanglobal{} over RefineZono~\cite{singh2019boosting}.
\end{itemize}
\vspace{-0.07in}

\noindent \Tool{} implementation is open-source, publicly available at {\color{blue}\url{https://github.com/uiuc-focal-lab/IVAN}}. An extended version of this paper containing all the proofs and additional experiments is available at {\color{blue}\url{https://arxiv.org/abs/2304.01874}}.

\section{Overview}
\begin{figure}[!t]
\centering
\includegraphics[width=13cm]{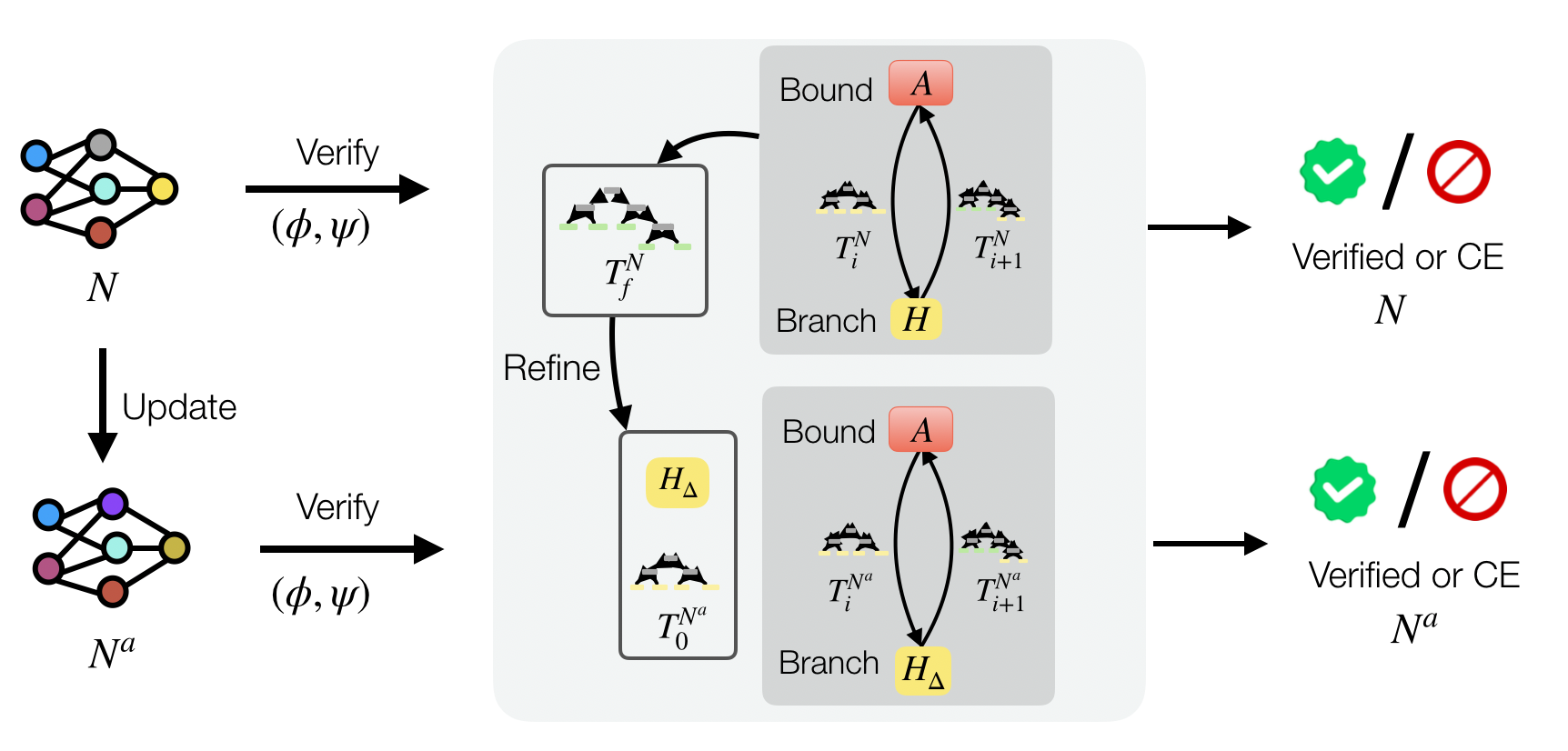}
\vspace{-.1in}
\caption{Workflow of \Tool{} from left to right. $\Tool{}$ takes the original network $N$, input specification $\phi$ and output specification $\psi$. It is built on top of a BaB-based complete verifier that utilizes an analyzer $\vbound$ for the bounding, and heuristic $\hbranch$ for branching. \Tool{} refines a specification tree $T^N_f$, result of verifying $N$, to create a compact tree $\Tinit$ and updated branching heuristic $\hbranch_\Delta$. \Tool{} performs faster verification of $\perturbedNetwork$ exploiting both $\Tinit$ and $\hbranch_\Delta$.} 
\label{fig:ivan}
\vspace{-.3in}
\end{figure}

Figure~\ref{fig:ivan} illustrates the high-level idea behind the workings of \Tool{}. It takes as input the original neural network $N$, the updated network $N^a$, a local or global input region $\phi$, and the output property $\psi$. The goal of \Tool{} is to check whether for all inputs in $\phi$, the outputs of networks $N$ and $N^a$ satisfy $\psi$. $N$ and $N^a$ have similar behaviors on the inputs in $\phi$, therefore the proofs of the property on these networks are also related. 
\Tool{} accelerates the complete verification of the property $(\phi, \psi)$ on $N^a$ by leveraging the proof of the same property on $N$.


{\noindent \bf Neural Network Verifier:} 
Popular verification properties considered in the literature have $\psi:=C^TY \geq 0$, where $C$ is a column vector and $Y = N(X),$ for $X \in \phi$. Most state-of-the-art complete verifiers use BaB to solve this problem. These techniques use an analyzer that computes the linear approximation of the network output $Y$ through a convex approximation of the problem domain. This linear approximation of $Y$ is used to perform the bounding step to show for the lower bound $\lb(C^TY)$ that $\lb(C^TY) \geq 0$. If the bounding step cannot prove the property, the verification problem is partitioned into subproblems using a branching heuristic $\hbranch$. The partitioning splits the problem space allowing a more precise convex approximation of the split subproblems. This leads to gains in the precision of $\lb$ computation. Various choices for the analyzer and the branching strategies exist which represent different trade-offs between precision and speed.

\Tool{} leverages a specification tree representation and novel algorithms to store and transfer the proof of the property from $N$ to $N^a$ for accelerating the verification on $N^a$. We show the workings of $\Tool{}$ through the following illustrative example.

\begin{figure}[t]
\centering
\includegraphics[width=13cm]{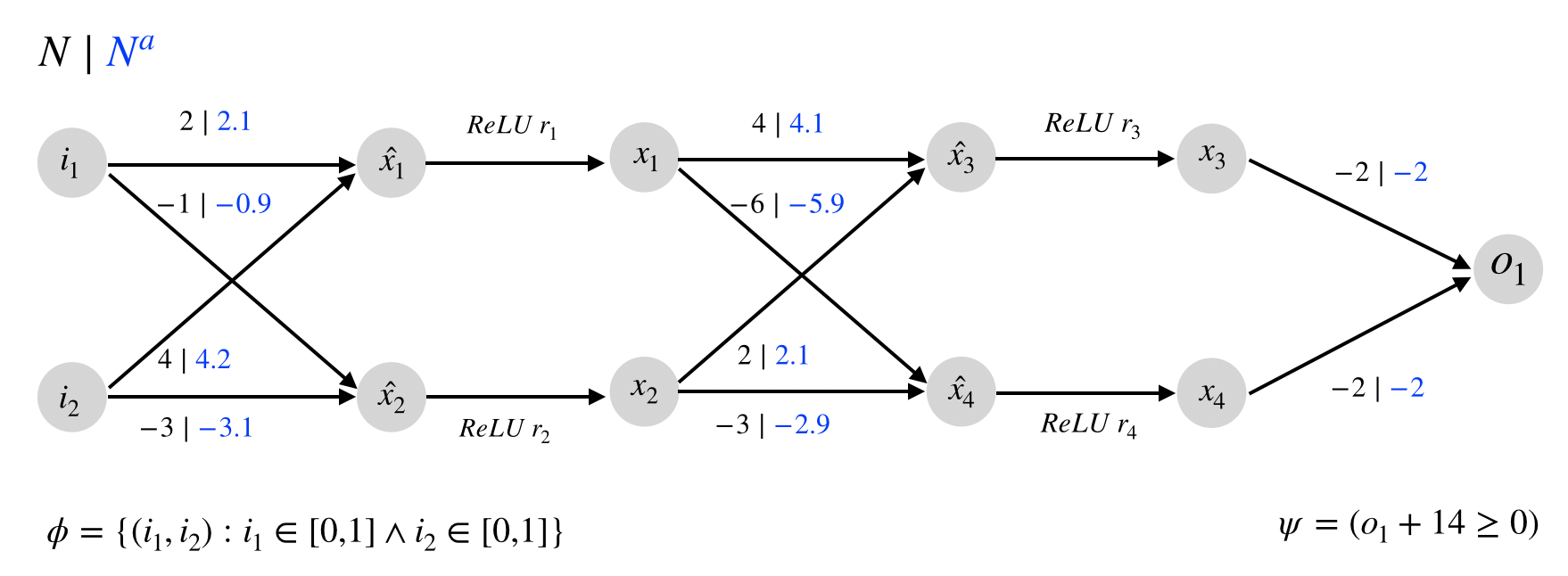}
\caption{Example original network $N$ and its perturbation $N^{a}$ (blue weights). Each layer consists of a linear function followed by the ReLU activation function. $\phi$ is the input specification and $\psi$ is the output specification.}
\label{fig:nn}
\vspace{-0.2in}
\end{figure}

\vspace{-0.07in}
\subsection{Illustrative Example}
We consider the two networks $N$ and $\perturbedNetwork$ with the same architecture as shown in Figure~\ref{fig:nn}. Most practical network updates result in network weight perturbations e.g., quantization, model repair, and fine-tuning. Network $\perturbedNetwork$ is obtained by updating (perturbing the weights) of network $N$. These networks apply ReLU activation at the end of each affine layer except for the final layer. The weights for the affine layers are shown on the edges. We consider the verification property $(\phi, \psi)$ such that $\phi = \{(i_1, i_2): i_1 \in [0, 1] \land i_2 \in [0, 1]\}$ and $\psi = (o_1 + 14 \geq 0)$. Let $\Reluset = \{r_1, r_2, r_3, r_4\}$ denote the set of ReLUs in the considered architecture. $\Reluset$ is a function of the architecture of the DNNs and is common for both $N$ and $\perturbedNetwork$. 

{\noindent \bf Branch and Bound:} 
We consider a complete verifier that uses a sound analyzer $\vbound$ based on the exact encoding of the affine layers and the common triangle linear relaxation \cite{bunel2020branch, bunel2020efficient, ehlers2017formal} for over-approximating the non-linear ReLU function. 
%
If due to over-approximation of the ReLU function, the analyzer cannot prove or disprove the property, the verifier partitions the problem by splitting the problem domain. The analyzer is more precise if it separately analyzes the split subproblems and merges the results. There are two main strategies for branching considered in the literature, input splitting \cite{anderson2020strong, wang2018neurify}, and ReLU splitting \cite{bunel2020branch, bunel2020efficient, ehlers2017formal, ferrari2022complete, depalma2021scaling}. We show \Tool{}'s effectiveness on both branching strategies in our evaluation (Section~\ref{sec:local}, Section~\ref{sec:acas}). However, for this discussion, we focus on ReLU splitting which is scalable \mbox{for the verification of high-dimensional inputs.}

{\noindent \bf ReLU splitting:}
 An unsolved problem is partitioned into two cases, where the cases assume the input $\hat{x}_i$ to ReLU unit $r_i$ satisfies the predicates $\hat{x}_i \geq 0$ and $\hat{x}_i < 0$ respectively. Splitting a ReLU $r_i$ eliminates the analyzer imprecision in the approximation of $r_i$. When we split all the ReLUs in $\Reluset$, the analyzer is exact. Nevertheless, splitting all $\Reluset$ is expensive as it requires $2^{|\Reluset|}$ analyzer bounding calls. The state-of-the-art techniques use the heuristic function $\hbranch$ to find the best ReLU to split at each step, leading to considerably scalable complete verification.
 
The branching function $\hbranch$ scores the ReLUs $\Reluset$ for branching at each unsolved problem to partition the problem. If $\Reluset' \subseteq \Reluset$ denotes the subset of ReLUs that are not split in the current subproblem, then the verifier computes $r = \argmax_{\Reluset'} \hbranch$ to choose the $r$ for the current split. $\hbranch$ is a function of the exact subproblem that it branches and hence depends on $\phi$, $\psi$, the network, and the branching assumptions made for the subproblem. However, for the purpose of this running example, we consider a simple constant branching heuristic $\hbranch$ that ranks $\hbranch(r_1) > \hbranch(r_3) > \hbranch(r_4) > \hbranch(r_2)$ independent of the subproblem and the network. This assumption is only for the illustration of our idea, we show in the evaluation (Section~\ref{sec:eval}) that \Tool{} can work with state-of-the-art branching heuristics \cite{ijcai2021p351, bunel2020branch}. 

\subsection{\Tool{} Algorithm}

{\noindent\bf Specification Tree:}
\Tool{} uses a rooted binary tree data structure to store the trace of splitting decisions during BaB execution. 
A specification split is a finer specification parameterized by the subset of ReLUs in $\Reluset$. 
The root node is associated with the specification $(\phi, \psi)$. 
All other nodes represent the specification splits obtained by splitting the problem domain recursively. Each internal node in the tree has two children, the result of the branching of the associated specification. 

The split decision can be represented as a predicate. For a ReLU $r_i$ with input $\hat{x}_i$, let $r_i^+ := (\hat{x}_i \geq 0)$ and $r_i^- := (\hat{x}_i < 0)$ denote the split decisions. A split of ReLU $r_i$ at node $\node$ creates two children nodes $n_l$ and $n_r$, each encoding the new specification splits.
Each edge in the specification tree represents the split decision made at the branching step. An edge connects an internal node with its child node, and we label it with the additional predicate that is assumed by the child subproblem. 
A split of ReLU $r$ at node $n$ adds nodes $n_l$ and $n_r$ that are connected with edges labeled with predicates $r_i^+$ and $r_i^-$ respectively. 
If $\spec{n} = (\phi', \psi)$ is the specification split at $\node$, then $\spec{\node_l} = (\phi' \land r^+, \psi)$ and $\spec{\node_r} = (\varphi' \land r^-, \psi)$.
The names of the nodes have no relation to the networks or the property, they are used for referencing a particular specification. However, the edges of the tree are tied to the network architecture through the labels. Although the specification tree is created as a trace of verification of a particular network $N$, it is only a function of the ReLU units in the architecture of $N$. This allows us to use the branching decisions in the specification tree for guiding the verification of any updated network $\perturbedNetwork$ that has the same architecture as $N$. 
We use $\lb_{N}(\node)$ to denote the lower bound $\lb(C^TY)$ obtained by the analyzer $\vbound$ on for the subproblem encoded by $\node$, on the network $N$. 

\begin{figure}[tbp]
\centering\vspace{-.05in}
\includegraphics[width=14cm]{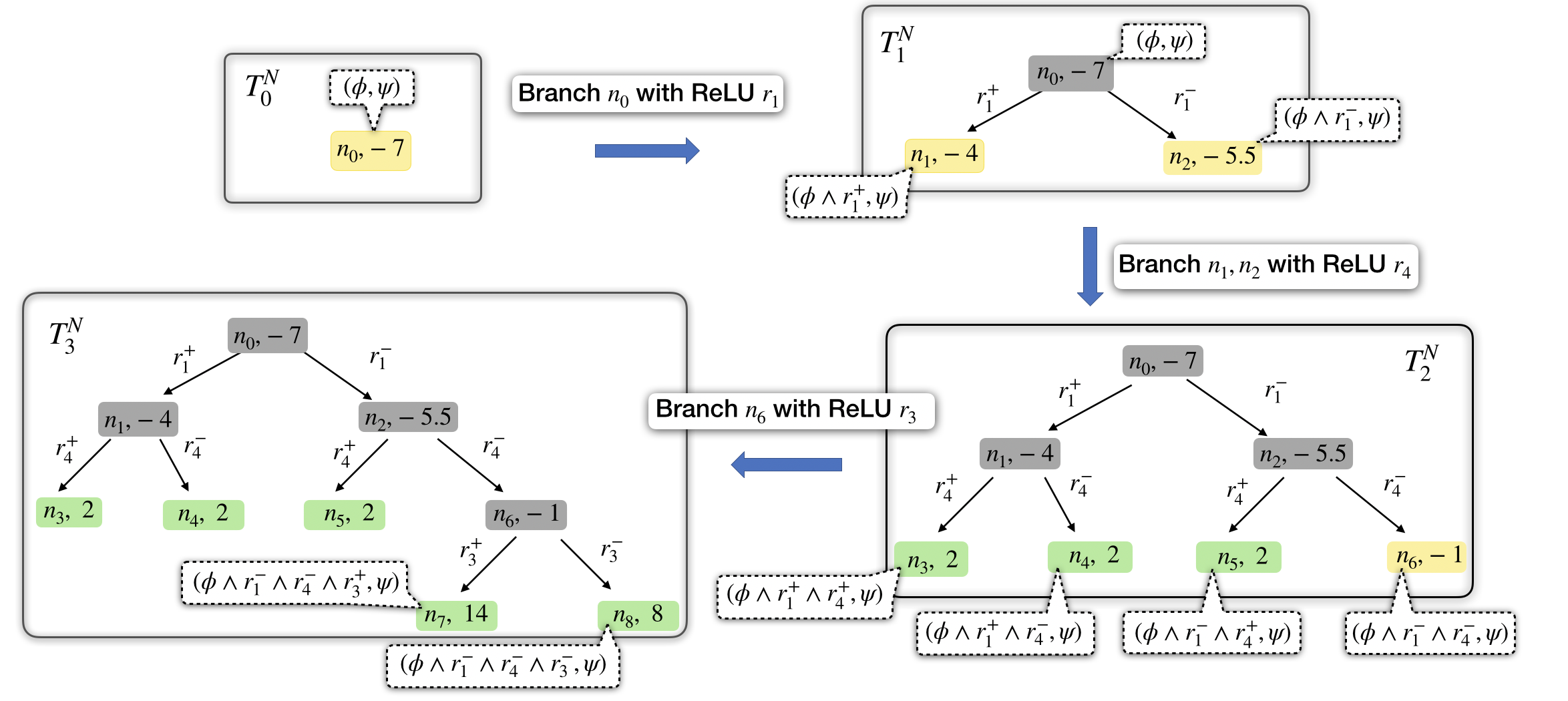}
\caption{Steps in Branch and Bound algorithm for complete verification of $N$. The nodes are labeled with a name and the $\lb_{N}(\node)$. The nodes in the specification tree are annotated with their specifications. The edges are labeled with the branching predicates. Each step in BaB partitions unsolved specifications in $T^{N}_i$ into specification splits in $T^{N}_{i+1}$. The proof is complete when all specification splits corresponding to the leaf nodes are solved.}
\label{fig:bab}
\vspace{-.15in}
\end{figure}

Figure~\ref{fig:bab} demonstrates the steps of BaB execution on $N$. Each node represents the specification refined by BaB. We use function $\lb_{N}(\node)$ to denote the $\lb(C^TY) = \lb(o_1 + 14)$ value obtained by the analyzer $\vbound$ at node $\node$. The specification is verified for the subproblem of $\node$ if the $\lb_{N}(\node) \geq 0$. 
If $\lb_{N}(\node) < 0$, the analyzer returns a counterexample~(CE). The CE is a point in the convex approximation of the problem domain and it may be possible that it is spurious, and does not belong to the concrete problem domain. If the CE is not spurious, the specification is disproved and the proof halts. But, if the CE is spurious then the problem is \mbox{unsolved, and it is further partitioned.}

In the first step, for the specification $(\phi, \psi)$ encoded by the root node $\node_0$, the analyzer computes $\lb_{N}(\node_0) = -7$, which is insufficient to prove the specification. Further, the CE provided by the analyzer is spurious, and thus the analyzer cannot solve the problem.  
The root node $n_0$ specification $(\phi, \psi)$ is split by ReLU split of $r_1$ chosen by the heuristic function $\hbranch$. Accordingly, in the specification tree, the node $n_0$ is split into two nodes $n_1$ and $n_2$, with the specification splits $(\phi \land r_1^+, \psi)$ and $(\phi \land r_1^-, \psi)$ respectively. This procedure of recursively splitting the problem and correspondingly updating the specification tree continues until either all the specifications of the leaf nodes are verified, or a CE is found. 
In the final specification tree ($T^{N}_3$ in this case), the leaf nodes are associated with the specifications that the analyzer could solve, and the internal nodes represent the specifications that the analyzer could not solve for network $N$. For BaB starting from scratch, each node in the specification tree maps to a specification that invoked an analyzer call in BaB execution. Figure~\ref{fig:bab} presents that the verifier successfully proves the property with a specification tree containing 9 nodes. Thus, the verification invokes the analyzer 9 times and performs 4 nodes \mbox{branchings for computing $\lb$.}

Figure~\ref{fig:pt1} presents the specification tree for $\perturbedNetwork$ at end of the verifying the property $(\phi, \psi)$. Although the $LB(C^TY)$ computed by the analyzer for each node specifications is different for $\perturbedNetwork$ compared to $N$, the final specification tree is identical for both networks. Our techniques in \Tool{} are motivated by our observation that the final specification tree for network $N$ and its updated version $N^a$ have structural similarities. Moreover, we find that for a DNN update that perturbs the network weight within a fixed bound, these trees are identical. We claim that there are two reasons for this: (i) the specifications that are solved by the analyzer for $N$ are solved by the analyzer for $\perturbedNetwork$ (specifications of the leaf nodes of the specification tree) and (ii) the specifications that are unsolved by the analyzer for $N$ are unsolved for $\perturbedNetwork$ (specifications of the internal nodes of the specification tree). In Section \ref{sec:perturb}, we provide theoretical bounds on the network perturbations such that these claims hold true (Theorem ~\ref{thm:perturb1}). Nevertheless, for networks obtained by perturbation beyond the theoretical bounds, the specification trees are still similar if not identical. In our evaluation, we observe this similarity for large networks with practical updates e.g., quantization (Section~\ref{sec:eval}). 


\begin{figure}[t]
\centering

\begin{subfigure}[b]{0.45\textwidth}
 \centering
 \includegraphics[width=\textwidth]{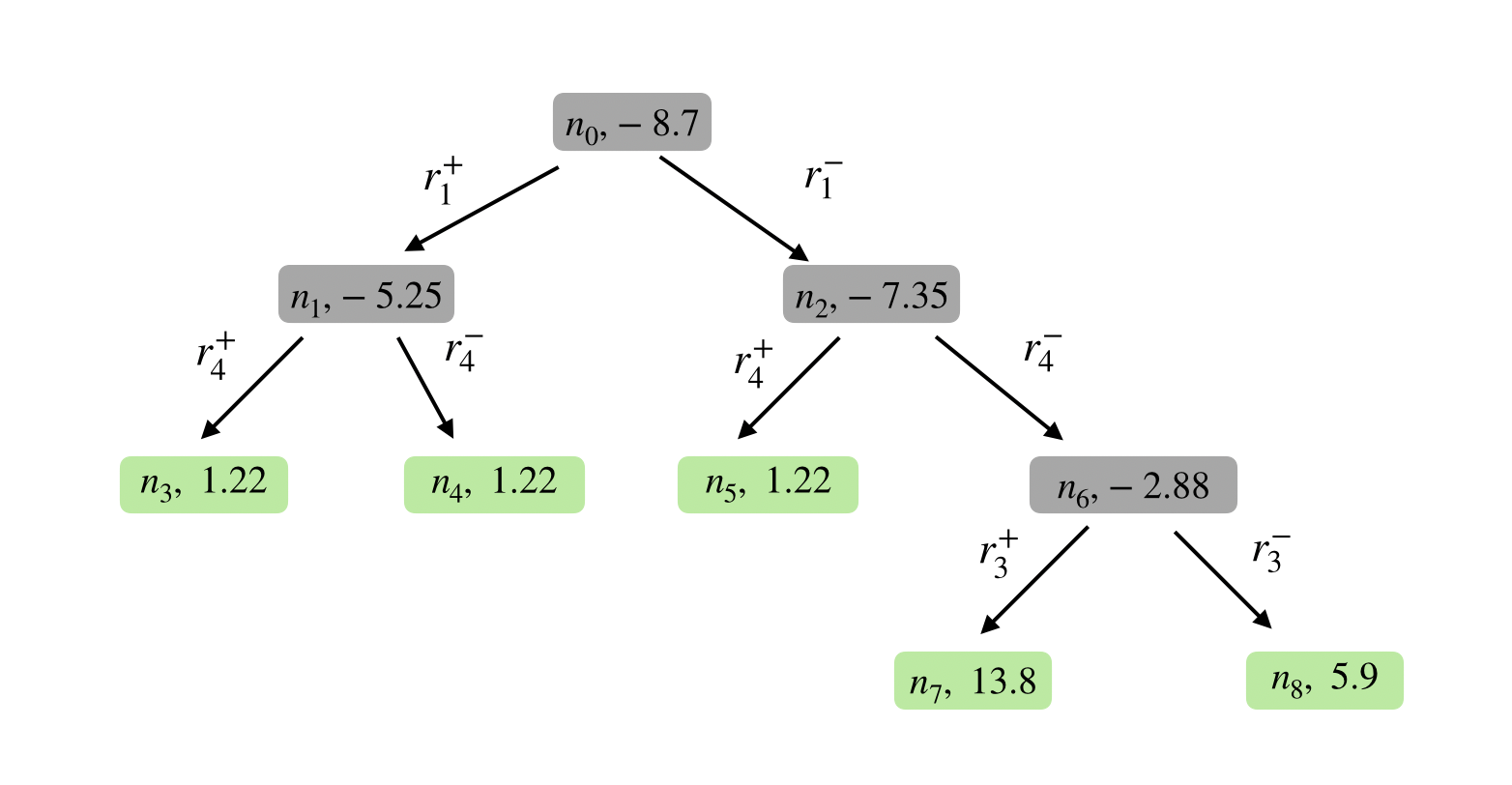}
 \caption{BaB specification tree for $\perturbedNetwork$. It requires 
 9 node boundings and 4 node branchings. }
 \label{fig:pt1}
\end{subfigure}
\hspace{10mm}
\begin{subfigure}[b]{0.45\textwidth}
 \centering
 \includegraphics[width=\textwidth]{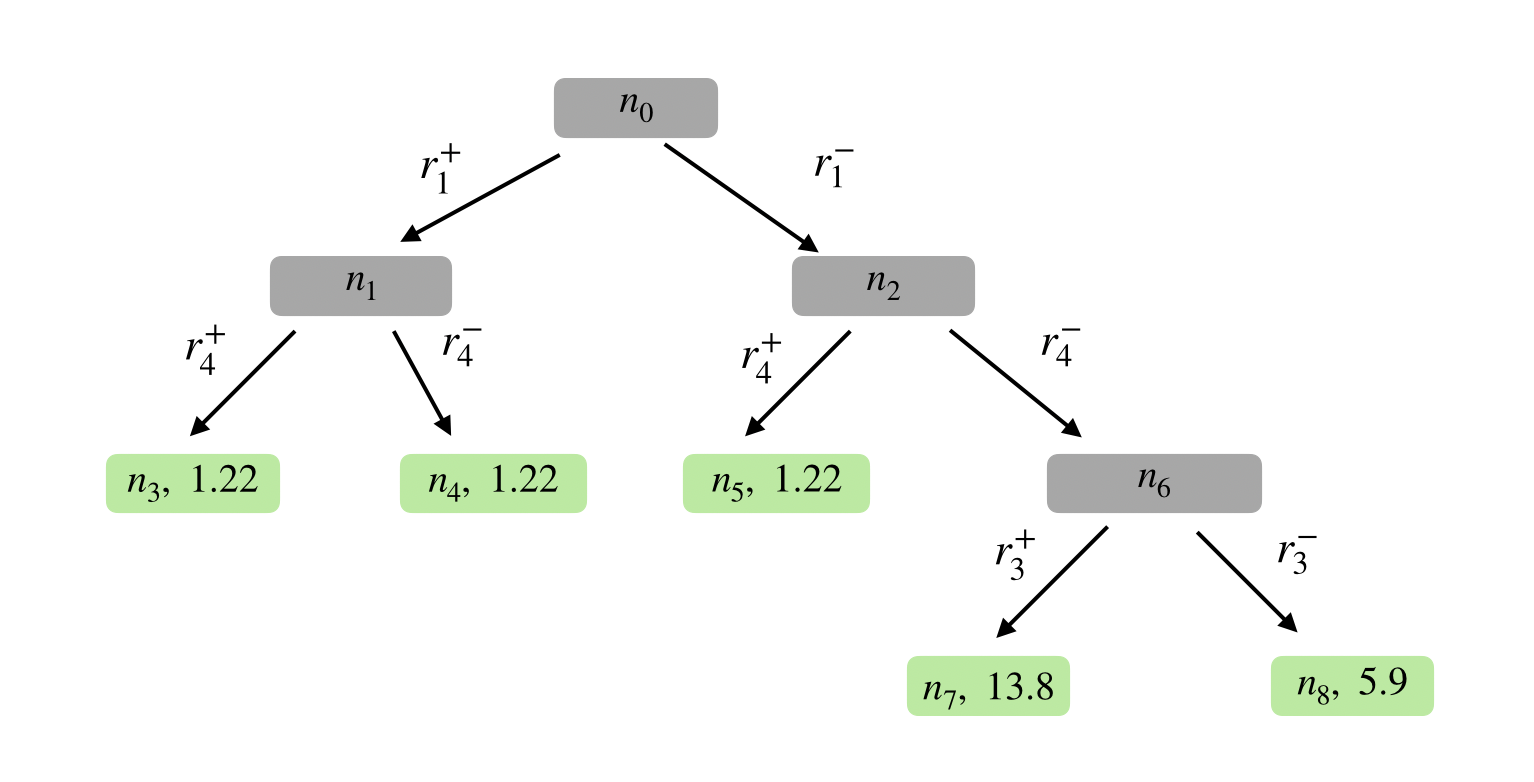}
 \caption{BaB specification tree for $\perturbedNetwork$ with reuse. It requires 
 5 node boundings and 0 node branchings.}
 \label{fig:pt2}
\end{subfigure}
\hspace{10mm}
\begin{subfigure}[b]{0.45\textwidth}
 \centering
 \includegraphics[width=\textwidth]{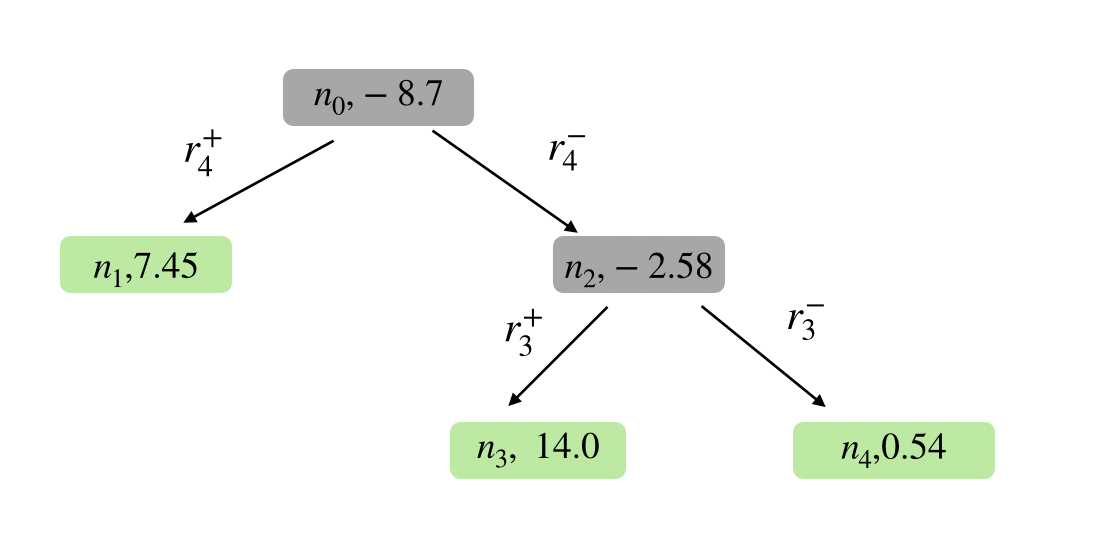}
 \caption{BaB specification tree for $\perturbedNetwork$ with reorder. It requires 
 5 node boundings and 2 node branchings.}
 \label{fig:pt3}
\end{subfigure}
\hspace{10mm}
\begin{subfigure}[b]{0.45\textwidth}
 \centering
 \includegraphics[width=\textwidth]{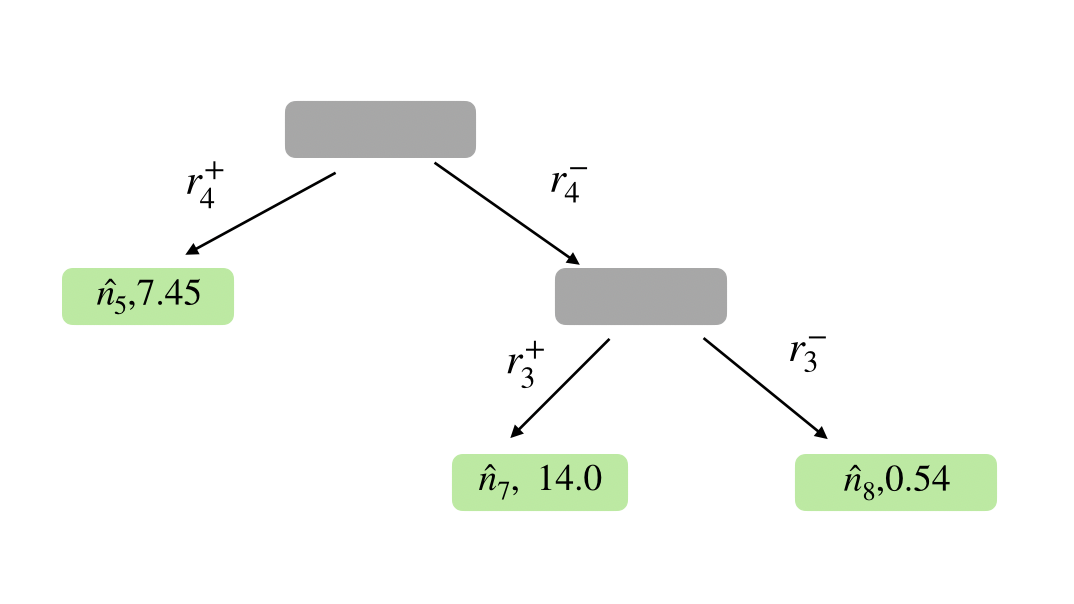}
 \caption{BaB specification tree for $\perturbedNetwork$ with \Tool{}. It requires 
 3 node boundings and 0 node branchings.}
 \label{fig:pt4}
\end{subfigure}
\hspace{10mm}

\vspace{-.2in}
\caption{BaB specification tree for various techniques proposed for incremental verification.}
\label{fig:scatter2}
\vspace{-.1in}
\end{figure} 

{\noindent \bf Reuse:} 
We first introduce our concept of specification tree reuse which uses $T^N_f$, the final tree after verifying $N$, as the starting tree $\Tinit$ for the verification of $\perturbedNetwork$. In contrast, the standard BaB verification starts with a single node tree that represents the unpartitioned initial specification $(\phi, \psi)$. In the reuse technique, \Tool{} starts BaB verification of $\perturbedNetwork$ from the leaves of $\Tinit= T^N_f$.
For our running example, analyzer $\vbound$ successfully verifies $\perturbedNetwork$ specifications for all the leaf nodes of the specification tree $\Tinit$ (Figure~\ref{fig:pt2}). We show that for any specification tree (created on the same network architecture), verifying the subproblem property on all the leaves of the specification tree is equivalent to verifying the main property $(\phi, \psi)$ (Lemma~\ref{lemma:invariance}). 
Verifying the property on $\perturbedNetwork$ from scratch requires 9 analyzer calls and 4 node branchings. However, with the reuse technique, we could prove the property with 5 analyzer calls corresponding to the leaves of $\Tinit$ and without any node branching. 
Theorem~\ref{thm:perturb1} guarantees that the specification of the leaf nodes should be verified on $\perturbedNetwork$ by the analyzer if the network perturbations are lower than a fixed bound. Although for larger perturbations, we may have to split leaves of $\Tinit$ further for complete verification, we empirically observe that the reuse technique is still effective to gain speedup \mbox{on most practical network perturbations.}

{\noindent \bf Reorder:}  
A split is more effective if it leads to fewer further subproblems that the verifier has to solve to prove the property. Finding the optimal split is expensive. Hence, the heuristic $\hbranch$ is used to estimate the effectiveness of a split, and to choose the split with the highest estimated effectiveness. Often the estimates are imprecise and lead to ineffective splits. We use $\lb_{N}(n)$ to give an approximation to quantifying the effectiveness of a split. We discuss this exact formulation of the observed effectiveness scores $\hbranch_{obs}$ in Section~\ref{sec:ivan}.
Our second concept in \Tool{} is based on our insight that if a particular branching decision is effective for verifying $N$ then it should be effective for verifying $\perturbedNetwork$. Likewise, if a particular branching decision is ineffective in the verification of $N$, it should be ineffective in verifying $\perturbedNetwork$. Based on this insight, we use the observed effectiveness score of splits in verifying $N$ to modify the original branching heuristic $\hbranch$ to an improved heuristic $\hbranch_\Delta$.  
$\hbranch_\Delta$ takes the weighted sum of original branching heuristic $\hbranch$ and observed effectiveness scores on $N$ denoted by $\hbranch_{obs}$. We formulate the effectiveness of a split and $\hbranch_\Delta$ in Section~\ref{sec:ivan}. For simplicity, in the running example, we rerank the ReLUs based on the observed effectiveness of the splits as $\hbranch_\Delta(r_4) > \hbranch_\Delta(r_3) > \hbranch_\Delta(r_2) > \hbranch_\Delta(r_1)$. Figure~\ref{fig:pt3} presents the specification tree for verifying $\perturbedNetwork$ with the updated branching heuristic $\hbranch_\Delta$ that requires 5 analyzer calls and 2 node branchings. Reorder technique starts from scratch with a different branching order $\hbranch_\Delta$ and it is incomparable in theory to the reuse technique. In Section~\ref{sec:ablation}, we observe \mbox{that reorder works better in most experiments.}

{\noindent \bf Bringing All Together:} Our main algorithm combines our novel concepts of specification tree reuse and reorder yielding larger speedups than possible with only reuse or reorder. Specification tree reuse and reorder are not completely orthogonal and thus combining them is not straightforward. 
Since in reuse we start verifying $\perturbedNetwork$ with the final specification tree $T^N_f$, the splits are already performed with the original order ($r_1, r_4, r_3, r_2$ in our example). Our augmented heuristic function $\hbranch_\Delta$ will have a limited effect if we reuse $\Tinit = T^N_f$, since the existing tree branches may already be sufficient to prove the property.

{\noindent \bf Constructing a Pruned Specification Tree:} It is difficult to predict the structure of the tree with augmented order. For instance, in our example, $N$ is verified with $r_1, r_4, r_3, r_2$ order and we have $T^N_f$ branched in that order. However, we cannot predict the final structure of the specification tree if branched with our augmented order $r_4, r_3, r_2, r_1$ without actually performing those splits from scratch (as it was done in Figure~\ref{fig:pt3}). 

\begin{figure}[t]
\vspace{-0.15in}
\centering
\includegraphics[width=12cm]{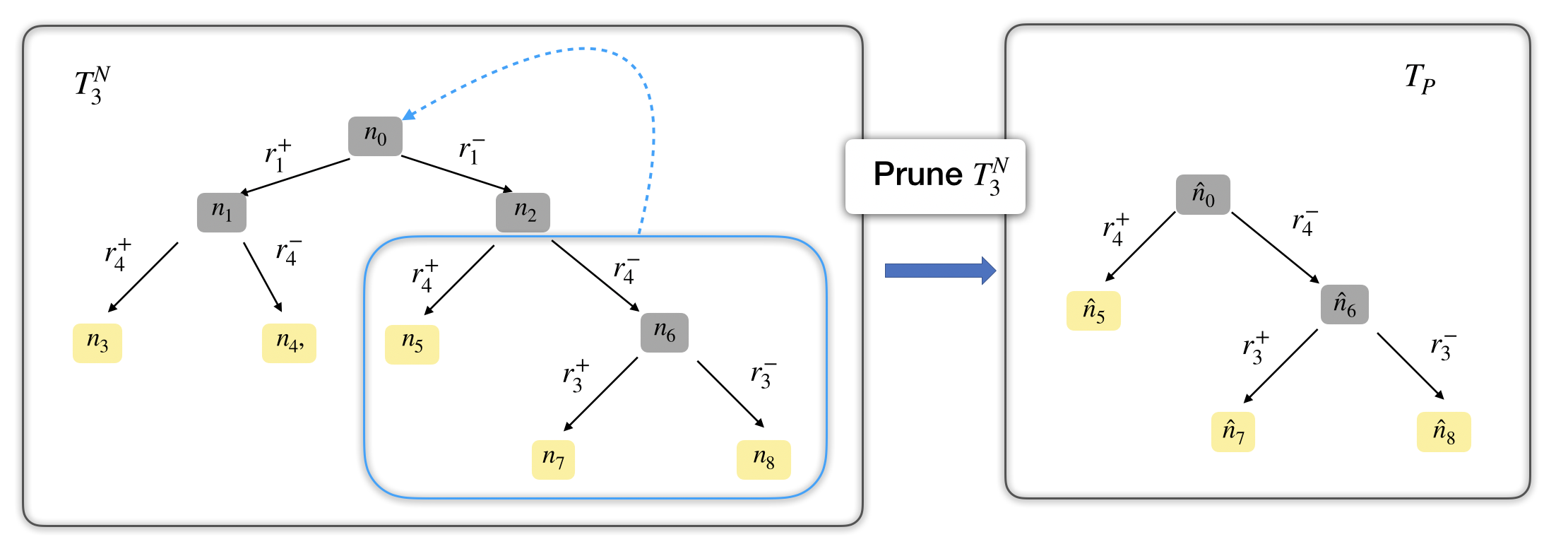}
\vspace{-0.1in}
\caption{\Tool{} removes the ineffective split $r_1$ at $n_0$ and construct a new specification tree $\Tprune$. }
\label{fig:pruneexample}
\vspace{-0.2in}
\end{figure}

We solve this problem with our novel pruning operation that removes ineffective splits from $T^N_f$ and constructs a new compact tree $\Tprune$. 
Figure~\ref{fig:pruneexample} shows the construction of pruned tree $\Tprune$ for our running example. We remove the split $r_1$ at $\node_0$ as it is less effective. Removing $r_1$ from $T^N_3$ also eliminates the nodes $\node_1$ and $\node_2$. The subtrees rooted at $\node_1$ and $\node_2$ are the result of split $r_1$. If we undo the split $r_1$ at node $\node_0$, then $\node_0$ should follow the branching decisions taken by one of its children. For this, we can choose either the subtree of $\node_0$ or $\node_1$, and attach it to $\node_0$. We describe the exact method of choosing which subtree to keep in Section~\ref{sec:ivan}. For this example, our approach chooses to keep the subtree of node $\node_2$ and eliminates the subtree at node $\node_1$. The pruning procedure leads to the discarding of entire subtrees creating a tree with fewer leaf nodes (leaf nodes $\node_3, \node_4$ are deleted in the example along with internal nodes $\node_1, \node_2$). Consequently, we obtain a more compact tree with only influential splits in the specification tree. 

We start the verification of $\perturbedNetwork$ from the leaf nodes of the pruned tree i.e. $\Tinit = \Tprune$. For our running example specification splits of all leaf nodes of $\Tprune$ are verified by the analyzer and no further splitting is needed. Figure~\ref{fig:pt4} presents the final specification tree in case we initialize the proof with the compact tree obtained from the \Tool{} algorithm. 
We show the time complexity of incremental verification in Section~\ref{sec:formulation}. 
For the running example, the incremental proof requires only 3 analyzer calls and no branching calls, and it is a significant reduction to the 9 analyzer calls and 5 node branchings performed by the baseline starting from scratch.

\section{Preliminaries}
\label{sec:pre}

In this section, we provide the necessary background on complete neural network verification. 
 
\vspace{-0.07in}
\subsection{Neural Network Verification}\label{sec:netverbg}
\noindent \textbf{Neural Networks} 
Neural networks are functions $\Norig :\mathbb{R}^{n_0} \to \mathbb{R}^{n_l}$. In this work, we focus on layered neural networks obtained by a sequential composition of $l$ layers $N_1:\mathbb{R}^{n_0} \to \mathbb{R}^{N_1}, \dots ,N_l:\mathbb{R}^{n_{l-1}} \to \mathbb{R}^{n_{l}}$. \new{Each layer $N_i$ applies an \emph{affine function} (convolution or linear function) followed by a non-linear activation function to its input. The choices for non-linear activation functions are ReLU, sigmoid, or tanh. $ReLU(x) = max(0, x)$ is most commonly used activation function}.
In Section~\ref{sec:main}, we focus on the most common BaB verifiers that partition the problems using ReLU splitting in ReLU networks. 
The $i$-th layer of each network $\OrgNetwork_{i}: \mathbb{R}^{\Dimension_i} \longrightarrow \mathbb{R}^{\Dimension_{i+1}}$ is defined as $\OrgNetwork_i(x) = \Relu(A_iX + B_i)$ where $i \in [\Layers]$.

At a high level, neural network verification involves proving that all network outputs corresponding to a chosen set of inputs satisfying the input specification $\phi$ satisfy a given logical property $\psi$. 
We first define the input and output specifications that we consider in this work:

\begin{definition}[Input specification]
For a neural network $\Norig :\mathbb{R}^{n_0} \to \mathbb{R}^{n_l}$, $\inpreg$ is a connected region and $\inpreg \subseteq \mathbb{R}^{n_0}$. \new{\textbf{Input specification} $\phi:\mathbb{R}^{n_0} \to \{true, false\}$ is a predicate over the input region $\inpreg$.} 
\end{definition}

\begin{definition}[output specification]
For a neural network with $n_l$ neurons in the output layer. \textbf{output specification} $\psi:\mathbb{R}^{n_l} \to \{true, false\}$ is a predicate over the output region. 
\end{definition}

The output property $\psi$ could be any logical statement taking a truth value true or false. In our paper, we focus on properties that can be expressed as Boolean expressions over linear forms. Most DNN verification works consider such properties. 

\vspace{-0.07in}
\begin{equation}
    \label{def:linearout}
    \psi(Y) = (C^T Y \geq 0) 
\end{equation}

We next define the verification problem solved by the verifiers: 

\begin{definition}[Verification Problem]
The \textbf{neural network verification} problem for a neural network $N$, an input specification $\phi$ and a logical property $\psi$ is to prove whether $\ \forall X \in \new{\inpreg}. \ \psi(N(X)) = \textit{true}$ or provide a counterexample otherwise.
\end{definition}

A complete verifier always verifies the property if it holds or returns a counterexample otherwise. Formally, it can be defined as:


\begin{definition} [Complete Verifier]
\label{def:sound}
A \textbf{complete verifier} $V$ for an input specification $\phi$, a neural network $\Norig$, an output property $\psi$ satisfies the following property:
\[
    V(\phi, \psi, N) = \ver  \Longleftrightarrow \forall X \in \new{\inpreg} . \psi(N(X)) = \textit{true}
\]
\end{definition}

\subsection{Branch and Bound for Verification}\label{sec:bab}
In this Section, we discuss the branch and bound techniques for complete verification of DNNs. The BaB approach in these techniques use a divide-and-conquer algorithm to compute the $\lb(C^TY)$ for proving $(C^T Y \geq 0)$ (Eq.~\ref{def:linearout}). We next discuss the bounding and branching steps in BaB techniques. 





{\noindent \bf Bounding:} 
The bounding step uses an analyzer to find a lower bound $\lb(C^TY)$. 
In complete verifiers, the analyzers are exact for linear functions (e.g., DeepZ \cite{singh2018fast}, DeepPoly \cite{singh2019abstract}).
However, they over-approximate the non-linear activation function through a convex over-approximation. 
We define these sound analyzers as:

\begin{definition} [Sound Analyzer]
\label{def:sound}
A \textbf{sound analyzer $\vbound$} on an input specification $\phi$, a DNN $\Norig$, an output property $\psi$ returns \ver, \unknown, or \counterex. It satisfies the following properties:
\begin{align*}
    & \vbound(\phi, \psi, N) = \ver  \implies \forall X \in \new{\inpreg} . \psi(N(X)) = \mathit{true} \\
    & \vbound(\phi, \psi, N) = \counterex  \implies \exists X \in \new{\inpreg} . \psi(N(X)) = \mathit{false}
\end{align*}
\end{definition}

{\noindent \bf Branching:} If the analyzer cannot prove a property, the BaB verifier partitions the problem into easier subproblems to improve analyzer precision.
\new{
Algorithm~\ref{alg:bab} presents the pseudocode for the BaB verification. 
The algorithm maintains a $\unsolvedList$ list of problems that are currently not proved or disproved. 
It initializes the list with the main verification problem.  
Line~\ref{line:bab_bound} performs the bounding step in the BaB algorithm using the analyzer $\vbound$. 
For simplicity, we abuse the notation and use $\vbound(prob, N)$ for denoting the analyzer output instead of $\vbound(\phi, \psi, N)$. Here, the $prob$ encapsulates the input and output specifications $\phi, \psi$. 
Line~\ref{line:bab_branch} partitions the unsolved problem into subproblems. 
The algorithm halts when either the $\vbound$ finds a counterexample on one of the subproblems or the list of unsolved problems is empty. 
}
There are two common branching strategies for BaB verification, input splitting and ReLU splitting, which we describe next. 

\new{
\begin{algorithm}
\small
\caption{\new{Branch and Bound}}\label{alg:bab}
\begin{algorithmic}[1]
\new{
\Function{BaB}{$N, problem$}
\State $\unsolvedList \gets [(problem)]$
\While{$\unsolvedList$ is not empty}
\For {$prob \in \unsolvedList$}
\State $status[prob] = \vbound(prob, N)$ \label{line:bab_bound} \Comment{Bounding step}
\EndFor

\For {$prob \in \unsolvedList$}

\If {$status[prob] =$ \ver}
\State $\unsolvedList.remove(prob)$ \label{line:bab_ver} \Comment{Remove verified subproblems}
\EndIf

\If {$status[prob] =$ \counterex} \label{line:bab_counter}
\State \textbf{return} \counterex \text{ } for $prob$ \Comment{Return if a counterexample is found}
\EndIf

\If {$status[prob] =$ \unknown} \label{line:bab_unknown}
\State $\unsolvedList.remove(prob)$
\State $[\text{subprob}_1, \text{subprob}_2] \gets \text{split}(\text{prob})$ \Comment{Branching step} \label{line:bab_branch}
\State $\unsolvedList.insert(\text{subprob}_1, \text{subprob}_2)$
\EndIf
\EndFor
\EndWhile
\State \textbf{return} $\text{\ver}$
\EndFunction
}
\end{algorithmic}
\end{algorithm}
}

{\noindent \bf Input Splitting:} In input splitting, the input region $\inpreg$ for verification is partitioned. The typical choice is to cut a selected input dimension in half while the rest of the dimensions are unchanged. The dimension to cut is decided by the branching strategy used. This technique is known to be $\delta$-complete for any activation function \cite{Pailoor19}, but does not scale for high-dimensional input space. In many computer vision tasks, the input is an image with 1000s of pixels. Thus, a high-dimensional perturbation region on such input cannot be \mbox{branched efficiently for fast verification.}

{\noindent \bf ReLU Splitting:} State-of-the-art techniques that focus on verifying DNNs with high-dimensional input and ReLU activation, use ReLU splitting. 
We denote a ReLU unit for $i$-th layer and $j$-th index as a function $x_{i, j} = \max(\hat{x}_{i, j}, 0)$, where $\hat{x}_{i, j}$ and $x_{i, j}$ are the pre-activation and post-activation values respectively. The analyzer computes lower bounds $lb$ and upper bounds $ub$ for each intermediate variable in the DNN. If $lb(\hat{x}_{i,j}) \geq 0$, then the ReLU unit simply acts as the identify  function $x_{i, j} = \hat{x}_{i, j}$. If $ub(\hat{x}_{i,j}) \leq 0$, then the ReLU unit operates as a constant function $x_{i, j} = 0$. In both of these cases, the ReLU unit is a linear function. However, if $lb(\hat{x}_{i,j}) < 0 < ub(\hat{x}_{i,j})$, we cannot linearize the ReLU function exactly. We call such ReLU units ambiguous ReLUs. In ReLU splitting, the unsolved problem is partitioned into two subproblems such that one subproblem assumes $\hat{x}_{i,j} < 0$ and the other assumes $\hat{x}_{i,j} \geq 0$. This partition allows us to linearize the ReLU unit in both subproblems leading to a boost in the overall precision of the analyzer. The heuristic used for selecting which ReLU to split significantly impacts the verifier speed.

\new{
{\noindent \bf BaB for Other Activation Functions:}
BaB-based verification can work with the most commonly used activation functions (tanh, sigmoid, leaky ReLU).
\begin{enumerate}
    \item For piecewise linear activation functions such as leaky ReLU, activation splitting approaches (e.g, ReLU splitting)  can be used for complete verification. 
    \item For other activation functions (tanh, sigmoid), BaB with activation splitting cannot yield complete verification but can be used to improve the precision of sound and incomplete verification \cite{muller2020neural, DBLP:journals/corr/abs-1709-09130}.
    \item Although input splitting is less efficient in the aforementioned cases for high dimensional DNN inputs, it can be applied with any activation function (tanh, sigmoid, ReLU, leaky ReLU). 
\end{enumerate}
}






\vspace{-0.07in}
\section{Incremental Verification}
\label{sec:main}
In this section, we describe our main technical contributions and the \Tool{} algorithm. We first formally define the specification tree structure used for incremental verification (Section~\ref{sec:prooftree}). Next, we formulate the problem of incremental verification (Section ~\ref{sec:formulation}). In Section~\ref{sec:ivan}, we illustrate the techniques used in our algorithm. We characterize the effectiveness of our technique by computing a class of networks for which our incremental verification is efficiently applicable in  Section~\ref{sec:perturb}. 

\vspace{-0.07in}

\subsection{Specification Tree for BaB} 
\label{sec:prooftree}
\Tool{} uses the specification tree to store the trace of splitting decisions that the BaB verifier makes on its execution. 
A specification tree can be used for any BaB branching method (e.g, input splitting), but without loss of generality, our discussion focuses on ReLU splitting. Let $\mathcal{N}$ denote the class of networks with the same architecture, and let $\mathcal{R}$ denote the set of ReLUs in this architecture. The specification tree captures the ReLU splitting decisions and the split specifications in the execution of BaB for a property $(\phi, \psi)$, where we define $(\phi,\psi):=\phi \to \psi$.

For a ReLU $r_i$ with input $\hat{x}_i$, let $r_i^+ := (\hat{x}_i \geq 0)$ and $r_i^- := (\hat{x}_i < 0)$. We define a split decision as:
\vspace{-0.07in}

\begin{definition} [Split Decision]
For a  ReLU $r \in \mathcal{R}$, a split decision is $r^? \in \{r^+, r^-\}$ where $r^?$ is assigned the predicate $r^+$ or $r^-$.
\end{definition}
\vspace{-0.07in}

A specification split of $(\phi, \psi)$ is a specification stronger than $(\phi, \psi)$ parameterized by the subset of ReLUs in $\Reluset$ and the corresponding split decisions. Formally,
\begin{definition} [Specification Split]
For a set of ReLUs $\Reluset' = \{r_1, r_2 \dots r_k\} \subseteq \Reluset$, and ReLU split decision $r_i^? \in \{r_i^-, r_i^+\}$ for each $r_i$, the corresponding specification split of $(\phi,\psi)$ is $(\phi \land r_{1}^? \land r_{2}^? \land \dots r_{k}^?, \psi)$. 
\end{definition}
Since $\emptyset \subseteq \Reluset$, $(\phi, \psi)$ is a split specification of itself.
Let $\mathcal{S}$ denote the set of specification splits that can be obtained from $(\phi, \psi)$.  
Each node $\node$ in the tree encodes a specification split in $\mathcal{S}$. 
Each edge in the specification tree is labeled with a ReLU split decision $r^?$.
Let $\nodes{T}$ denote the nodes of the tree $T$ and $\leaves{T}$ denote the leaves of the tree $T$.

\noindent \textbf{Mapping Nodes to Specification Splits:}
The specification associated with the root node is $(\phi, \psi)$.
The function $\children{\node}$ maps a node $\node$ to either the pair of its children or $\emptyset$ if $\node$ has no children.
If $\node_l$ and $\node_r$ are the children of node $\node$ and $\spec{n} = (\phi', \psi)$ is the specification split at $\node$, then $\spec{\node_l} = (\phi' \land r^+, \psi)$ and $\spec{\node_r} = (\varphi' \land r^-, \psi)$. For the specifications $\spec{n}, \spec{\node_l}, \spec{\node_r}$ the \mbox{following statement holds:}
\begin{equation}
\label{eq:split}
    (\spec{\node_l} \land \spec{\node_r}) \Longleftrightarrow  \spec{\node}
\end{equation}
This relationship implies that verifying the parent node specification is equivalent to verifying the two children node's specifications.
Formally, we can now define the specification tree as:
\begin{definition} [specification tree]
\label{def:proof_tree}
Given a set of ReLU $\mathcal{R}$, a rooted full binary tree $T$ is a~\textbf{specification tree}, if for a node $n \in \nodes{T}$, and nodes $n_l, n_r \in \children{n}$, edge $(n, n_l)$ is labeled with predicate $r^+$ and edge $(n, n_r)$ is labeled with predicate $r^-$, for $r \in \mathcal{R}$. 
\end{definition}
%



\begin{wrapfigure}{L}{0.5\textwidth}
\vspace{-0.4in}

\begin{minipage}{0.5\textwidth}
\small
\setstretch{1.1}
\begin{algorithm}[H]
    \small
    \caption{$\add$ operation}
    \label{alg:tree_operation1}
    \centering
    \begin{algorithmic}[1]
        \Function{\add}{$T, n, r$} \\
        \hspace*{\algorithmicindent} \textbf{Input:} Specification tree $T$, a leaf node $n \in \leaves{T}$, a ReLU $r \in \mathcal{R}$ for splitting the node\\
        \hspace*{\algorithmicindent} \textbf{Output:} returns newly added nodes
        \State $n_l \gets$ Add\_Child$(n, r^+)$
        \State $n_r \gets$ Add\_Child$(n, r^-)$
        \State \textbf{return} $n_l, n_r$
        \EndFunction
    \end{algorithmic}
\end{algorithm}
\vspace{-0.25in}
\end{minipage}
\vspace{-0.2in}

\end{wrapfigure}

 BaB uses a branching function $\hbranch$ for choosing the ReLU to split. We define this branching function in terms of the node $\node$ of the specification tree as:
\begin{definition} [Branching Heuristic]
\label{def:branch}
Given a set of ReLU $\mathcal{R}$, a network $N$, and a node $n$ in the specification tree, if $\mathcal{P} \subseteq \mathcal{R}$ denote the set of ReLUs split in the path from the root node of the specification tree to $n$ then the branching heuristic $\hbranch(N, n, r)$ computes a score $h \in \mathbb{R}$ estimating the effectiveness of ReLU $r \in \mathcal{R}/\mathcal{P}$ for splitting the specification ($\spec{n}$) of the node $n$.
\end{definition}


We next state the split operation on a specification tree. Algorithm~\ref{alg:tree_operation1} presents the steps in the split operation.

\noindent$\bullet$ \textit{Split Operation:} Every ReLU split adds two nodes to the specification tree at a given leaf node $n$. The BaB algorithm chooses the ReLU $\argmax_{r \in \Reluset/\mathcal{P}} \hbranch(N, \node, r)$ to split at node $\node$ using the heuristic function.




\vspace{-0.07in}
\subsection{Incremental Verification: Problem Formulation}
\label{sec:formulation}
Give a set of networks $\mathcal{N}$ with the same architecture with a set of ReLUs $\mathcal{R}$, $\treeset$ be the set of all specification trees defined over $\mathcal{R}$.
There exists a partial order $(<)$ on $\treeset$ through standard subgraph relation. 
BaB execution on a network $N \in \mathcal{N}$ traces a sequence of trees $T_0, T_1 \dots T_f \in \treeset$ such that $T_i < T_{i+1}$. 
It halts with the final tree $T_f$ when it either verifies the property or finds a counterexample. The construction of $T_{i+1}$ from $T_i$ depends on the branching function $\hbranch$ (Definition~\ref{def:branch}). 



\noindent {\bf Incremental Verification:} The incremental verification problem is to efficiently reuse the information from the execution of verification of network $N$ for the faster verification of its updated version $\perturbedNetwork$. 
Standard BaB for verification of $\perturbedNetwork$ starts with a single node tree while the incremental verifier starts with a tree $\Tinit \in \treeset$ that is not restricted to be a tree with a single node. 
We modify the final specification tree $T^N_f$ from the verification of $N$ to construct $\Tinit$.
%
The branching heuristic $\hbranch_\Delta$ for incremental verification is derived from the branching heuristic $\hbranch$ based on the efficacy of various branching decisions made during the proof for $N$. 
Formally, the complete incremental verifier we propose is defined as:
\begin{algorithm}[h]
\small
\setstretch{1.1}
\caption{Verifying Perturbed Network}
\label{alg:incver}
\begin{flushleft}
\textbf{Input:} $\perturbedNetwork$, property ($\phi$,$\psi$), Initial specification tree $\Tinit$, branching heuristic $\hbranch_{\Delta}$ \\
\textbf{Output:}
 \ver if the specification ($\phi$,$\psi$) is verified, otherwise a \counterex
\end{flushleft}

\begin{algorithmic}[1] 
\State $T^{\perturbedNetwork} \gets$ Initialize $T^{\perturbedNetwork}$ as $\Tinit$
\State $\activeList =$ $\leaves{\Tinit}$ \label{line:active} \Comment{Initialize active list as $\leaves{\Tinit}$}

\While{$\activeList$ is not empty} \label{line:loop} 
\For {$n \in \activeList$}
\State $status[n] \gets \vbound(n)$ \label{line:analyzer} \Comment{Bounding step}
\EndFor

\For {$n \in \activeList$}

\If {$status[n] =$ \ver}
\State $\activeList.remove(n)$ \label{line:ver} \Comment{Remove verified nodes}
\EndIf

\If {$status[n] =$ \counterex} \label{line:counter}
\State $\activeList.empty()$
\State \textbf{return} \counterex for $n$ \Comment{Return if a counterexample is found}
\EndIf

\If {$status[n] =$ \unknown} \label{line:unknown}
\State $\activeList.remove(n)$
\State $r_{chosen} \gets \argmax_{r \in \Reluset} \hbranch_{\Delta}(N, n, r)$ \Comment{Use $\hbranch_{\Delta}$ to choose the split ReLU}
\State $n_l, n_r \gets$ \add$(T^{\perturbedNetwork}, n, r_{chosen})$ \Comment{Branching step}
\State $\activeList.insert(n_l, n_r)$
\EndIf

\EndFor
\EndWhile

\State \textbf{return} \ver
\end{algorithmic}
\end{algorithm}
\vspace{-0.1in}

\begin{definition} [Complete and Incremental Verifier]
\label{def:incverifier}
A \textbf{Complete and Incremental Verifier} $V_{\Delta}$ takes a neural network $\perturbedNetwork$, an input specification $\phi$, an output property $\psi$, analyzer $\vbound$, the branching heuristic $\hbranch_{\Delta}$ and the initial tree $\Tinit$. $V_\Delta(\perturbedNetwork,  \phi, \psi, \Tinit, \hbranch_{\Delta})$ returns $\ver$ if $\perturbedNetwork$ satisfies the property $(\phi, \psi)$, otherwise, it returns a \counterex.
\end{definition}

Algorithm~\ref{alg:incver} presents the incremental verifier algorithm for verifying the perturbed network. It takes $\hbranch_{\Delta}$ and $\Tinit$ as input. 
It maintains a list of active nodes which are the nodes corresponding to the specifications that are yet to be checked by the analyzer. It initializes the list of active nodes with leaves of tree $\Tinit$ (line~\ref{line:active}). The main loop runs until the active list is empty (line~\ref{line:loop}) or it discovers a counterexample (line~\ref{line:counter}). At each iteration, it runs the analyzer on each node in the active list (line~\ref{line:analyzer}). The nodes that are $\ver$ are removed from the list (line~\ref{line:ver}), whereas the nodes that result in $\unknown$ are split. The new children are added to the active list (line~\ref{line:unknown}).

{\noindent \bf Optimal Incremental Verification:} 
We define the partial function $\Timeb: \treeset \times \treeset \rightharpoondown
 \mathbb{R}$, $\Timeb(\Tinit, T^{\perturbedNetwork}_{f})$ for a fixed complete incremental verifier $V_\Delta$ as the time taken by $V_\Delta$ that starts from $\Tinit$ and halts with the final tree $T^{\perturbedNetwork}_{f}$.  
 $\Time_{h}(\hbranch,\hbranch_{\Delta})$ and $\Time_{t}(T^{N}_f,\Tinit)$ are the time for constructing $\hbranch_{\Delta}$ from \hbranch,     and $\Tinit$ from $T^{N}_f$ respectively.
We pose the optimal incremental verification problem as an optimization problem of finding the best $\hbranch_\Delta, \Tinit$ such that the time of incremental verification is minimized. 
Formally, we state the problem as:
%
\begin{equation}
\label{eq:opt_inc}
    \argmin_{\hbranch_{\Delta}, \Tinit} \big[ \Timeb(\Tinit, T^{\perturbedNetwork}_f) + \Time_{h}(\hbranch,\hbranch_{\Delta}) + \Time_{t}(T^{N}_f,\Tinit) \big]
\end{equation}
The search space for $\Tinit$ is exponential in terms of $\Reluset$, and the search space for $\hbranch_{\Delta}$ is infinite. Further, $\Timeb$ is a complicated function of $\hbranch_{\Delta}, \Tinit$ that does not have a closed-form formulation. As a result, it is not possible to find an optimal solution. 

{\noindent \bf Simplifying Assumptions:}
To make the problem tractable we make a simplifying assumption that for all networks with the same architecture, each branching and bounding step on each invocation takes a constant time $\tbr$ and $\tbo$ respectively. 
We can now compute $\Timeb(\Tinit, T^{\perturbedNetwork}_f)$ as: 

\begin{restatable}{theorem}{timeinc}($\Timeb{}$ for incremental verification).
\label{thm:timeinc}
 If the incremenatl verifier $V_\Delta$ halts with the final tree $T^{\perturbedNetwork}_f$, then 
 $\Timeb(\Tinit, T^{\perturbedNetwork}_f) = (\tbo + \tbr) \cdot \Big(|\nodes{T^{\perturbedNetwork}_f}| + \frac{1-|\nodes{\Tinit}|}{2} \Big) - \tbr \cdot |\leaves{T^{\perturbedNetwork}_f}|$.  
\end{restatable}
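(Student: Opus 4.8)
The plan is to account for the total number of bounding and branching calls performed by Algorithm~\ref{alg:incver}, expressed in terms of the sizes of the initial tree $\Tinit$ and the final tree $T^{\perturbedNetwork}_f$. First I would observe that every bounding call in the algorithm (line~\ref{line:analyzer}) is in one-to-one correspondence with a node that gets added to the active list at some point: the active list is initialized with $\leaves{\Tinit}$, and every later insertion (line~\ref{line:unknown}) adds the two children created by a \add{} operation. Since \add{} is only ever applied to grow $T^{\perturbedNetwork}$ from $\Tinit$ toward $T^{\perturbedNetwork}_f$, the nodes ever placed on the active list are exactly $\leaves{\Tinit}$ together with $\nodes{T^{\perturbedNetwork}_f} \setminus \nodes{\Tinit}$ (the newly grown part). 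Each such node is bounded exactly once, so the number of bounding calls is $|\leaves{\Tinit}| + |\nodes{T^{\perturbedNetwork}_f}| - |\nodes{\Tinit}|$.

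Next I would count the branching calls. A branching call happens precisely when a node returns \unknown; these are exactly the nodes that end up \emph{internal} in $T^{\perturbedNetwork}_f$ but were \emph{not} already internal in $\Tinit$ — i.e. the internal nodes of $T^{\perturbedNetwork}_f$ that are either new, or were leaves of $\Tinit$ and got split. Writing $I(\cdot)$ for the number of internal nodes, the number of branching calls is $I(T^{\perturbedNetwork}_f) - I(\Tinit)$, because every node internal in $\Tinit$ was already split during the verification of $N$ and is not revisited, while every node internal in $T^{\perturbedNetwork}_f$ must have been branched at some stage. Then I would use the standard full-binary-tree identities: for a full binary tree $T$, $|\nodes{T}| = 2\,I(T) + 1$ and $|\leaves{T}| = I(T) + 1$, hence $I(T) = \tfrac{|\nodes{T}| - 1}{2}$ and $|\leaves{T}| = \tfrac{|\nodes{T}| + 1}{2}$. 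Substituting these into the two counts turns everything into expressions in $|\nodes{\Tinit}|$ and $|\nodes{T^{\perturbedNetwork}_f}|$ only.

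Finally I would assemble $\Timeb = \tbo \cdot (\text{\#bounding}) + \tbr \cdot (\text{\#branching})$ and simplify. The bounding count becomes $\tfrac{|\nodes{\Tinit}|+1}{2} + |\nodes{T^{\perturbedNetwork}_f}| - |\nodes{\Tinit}| = |\nodes{T^{\perturbedNetwork}_f}| + \tfrac{1 - |\nodes{\Tinit}|}{2}$, and the branching count becomes $\tfrac{|\nodes{T^{\perturbedNetwork}_f}|-1}{2} - \tfrac{|\nodes{\Tinit}|-1}{2} = \tfrac{|\nodes{T^{\perturbedNetwork}_f}| - |\nodes{\Tinit}|}{2}$. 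Noting that $\tfrac{|\nodes{T^{\perturbedNetwork}_f}| - |\nodes{\Tinit}|}{2} = \big(|\nodes{T^{\perturbedNetwork}_f}| + \tfrac{1-|\nodes{\Tinit}|}{2}\big) - \tfrac{|\nodes{T^{\perturbedNetwork}_f}|+1}{2} = \big(|\nodes{T^{\perturbedNetwork}_f}| + \tfrac{1-|\nodes{\Tinit}|}{2}\big) - |\leaves{T^{\perturbedNetwork}_f}|$, the branching count equals the bounding count minus $|\leaves{T^{\perturbedNetwork}_f}|$, and collecting the $(\tbo + \tbr)$ term gives exactly the claimed formula.

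The main obstacle I anticipate is not algebraic but bookkeeping: carefully justifying that the set of nodes ever bounded is exactly $\leaves{\Tinit} \cup (\nodes{T^{\perturbedNetwork}_f} \setminus \nodes{\Tinit})$ with no double-counting, and that the set of nodes ever branched is exactly $I(T^{\perturbedNetwork}_f) \setminus I(\Tinit)$. This requires arguing that (i) leaves of $\Tinit$ that are later split are bounded once and then branched once — consistent with both counts — and (ii) the algorithm never re-bounds or re-branches an already-processed node, which follows from the removal calls on lines~\ref{line:ver} and~\ref{line:unknown} and the fact that \add{} only attaches fresh children. Once that correspondence is nailed down, the full-binary-tree identities make the rest routine.
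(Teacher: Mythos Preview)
Your proposal is correct and follows essentially the same approach as the paper's proof: count bounding calls as $|\leaves{\Tinit}| + |\nodes{T^{\perturbedNetwork}_f}| - |\nodes{\Tinit}|$, count branching calls as the difference in the number of internal nodes $I(T^{\perturbedNetwork}_f) - I(\Tinit)$, and then simplify. The paper leaves the expression in the form $(\tbo + \tbr)\cdot(|\nodes{T^{\perturbedNetwork}_f}| - |\nodes{\Tinit}| + |\leaves{\Tinit}|) - \tbr\cdot|\leaves{T^{\perturbedNetwork}_f}|$ without explicitly invoking the full-binary-tree identity $|\leaves{T}| = \tfrac{|\nodes{T}|+1}{2}$, whereas you make that substitution to reach the statement's form directly; your extra bookkeeping justification for the two counts is also more detailed than the paper's terse argument, but the underlying idea is identical.
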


The proof of the theorem is in Appendix~\ref{sec:proofs}.

In this work, we focus on a class of algorithms for which the preprocessing times $\Time_{h}(\hbranch,\hbranch_{\Delta})$ and  $\Time_{t}(T^{N}_f,\Tinit)$ are $<< \Timeb(\Tinit, T^{\perturbedNetwork}_f)$. Furthermore, we also focus on branching heuristics used in practice where $\tbr << \tbo$. 
Equation~\ref{eq:opt_inc} simplifies to finding $\hbranch_{\Delta}$ and $\Tinit$ such that the following expression $\Timeb(\Tinit, T^{\perturbedNetwork}_f) = \tbo \cdot \Big(|\nodes{T^{\perturbedNetwork}_f}| + \frac{1-|\nodes{\Tinit}|}{2}\Big)$
is minimized. Rewriting and ignoring the constant term we get 
\begin{align}
\label{eq:simp}
    \Timeb(\Tinit, T^{\perturbedNetwork}_f) =\tbo \cdot \Bigg(\frac{|\nodes{T^{\perturbedNetwork}_f}|-|\nodes{\Tinit}|}{2} + \frac{|\nodes{T^{\perturbedNetwork}_f}|}{2} \Bigg)
\end{align}






\subsection{\Tool{} Algorithm for Incremental Verification}
\label{sec:ivan}
We describe the novel components of our algorithm and present the full workflow in Algorithm~\ref{alg:algorithm_main}. 
Our first technique called reuse focuses on minimizing $|\nodes{T^{\perturbedNetwork}_f}| - |\nodes{\Tinit}|$ in Equation~\ref{eq:simp}.
Our second reorder technique focuses on minimizing $|\nodes{T^{\perturbedNetwork}_f}|$. 
The $\hbranch_{\Delta}, \Tinit$ obtained by reuse and reorder are distinct. \Tool{} algorithm combines these distinct solutions, to reduce $\Timeb(\Tinit, T^{\perturbedNetwork}_f)$. 

\noindent{\bf Reuse:} This technique is based on the observation that the BaB specification trees should be similar for small perturbations in the network. Accordingly, in the method, we use the final specification tree for $N$ as the initial tree for the verification of $\perturbedNetwork$ i.e. $\Tinit = T^{N}_{f}$, and keep the $\hbranch_{\Delta} = \hbranch$ unchanged.  
We formally characterize a set of networks obtained by small perturbation for which $\Tinit = T^{N}_{f}$ is sufficient for verifying $\perturbedNetwork$ without any further splitting in Section~\ref{sec:perturb}. 

\noindent{\bf Reorder:} 
Reorder technique focuses on improving the branching heuristic $\hbranch$ such that it reduces $|\nodes{T^{\perturbedNetwork}_f}|$, and $\Tinit$ is single node tree with $\node_0$ encoding the specification $(\phi, \psi)$. 
If we start $\Tinit=T^{N}_{f}$, $|\nodes{T^{\perturbedNetwork}_f}|$ is at least $|\nodes{T^{N}_{f}}|$, and thus, we start $\Tinit$ from scratch allowing the technique to minimize $|\nodes{T^{\perturbedNetwork}_f}|$. 
We create a branching function $\hbranch_\Delta$ from $\hbranch$ with the following two changes. (i) The splits that worked effectively for the verification of the $N$ should be prioritized. (ii) The splits that were not effective should be deprioritized.
To formalize the effectiveness of splits, we define the $\lb_{N}(n)$ as the lower bound computed by the analyzer $\vbound$ on the network $N$ for proving the property $\spec{n}$ encoded by the node $n$.
Further, using the function $\lb_{N}$ we define an improvement function $I_N$ represents the effectiveness of a ReLU split at a specific node as: 
\begin{equation}
\label{eq:improve}
    I_N(n, r) = \min(\lb_{N}(n_r)-\lb_{N}(n), \lb_{N}(n_l)-\lb_{N}(n)) 
\end{equation}
where $n_l, n_r \in \children{\node}$ in the specification tree $T^N_f$.
We use $I_N$ to define the observed effectiveness $\hbranch_{obs}(r)$ from a split $r$ on the entire specification tree for $N$.
It is defined as the mean of the improvement over each node where split $r$ was made. 
Let $Q \subset \nodes{T^N_f}$ denote a set of nodes where split $r$ was made. Then, 
\begin{equation}
    \hbranch_{obs}(r) = \frac{\sum_{n \in Q}  I_N(n, r)}{|Q|}.
\end{equation}

Using the $\hbranch_{obs}(r)$ score we update the existing branching function as:
\begin{equation}
\label{eq:heuristic}
    \hbranch_{\Delta} (n, r) = \hratio \cdot \hbranch(n, r) + (1-\hratio) \cdot ( \hbranch_{obs}(r) - \threshold ).
\end{equation}
Here, we introduce two hyperparameters $\hratio$ and $\threshold$. The hyperparameter $\hratio \in [0, 1]$ controls the importance given to the actual heuristic score and the observed improvement from the verification on $N$.
If $\hratio = 1$, then $\hbranch_{\Delta}$ depends only on the original branching heuristic score. 
If $\hratio = 0$, then it fully relies on observed split scores. 
The hyperparameter $\threshold$ ensures that our score positively changes score for $r$ that have $\hbranch_{obs}(r) > \threshold$ and negatively change scores for $\hbranch_{obs}(r) < \threshold$.

\noindent{\textbf{Constructing a Pruned Specification Tree:}} 
%
%
The two reordering goals of prioritizing and deprioritizing effective and ineffective splits are difficult to combine with reuse. 
However, instead of starting from scratch, we can construct a specification tree $\Tprune$ from $T^N_f$ excluding the ineffective splits. 
For $\node \in \nodes{T^N_f}$, where ReLU $r$ splits $\node$, we denote the set of bad splits as the set $\mathcal{B}(T^N_f)$ of the pairs $(n, r)$ such that the improvement score $I_N(n, r) \leq \threshold$. 
For $(n, r) \in \mathcal{B}(T^N_f)$ while constructing the pruned tree our algorithm chooses a child $n_k$ of $n$. If a ReLU $r_k$ is split at $n_k$ in $T^N_f$, it performs a split $r_k$ in the corresponding node in $\Tprune$, and skips over the bad split $r$. 
The subtree corresponding to the other child $n_{k'}$ is eliminated and not added to our pruned tree. 
We choose $n_k$ such that:
\begin{equation}
    \label{eq:chosen}
    n_k = \argmin_{n_u \in \children{n}} \lb_{N}(n_u) - \lb_{N}(n)
\end{equation}
We choose such $n_k$ over $n_{k'}$ since $\lb_{N}(n)$ is closer to $\lb_{N}(n_k)$ than $\lb_{N}(n_{k'})$. Further, combining Equation ~\ref{eq:improve}~and~\ref{eq:chosen}, we can show $(\lb_{N}(n_k) - \lb_{N}(n)) < \threshold$, i.e. their difference is bounded.  
We anticipate that on the omission of the split $r$, the subtree corresponding to $n_k$ is a better match to the necessary branching decisions following $n$ than $n_{k'}$.

Algorithm~\ref{alg:prune} presents the top-down construction of $\Tprune$. The algorithm starts from the root of $T^N_f$ and recursively traverses through the children constructing $\Tprune$. 
It maintains a queue $\queue$ of nodes yet to be explored and a map $\nodeMap$ that maps nodes from the tree $T^N_f$ to the corresponding new nodes in $\Tprune$. 
At a node $n$, if $(n, r)$ is not a bad split, it performs the split $r$ at the corresponding mapping $\hat{n}$. Otherwise, if $r_k$ is the split at $n_k$, it skips over $r$ and performs the split of $r_k$ at $\hat{n}$. The newly created children from a split of $\hat{n}$ are associated with children of $n_k$ using $\nodeMap$. The children of $n_k$ are added in the $\queue$ and they are recursively processed in the next iteration for further constructing $\Tprune$.  
$\Tprune$ is still a specification tree satisfying the Definition~\ref{sec:prooftree} by construction. The specifications $\spec{n}$ of a node $n$ in $\Tprune$ can be constructed using a path from the root to $n$. 
%

\begin{minipage}{0.57\textwidth}
\begin{algorithm}[H]
\small
\setstretch{1.1}
\caption{Creating a Pruned Tree}
\label{alg:prune}
\begin{flushleft}
\textbf{Input:} specification tree $T^{\perturbedNetwork}_f$, hyperparameter $\threshold$ \\
\textbf{Output:}
 Pruned tree $\Tprune$
\end{flushleft}
\begin{algorithmic}[1] 
\State $n_{\textit{root}} \gets $ root of $T^{\perturbedNetwork}_f$, $\hat{n}_{\textit{root}} \gets$ copy of $n_{\textit{root}} $
\State $\Tprune \gets$ Initialize a new tree with $\hat{n}_{\textit{root}}$
\State $\queue \gets$ Initialize list with $n_{\textit{root}}$ \State $\nodeMap \gets$  Initialize an empty map
\label{line:init} 
\State $\nodeMap[n_{\textit{root}}] \gets \hat{n}_{\textit{root}}$

\While{$\queue$ is not empty} \label{line:loop2} 
\State $n \gets \queue.\textit{pop()}$; $r \gets $ split at node $n$; $\hat{n} \gets \nodeMap[n]$ 
\If {$I_N(n, r) < \threshold$}

\State $n_k \gets \argmin_{n_k \in \children{n}} \lb_{N}(n_k) - \lb_{N}(n)$
\State $r_k \gets $ split at node $n_k$
\State $n_l, n_r \gets n_k.\textit{children}$; $\hat{n}_l, \hat{n}_r \gets$ \add$(\Tprune, \hat{n}, r_{k})$  
\State $\nodeMap[n_l] \gets \hat{n}_l; \nodeMap[n_r] \gets \hat{n}_r$
\State $\queue.\textit{push}(n_l); \queue.\textit{push}(n_r)$ 
\Else 
\State $n_l, n_r \gets n.\textit{children}$; $\hat{n}_l, \hat{n}_r \gets$ \add$(\Tprune, \hat{n}, r)$  
\State $\nodeMap[n_l] \gets \hat{n}_l;\nodeMap[n_r] \gets \hat{n}_r$
\State $\queue.\textit{push}(n_l); \queue.\textit{push}(n_r)$  
\EndIf
\EndWhile
\State \textbf{return} $\Tprune$

\end{algorithmic}
\end{algorithm}


\end{minipage}
\hfill
\begin{minipage}{0.40\textwidth}

\begin{algorithm}[H]
\small
\setstretch{1.1}
\caption{Incremental Verification Algorithm}
\label{alg:algorithm_main}
\begin{flushleft}
\textbf{Input:} Original network $N$, \\
Perturbed network $\perturbedNetwork$, \\
property ($\phi$, $\psi$), \\
analyzer $\vbound$, \\
branching heuristic $\hbranch$,\\
hyperparameters \\
$\alpha$ and $\threshold$,\\
incremental verifier $V_\Delta$\\
\textbf{Output:}
 Verification result for $N$ and $\perturbedNetwork$
\end{flushleft}

\begin{algorithmic}[1] 
\State $\mathit{resultN}$, $\T{N}{f} \gets$ $V(N, \phi, \psi, \hbranch)$
\State $\Tinit \gets$ PrunedTree$(T^{N}_f, \threshold)$ \label{line:prune}
\State $\hbranch_{\Delta} \gets$  UpdateH$(\hbranch, \T{N}{f}, \threshold, \alpha)$ \label{line:updateh}
\State $\mathit{resultN}^a \gets$ $V_\Delta(\perturbedNetwork,  \phi, \psi, \Tinit, \hbranch_{\Delta})$  \label{line:verinc} \Comment{Incremental verification step calls Algorithm~\ref{alg:incver}}
\State \textbf{return} $\mathit{resultN}, \mathit{resultN}^a$
\end{algorithmic}
\end{algorithm}

\end{minipage}

\noindent{\bf Main algorithm:} Algorithm~\ref{alg:algorithm_main} presents \Tool's main algorithm for incremental verification that combines all the aforementioned techniques. It takes as inputs the original network $N$, a perturbed network $\perturbedNetwork$, input specification $\phi$, and an output property $\psi$. It prunes the final tree $T^N_f$ obtained in the verification of $N$ and constructs $\Tinit$ (line~\ref{line:prune}). It computes the updated branching heuristic $\hbranch_{\Delta}$ using Equation~\ref{eq:heuristic} (line~\ref{line:updateh}). It uses $\Tinit$ and $\hbranch_{\Delta}$ for performing fast incremental verification of networks $\perturbedNetwork$ (line~\ref{line:verinc}).

We next state the following lemma that states - verifying the property $(\phi, \psi)$ is equivalent to verifying the specifications for all the leaves.

\begin{restatable}{lemma}{invariance}
\label{lemma:invariance} 
The specifications encoded by the leaf nodes of a specification tree $T$ maintain the following invariance. 
\[
    \Bigg( \bigwedge_{n \in leaves(T)} \spec{n} \Bigg) \Longleftrightarrow  (\phi \to \psi)
\]
\end{restatable}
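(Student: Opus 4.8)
The plan is to prove the biconditional by induction on the structure of the specification tree $T$, using the local splitting identity in Equation~\ref{eq:split} as the inductive step. First I would set up the induction on the number of internal (non-leaf) nodes of $T$, or equivalently on the height of $T$. The statement to be proved for an arbitrary subtree rooted at a node $n$ is the strengthened claim
\[
    \Bigg( \bigwedge_{m \in \leaves{T_n}} \spec{m} \Bigg) \Longleftrightarrow \spec{n},
\]
where $T_n$ denotes the subtree of $T$ rooted at $n$ and $\leaves{T_n}$ its leaves. The original lemma then follows by taking $n = \nroot$, since $\spec{\nroot} = (\phi,\psi) = (\phi \to \psi)$ by the convention fixed in Section~\ref{sec:prooftree} and $\leaves{T_{\nroot}} = \leaves{T}$.

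For the base case, if $n$ is itself a leaf then $\leaves{T_n} = \{n\}$, so the left-hand conjunction is just $\spec{n}$ and the biconditional is trivial. For the inductive step, suppose $n$ has children $\node_l, \node_r \in \children{n}$ obtained by splitting some ReLU $r \in \mathcal{R}$. By Equation~\ref{eq:split} we have $(\spec{\node_l} \land \spec{\node_r}) \Longleftrightarrow \spec{n}$. The leaves of $T_n$ partition into the leaves of $T_{\node_l}$ and the leaves of $T_{\node_r}$ (a full binary tree, so this is a genuine disjoint partition), hence
\[
    \bigwedge_{m \in \leaves{T_n}} \spec{m} \;\; \Longleftrightarrow \;\; \Bigg(\bigwedge_{m \in \leaves{T_{\node_l}}} \spec{m}\Bigg) \land \Bigg(\bigwedge_{m \in \leaves{T_{\node_r}}} \spec{m}\Bigg).
\]
Applying the induction hypothesis to each of the two (strictly smaller) subtrees rewrites the right-hand side as $\spec{\node_l} \land \spec{\node_r}$, which is equivalent to $\spec{n}$ by Equation~\ref{eq:split}. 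Chaining the equivalences closes the induction.

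The only real subtlety — and the step I would state most carefully — is justifying Equation~\ref{eq:split} itself at each internal node, i.e. that $\spec{\node} \Longleftrightarrow (\spec{\node_l} \land \spec{\node_r})$ when $\node_l,\node_r$ come from splitting ReLU $r$. Writing $\spec{n} = (\phi' \to \psi)$, $\spec{\node_l} = (\phi' \land r^+ \to \psi)$, $\spec{\node_r} = (\phi' \land r^- \to \psi)$, this reduces to the propositional tautology that $(\phi' \to \psi)$ is equivalent to $((\phi' \land r^+ \to \psi) \land (\phi' \land r^- \to \psi))$, which holds precisely because $r^+ \lor r^-$ is valid (every pre-activation value $\hat{x}_i$ satisfies $\hat{x}_i \geq 0$ or $\hat{x}_i < 0$) — the two split predicates are exhaustive. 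I would note that this is exactly the soundness property of ReLU splitting already relied on throughout Section~\ref{sec:bab}, so it may simply be cited rather than re-derived; the paper in fact asserts Equation~\ref{eq:split} as given, so the induction is the genuine content of the proof and presents no obstacle beyond bookkeeping the leaf partition of a full binary tree.
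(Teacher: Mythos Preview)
Your proposal is correct and takes essentially the same approach as the paper: structural induction on the specification tree using Equation~\ref{eq:split} at each split. The only cosmetic difference is that the paper inducts on the number of splits (showing the invariant is preserved when one leaf of $T_i$ is split to form $T_{i+1}$), whereas you induct on subtrees via a strengthened hypothesis and recurse into the two children; both arguments reduce to the same application of $(\spec{\node_l}\land\spec{\node_r})\Longleftrightarrow\spec{\node}$ and are equally valid.
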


We next use the lemma to prove the soundness and completeness of our algorithm. All the proofs are in Appendix~\ref{sec:proofs}.

\begin{restatable}{theorem}{sound}(Soundness of Verification Algorithm).
\label{theorem:sound} If Algorithm~\ref{alg:algorithm_main} verifies the property $(\phi, \psi)$ for the network $\perturbedNetwork$, then the property must hold. 
\end{restatable}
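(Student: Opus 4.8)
\textbf{Proof proposal for Theorem~\ref{theorem:sound} (Soundness).}

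The plan is to reduce soundness of the whole incremental procedure to soundness of the underlying analyzer $\vbound$ (Definition~\ref{def:sound}) together with the invariance of Lemma~\ref{lemma:invariance}. First I would observe that Algorithm~\ref{alg:algorithm_main} returns \ver{} for $\perturbedNetwork$ exactly when its call to $V_\Delta$ on line~\ref{line:verinc} returns \ver{}, so it suffices to analyze Algorithm~\ref{alg:incver}. That algorithm returns \ver{} only when its \activeList{} becomes empty without ever hitting the \counterex{} branch. So the key is to track what emptying \activeList{} means in terms of verified specifications.

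The core argument is a loop invariant on Algorithm~\ref{alg:incver}: at every point during execution, the set of specifications $\{\spec{n}\}$ ranging over (i) the nodes $n$ currently in \activeList{} and (ii) the nodes $n$ that have already been removed from \activeList{} with $status[n] = \ver$, together with the specifications of the current leaves of $T^{\perturbedNetwork}$, conjoin to $(\phi \to \psi)$ when restricted appropriately — more precisely, I would phrase the invariant as: the conjunction of $\spec{n}$ over all leaves $n$ of the current tree $T^{\perturbedNetwork}$ is equivalent to $(\phi\to\psi)$ (this is just Lemma~\ref{lemma:invariance} applied to the current tree, which remains a valid specification tree since $\add$ preserves the defining property of Definition~\ref{def:proof_tree}), and every leaf has either been verified by $\vbound$ or is still pending in \activeList{}. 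When the algorithm terminates by emptying \activeList{}, every current leaf $n$ of $T^{\perturbedNetwork}$ has been marked \ver{} by $\vbound(n)$ at some point — here I need the small observation that a leaf, once verified and removed, is never split again (splitting happens only in the \unknown{} branch), so the final leaf set is exactly the set of verified leaves. By the soundness half of Definition~\ref{def:sound}, $\vbound(n) = \ver$ implies $\forall X \in \inpreg.\ \spec{n}$ holds, i.e. each leaf specification is valid on $\perturbedNetwork$. Conjoining over all leaves and invoking Lemma~\ref{lemma:invariance} gives $\forall X \in \inpreg.\ (\phi(X) \to \psi(\perturbedNetwork(X)))$, which is exactly the property $(\phi,\psi)$.

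The main obstacle I anticipate is making the bookkeeping around $\Tinit$ precise: $V_\Delta$ does not start from the trivial single-node tree but from $\Tinit = \Tprune$, and \activeList{} is initialized to $\leaves{\Tinit}$ rather than to the root. I must therefore check that Lemma~\ref{lemma:invariance} applies to $\Tprune$, i.e. that $\Tprune$ is genuinely a specification tree in the sense of Definition~\ref{def:proof_tree} — this is asserted right after Algorithm~\ref{alg:prune} (``$\Tprune$ is still a specification tree \dots by construction''), so I would cite that, but I should also confirm that the pruning construction never introduces a leaf whose specification fails to be covered, which follows because each $\add$ step in Algorithm~\ref{alg:prune} splits on a genuine ReLU $r\in\mathcal{R}$ and hence each internal split still satisfies Equation~\ref{eq:split}. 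Given that, the base case of the loop invariant is immediate from Lemma~\ref{lemma:invariance} applied to $\Tinit$, and the inductive step is routine: removing a \ver{} node shrinks \activeList{} but that node's specification is now discharged, while an \unknown{} node $n$ is replaced by its two children $n_l,n_r$ whose conjunction is $\spec{n}$ by Equation~\ref{eq:split}, so the leaf-conjunction of the tree is unchanged. Note that completeness (returning a correct counterexample, termination) is a separate claim and is explicitly not part of this theorem, so I would not address it here; only the soundness direction — \ver{} implies the property holds — needs the argument above.
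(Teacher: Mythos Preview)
Your proposal is correct and follows essentially the same route as the paper's own proof: both arguments reduce to showing that the final tree $T^{\perturbedNetwork}_f$ is a valid specification tree (because $\Tprune$ is one and $\add$ preserves the property), that every leaf of $T^{\perturbedNetwork}_f$ has been marked \ver{} by the sound analyzer $\vbound$, and then invoke Lemma~\ref{lemma:invariance} to conclude $(\phi,\psi)$. Your loop-invariant phrasing is a bit more explicit than the paper's terse version, but the structure and key ingredients are identical.
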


\vspace{-0.07in}

\begin{restatable}{theorem}{complete}(Completeness of Verification Algorithm).
If for the network $\perturbedNetwork$, the property  $(\phi, \psi)$ holds then Algorithm~\ref{alg:algorithm_main} always terminates and produces $\ver$ as output. 
\label{theorem:complete}
\end{restatable}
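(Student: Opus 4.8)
The plan is to prove termination first, then correctness of the answer, reusing Lemma~\ref{lemma:invariance} and the soundness argument (Theorem~\ref{theorem:sound}) as black boxes. Algorithm~\ref{alg:algorithm_main} has two phases: the non-incremental verification $V(N,\phi,\psi,\hbranch)$ that produces $T^N_f$, and the incremental verification $V_\Delta(\perturbedNetwork,\phi,\psi,\Tinit,\hbranch_\Delta)$ (Algorithm~\ref{alg:incver}). The pruning and $\hbranch_\Delta$ construction in between are clearly terminating (finite top-down traversals of the finite tree $T^N_f$), so the only real content is termination of Algorithm~\ref{alg:incver} on $\perturbedNetwork$ and showing that when the property holds it returns \ver{} rather than a spurious \counterex.

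For termination of Algorithm~\ref{alg:incver}, I would argue as follows. The active list is initialized with $\leaves{\Tinit}$, a finite set, and the loop either removes a node (when its status is \ver), empties the list and returns (when status is \counterex), or replaces a node $n$ by its two children obtained from a split of some ReLU $r \in \Reluset/\mathcal{P}$ where $\mathcal{P}$ are the ReLUs already split on the root-to-$n$ path. Thus each branching strictly grows the multiset of split ReLUs along every affected path, and since $|\Reluset|$ is finite, no root-to-leaf path can be branched more than $|\Reluset|$ times. Hence the specification tree $T^{\perturbedNetwork}$ cannot grow beyond a full binary tree of depth $|\Reluset|$, which has at most $2^{|\Reluset|+1}-1$ nodes; so only finitely many nodes can ever enter the active list, and since each loop iteration consumes at least one active node's status (removing it or branching it), the loop terminates. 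The key sub-fact here is that any node with status \unknown{} must still have an unsplit ambiguous ReLU to branch on — and in the worst case, once all ReLUs on a path are split the analyzer $\vbound$ is exact (stated in the ReLU-splitting discussion), so it returns \ver{} or a genuine \counterex, never \unknown; this is what I expect to be the main obstacle, since it requires invoking the exactness-at-full-split property and the soundness of $\vbound$ (Definition~\ref{def:sound}) to rule out an infinite sequence of \unknown{} verdicts.

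For correctness when $(\phi,\psi)$ holds on $\perturbedNetwork$: by the termination argument the algorithm halts, so it returns either \ver{} or \counterex. It cannot return \counterex, because Algorithm~\ref{alg:incver} only returns \counterex{} when $\vbound(n)=\counterex$ for some node $n$, and by the soundness of $\vbound$ (Definition~\ref{def:sound}, the \counterex{} clause) this implies $\exists X$ in the input region of $\spec{n}$ with $\psi(\perturbedNetwork(X))=\mathit{false}$; since $\spec{n}$ is a specification split of $(\phi,\psi)$, its input region is contained in $\inpreg$, contradicting that the property holds. Therefore the algorithm returns \ver. (Conversely, that this \ver{} is not vacuous — i.e., that the algorithm is actually forced to keep going until every leaf is genuinely verified — is exactly Theorem~\ref{theorem:sound} together with Lemma~\ref{lemma:invariance}: the loop exits with an empty active list only when every leaf of the final tree $T^{\perturbedNetwork}_f$ has status \ver, and the conjunction of those leaf specifications is equivalent to $(\phi\to\psi)$.)

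I would assemble the final proof in three short steps: (1) the preprocessing lines~\ref{line:prune}–\ref{line:updateh} and the initial call on line~1 terminate (invoking any standard termination guarantee for the underlying complete verifier $V$, or the same finite-depth argument applied to $T^N_f$), so it suffices to analyze line~\ref{line:verinc}; (2) Algorithm~\ref{alg:incver} terminates by the finite-depth / decreasing-measure argument above; (3) given that the property holds, the \counterex{} branch is unreachable by soundness of $\vbound$, so the output is \ver. The one place to be careful, and where I would spend the most effort, is justifying that the \unknown{} case cannot recur forever on a fixed path — pinning down precisely the hypothesis ``splitting all of $\Reluset$ makes $\vbound$ exact'' and checking it yields a \ver{} verdict (not merely a sharper-but-still-\unknown{} one) when the property holds on that sub-region.
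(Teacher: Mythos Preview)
Your proposal is correct and follows essentially the same approach as the paper: termination via the finite-depth argument (the paper packages this as a separate Termination lemma with the same observation that the tree depth is bounded by $|\Reluset|$), followed by correctness via contradiction from a \counterex{} verdict. The paper routes the contradiction through the contrapositive of Lemma~\ref{lemma:invariance} rather than directly through the \counterex{} clause of Definition~\ref{def:sound} as you do, and its termination argument is terser (it does not explicitly spell out the exactness-when-fully-split point you flag as the main obstacle), but the structure and content are the same.
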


\vspace{-0.07in}

\new{
{\noindent \bf Scope of \Tool{}:} \Tool{} utilizes the specification tree to store the trace of the BaB proof. The IVAN algorithm enhances this tree by reusing and refining it to enable faster BaB proof of updated networks. Our paper focuses on using \Tool{} to verify ReLU networks with BaB that implements ReLU splitting. However, we expect that 
\Tool{}'s principles can be extended to networks with other activation functions (tanh, sigmoid, leaky ReLU) for which BaB has been applied for verification.
}

\vspace{-0.07in}
\subsection{Network Perturbation Bounds}
\label{sec:perturb}
In this section, we formally characterize a class of perturbations on a network $N$ where our proposed "Reuse" technique \new{attains maximum possible speed-up.} 
Specifically, we focus on modifications affecting only the last layer which represent many practical network perturbations (e.g, transfer learning, fine-tuning). \new{The last layer modification assumption is only for our theoretical results in this section. Our experiments make no such assumption and consider perturbations applied across the original network.}

We leave the derivation of perturbation bounds corresponding to the full \Tool{} to future work as it requires theoretically modeling the effect of arbitrary network perturbations on DNN output as well as complex interactions between "Reuse" and "Reorder" techniques. 
Given a specification tree $T$ and network architecture $\mathcal{N}$, we identify a set of neural networks $\distri{T}$ such that any network $\perturbedNetwork \in \distri{T}$ can be verified by reusing $T$.

%
%
%
%
We assume the weights are changed by the weight perturbation matrix $\Eps$. 
If $\OrgNetwork_{\Layers} = \Relu(A_{\Layers} \cdot X + B_{\Layers})$ then last layer of $\perturbedNetwork$ is $\perturbedNetwork_{\Layers} = \Relu((A_{\Layers} + \Eps) \cdot X + B_{\Layers})$.

\vspace{-0.07in}
\begin{definition}[Last Layer Perturbed Network]
Given a network $\OrgNetwork$ with architecture $\mathcal{N}$, the set of last layer perturbed networks is $\mathcal{M}(N, \EpsNorm) \subseteq \mathcal{N}$, such that if $\perturbedNetwork \in \mathcal{M}(N, \EpsNorm)$ then $(\forall i \in [\Layers -1]) \cdot \OrgNetwork_i = \perturbedNetwork_i$, $\OrgNetwork_{\Layers} = \Relu(A_{\Layers} \cdot X + B_{\Layers})$,    $\perturbedNetwork_{\Layers} = \Relu((A_{\Layers} + \Eps) \cdot X + B_{\Layers})$ and $\|\Eps\|_{F} \leq \EpsNorm$.
\footnote{$\|\cdot\|_F$ denotes the Frobenius norm of a matrix} 
\end{definition}
\vspace{-0.07in}

\noindent We next compute the upper bound of $\delta$, for which if the property can be proved/disproved using specification tree $T$ in $\OrgNetwork$ then the same property can be proved/disproved in $\perturbedNetwork$ using the same $T$.
Therefore, once we have the proof tree $T$ that verifies the property in $N$ we can reuse $T$ for verifying any perturbed network $\perturbedNetwork \in \mathcal{M}(N, \EpsNorm)$.
Assuming the property $(\phi, \psi)$ and the analyzer $\vbound$ are the same for any perturbed network $\perturbedNetwork \in \mathcal{M}(N, \EpsNorm)$ the upper bound of $\EpsNorm$ only depends on $\OrgNetwork$ and $T$. 

\noindent We next introduce some useful notations that help us explicitly compute the upper bound of $\EpsNorm$.
Given $T$ let $\FeasibleReg(\GenericNet_{i}, T)$ be the over-approximated region computed by the analyzer $A$ that contains all feasible outputs $N_{i}$ of the $i$-th layer of the original network.
Note $\FeasibleReg(\GenericNet_{i}, T)$ depends on the $\phi$ and analyzer $A$ but we omit them to simplify the notation.
Let $\SolvFunc(N, T)$ denote whether the property $(\phi, \psi)$ can be verifed on network $\OrgNetwork$ with $T$. 
Proof of Theorem~\ref{thm:perturb1} is presented in Section~\ref{sec:proofs2}
\begin{align}
&\ProblemMin(\FeasibleReg(\GenericNet_{\Layers}, T)) = \min_{Y\;\in\;\FeasibleReg(\GenericNet_{\Layers}, T)} \Lpc^TY \\
&\SolvFunc(N, T) = (\ProblemMin(\FeasibleReg(\GenericNet_{\Layers}, T)) \geq 0) \\
&\MaxNorm(N, T) = \max_{Y \in \FeasibleReg(\OrgNetwork_{l-1}, T)} \|Y\|_2
\end{align}
\vspace{-0.07in}

\begin{restatable}{theorem}{perturba}
\label{thm:perturb1}
If $\EpsNorm \leq \frac{|\ProblemMin(\FeasibleReg(\OrgNetwork_{\Layers}, T))|}{\|\Lpc\|_2 \cdot \MaxNorm(N, T)}$ then for any perturbed network $\perturbedNetwork \in \mathcal{M}(N, \EpsNorm)$ $\SolvFunc(\OrgNetwork, T) \iff \SolvFunc(\perturbedNetwork, T)$.
\end{restatable}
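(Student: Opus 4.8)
The plan is to show that the two statements $\SolvFunc(\OrgNetwork, T)$ and $\SolvFunc(\perturbedNetwork, T)$ agree by controlling how much the over-approximated output region at layer $\Layers$ can move when we perturb the last-layer weight matrix by $\Eps$. Since $\perturbedNetwork \in \mathcal{M}(N, \EpsNorm)$ shares layers $1, \dots, \Layers-1$ with $\OrgNetwork$, the feasible region $\FeasibleReg(\OrgNetwork_{\Layers-1}, T) = \FeasibleReg(\perturbedNetwork_{\Layers-1}, T)$ is identical for both networks; call it $S$. The only difference is the final affine map: on $\OrgNetwork$ a point $Z \in S$ is mapped through $A_\Layers Z + B_\Layers$ (then ReLU), while on $\perturbedNetwork$ it is mapped through $(A_\Layers + \Eps) Z + B_\Layers$. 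I would argue that the quantity $\ProblemMin(\FeasibleReg(\cdot_\Layers, T)) = \min \Lpc^T Y$ over the respective output regions changes by at most $\|\Lpc\|_2 \cdot \EpsNorm \cdot \MaxNorm(N,T)$ when passing from $\OrgNetwork$ to $\perturbedNetwork$.

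The key steps, in order: First, establish that $\FeasibleReg(\OrgNetwork_{\Layers-1}, T) = \FeasibleReg(\perturbedNetwork_{\Layers-1}, T) =: S$, using that the first $\Layers-1$ layers coincide and that $\FeasibleReg$ depends only on those layers, $\phi$, and $\vbound$. Second, for each $Z \in S$, bound the perturbation in the contribution to the objective: before the ReLU, the $i$-th coordinate shifts from $(A_\Layers Z + B_\Layers)_i$ to $(A_\Layers Z + B_\Layers)_i + (\Eps Z)_i$, and $\|\Eps Z\|_2 \leq \|\Eps\|_F \|Z\|_2 \leq \EpsNorm \cdot \MaxNorm(N, T)$. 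Third, handle the ReLU: since ReLU is $1$-Lipschitz coordinatewise, the post-activation outputs of the two networks on the same $Z$ differ in $\ell_2$ norm by at most $\|\Eps Z\|_2 \leq \EpsNorm \cdot \MaxNorm(N,T)$; hence the sets $\FeasibleReg(\OrgNetwork_\Layers, T)$ and $\FeasibleReg(\perturbedNetwork_\Layers, T)$ are within Hausdorff distance $\EpsNorm \cdot \MaxNorm(N,T)$ of each other (at least in the sense needed — every point of one is close to a corresponding point of the other; one must be mildly careful here because $\FeasibleReg$ is an over-approximation, not the exact image, so I would instead track the bound through the analyzer's sound propagation). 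Fourth, conclude that $|\ProblemMin(\FeasibleReg(\OrgNetwork_\Layers, T)) - \ProblemMin(\FeasibleReg(\perturbedNetwork_\Layers, T))| \leq \|\Lpc\|_2 \cdot \EpsNorm \cdot \MaxNorm(N,T)$ by Cauchy–Schwarz applied to $\Lpc^T(Y - Y')$. Fifth, plug in the hypothesis $\EpsNorm \leq |\ProblemMin(\FeasibleReg(\OrgNetwork_\Layers, T))| / (\|\Lpc\|_2 \MaxNorm(N,T))$: this forces $|\ProblemMin(\FeasibleReg(\perturbedNetwork_\Layers, T)) - \ProblemMin(\FeasibleReg(\OrgNetwork_\Layers, T))| \leq |\ProblemMin(\FeasibleReg(\OrgNetwork_\Layers, T))|$, so the two values have the same sign (or one is zero), giving $\ProblemMin(\FeasibleReg(\OrgNetwork_\Layers, T)) \geq 0 \iff \ProblemMin(\FeasibleReg(\perturbedNetwork_\Layers, T)) \geq 0$, which is exactly $\SolvFunc(\OrgNetwork, T) \iff \SolvFunc(\perturbedNetwork, T)$.

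I would also need to confirm the claim at the level of the whole specification tree $T$, not just the root: $\SolvFunc(N, T)$ as defined uses $\ProblemMin$ over $\FeasibleReg(\GenericNet_\Layers, T)$, which I read as aggregating over all leaf subproblems of $T$ (each leaf contributes its own over-approximated region, and the property is solved iff all of them give nonnegative lower bound). The bound $\EpsNorm \cdot \MaxNorm(N, T)$ with $\MaxNorm$ defined as the max of $\|Y\|_2$ over $\FeasibleReg(\OrgNetwork_{\Layers-1}, T)$ is a uniform bound valid across all leaves simultaneously, so the same sign-preservation argument applies leaf by leaf, and $\ProblemMin$ being a min over leaves preserves the conclusion.

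The main obstacle I anticipate is the third step: rigorously relating the perturbation of the \emph{analyzer's over-approximated} region $\FeasibleReg(\cdot_\Layers, T)$ to the perturbation of the true output, rather than just the perturbation of the exact image of $S$. Because $\vbound$ is only assumed sound (an over-approximation), I cannot simply say "the image moves by $\leq \EpsNorm \MaxNorm$" and transfer that to $\FeasibleReg$; I need the analyzer's propagation of the last affine layer to be \emph{exact} (which the paper does state — analyzers are exact for linear/affine functions) and the ReLU relaxation to be applied to intervals/zonotopes whose endpoints shift by the controlled amount. So the clean way is: the analyzer computes $\FeasibleReg(\cdot_{\Layers-1}, T)$ identically for both networks, then applies the exact affine map $A_\Layers$ (resp. $A_\Layers + \Eps$) to that region, then a convex ReLU relaxation; I bound the difference $\Lpc^T Y - \Lpc^T Y'$ directly on the post-relaxation regions by bounding it on the pre-ReLU regions (affine, exact, shift $\leq \EpsNorm \MaxNorm$ per point) and invoking monotonicity/Lipschitz-ness of the relaxation composed with the linear functional $\Lpc^T$. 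Making this last monotonicity argument airtight for a generic sound analyzer — rather than a specific one like DeepPoly or DeepZ — is the delicate part, and I would likely need an extra assumption (or to restrict to analyzers whose ReLU relaxation is itself $1$-Lipschitz in an appropriate sense) to close the gap cleanly.
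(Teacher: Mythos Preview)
Your high-level strategy matches the paper's exactly: use that the first $\Layers-1$ layers coincide so $\FeasibleReg(\OrgNetwork_{\Layers-1},T)=\FeasibleReg(\perturbedNetwork_{\Layers-1},T)$, bound the pre-activation shift by $\|\Eps Z\|_2\le\EpsNorm\,\MaxNorm(N,T)$, pass to a bound on $|\Lpc^T(Y-Y')|$ via Cauchy--Schwarz, and conclude sign preservation of $\ProblemMin$. The two directions of the biconditional are handled in the paper just as you outline (one by direct construction, one by contradiction).

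The obstacle you flag in step~3 is real and is precisely where the paper does the technical work. Your $1$-Lipschitz argument for $\Relu$ controls the \emph{exact} image of $\FeasibleReg(\cdot_{\Layers-1},T)$, but $\FeasibleReg(\cdot_{\Layers},T)$ is the analyzer's convex relaxation of that image, and a generic sound relaxation need not move by at most the same amount. The paper resolves this not by an abstract Lipschitz hypothesis but by fixing a specific relaxation (the ``quadrilateral'' relaxation $x\ge 0$, $x\ge \hat x$, $x\le ub(\hat x)$) and proving a pointwise correspondence lemma: for every $Y\in\FeasibleReg(\OrgNetwork_{\Layers},T)$ there exists $Y'\in\FeasibleReg(\perturbedNetwork_{\Layers},T)$ with $\|Y-Y'\|_2\le\EpsNorm\,\MaxNorm(N,T)$. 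The construction takes the witness $X\in\FeasibleReg(\cdot_{\Layers-1},T)$ underlying $Y$, forms $\hat Y'=(A_\Layers+\Eps)X+B_\Layers$, and then, coordinate by coordinate, clips $Y[j]$ into the relaxed region $\Reluabstract{\hat Y'[j]}$ via a short case analysis (on the sign of $\hat Y'[j]$ and whether $Y[j]$ already lies in the new box). Each case yields $|Y[j]-Y'[j]|\le\|\Eps[j]\|_2\,\MaxNorm(N,T)$, and summing gives the $\ell_2$ bound. So your anticipated ``extra assumption or restriction to a specific analyzer'' is exactly what is done; the missing ingredient in your sketch is this explicit clipping construction rather than an appeal to Lipschitzness of an unspecified relaxation.
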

The proof of the theorem is in Appendix~\ref{sec:proofs2}.

\vspace{-0.07in}
\section{Methodology}

\begin{table*}[!t]
\centering\tablesize
\caption{Models and the perturbation $\epsilon$ used for the evaluation for incremental verification.}
\vspace{-.15in}

\resizebox{.98\columnwidth}{!}{

\begin{tabular}{l l l l l l}
    \toprule
    Model & Architecture & Dataset & \#Neurons & Training Method & $\epsilon$\\ 
    \midrule
    ACAS-XU Networks & $6 \times 50$ linear layers  & ACAS-XU & 300 & Standard \cite{Julian_2019} &  - \\
    \neta & $2 \times 256$ linear layers  & MNIST & 512 & Standard  & 0.02\\
    \netb & 2 Conv, 2 linear layers & MNIST & 9508 & Certified Robust \cite{balunovic2020Adversarial} & 0.1\\
    \netc & 2 Conv, 2 linear layers & CIFAR10 & 4852 & Empirical Robust \cite{Dong_2018_CVPR} & $\frac{2}{255}$\\
    \netd & 2 Conv, 2 linear layers & CIFAR10 & 6244 & Certified Robust \cite{wong2018provable} & $\frac{4}{255}$\\
    \nete & 4 Conv, 2 linear layers & CIFAR10 & 6756 &
    Certified Robust \cite{wong2018provable} & $\frac{4}{255}$\\
    
    \bottomrule
\end{tabular}
}
\vspace{-0.1in}
\label{table:models}
\end{table*}

\noindent{\bf Networks and Properties.} 
We evaluate \Tool{} on models with various architectures that are trained with different training methods. 
Similar to most of the previous literature, we verify $L_\infty$-based local robustness properties for MNIST and CIFAR10 networks and choose standard $\epsilon$ values used for evaluating complete verifiers. 
For the verification of global properties in Section~\ref{sec:acas} we use the standard set of ACAS-XU properties that are part of the VNN-COMP benchmarks \cite{DBLP:journals/corr/abs-2109-00498}. 
Table~\ref{table:models} presents the evaluated models and the choice of $\epsilon$ for the local robustness properties.


\noindent{\bf Network Perturbation.} Similar to previous works \cite{DBLP:conf/icse/PaulsenWW20, DBLP:journals/pacmpl/UgareSM22}, we use quantization to generate perturbed networks. Specifically, we use int8 and int16 post-training quantizations. The quantization scheme \mbox{has the form \cite{tf_quantization}:}
%
  $  r =  s (q - zp) $.  
%
Here, $q$ is the quantized value and $r$ is the real value; $s$ which is the scale and $zp$ which is the zero point are the parameters of quantization. Our experiments use symmetric quantization with $zp = 0$.


\noindent\textbf{Baseline.}
\new{
We use the following baseline BaB verifiers:} 
\begin{itemize}
    \item For proving the local robustness properties, we use LP-based triangle relaxation for bounding \cite{ehlers2017formal, bunel2020branch}, and we use the estimation based on coefficients of the zonotopes for choosing the ReLU splitting \cite{ijcai2021p351}.
    \item  For the verification of ACAS-XU global properties, we use RefineZono \cite{singh2019boosting}. \new{RefineZono uses DeepZ \cite{singh2018fast} analyzer with input splitting. This baseline is used only for experiments in Section~\ref{sec:acas}.} 
\end{itemize}
 
\noindent{\bf Experimental Setup.} We use 64 cores of an AMD Ryzen Threadripper CPU with the main memory of 128 GB running the Linux operating system. The code for our tool is written in Python. We use the GUROBI \cite{gurobi2018} solver for our LP-based analyzer. 

\noindent{\bf Hyperparameters.} We use Optuna tuner \cite{optuna_2019} for tuning the hyperparameters. We present more details and sensitivity analysis of the hyperparameters in Section~\ref{sec:hyperparam}.

\vspace{-0.07in}
\section{Experimental Evaluation}
\label{sec:eval}

We evaluate the effectiveness of \Tool{} in verifying the local robustness properties of the quantized networks. We then analyze how various tool components contribute to the overall result. We further show the sensitivity of speedup obtained by \Tool{} to the hyperparameters. 
\new{We also stress-test \Tool{} on large random perturbation to the network.}
Finally, we evaluate the effectiveness of \Tool{} on global property verification with input splitting. 

\vspace{-0.07in}

\subsection{Effectiveness of \Tool{}}
\label{sec:local}
\begin{figure}[!htbp]
\centering\vspace{-.2in}
\begin{subfigure}[b]{0.4\textwidth}
 \includegraphics[width=\textwidth]{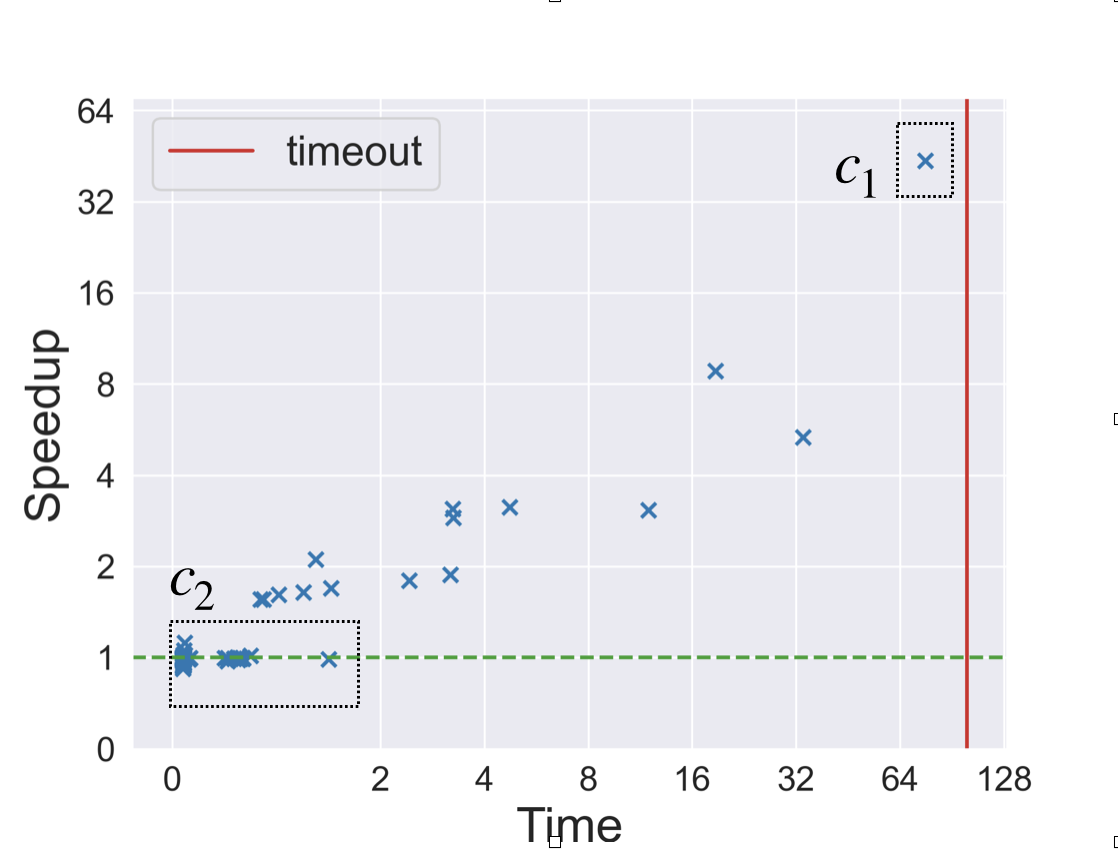}
 \caption{\neta with INT16 quantization}
 \label{fig:scatter1a}
\end{subfigure}
\hspace{10mm}
\begin{subfigure}[b]{0.4\textwidth}
 \includegraphics[width=\textwidth]{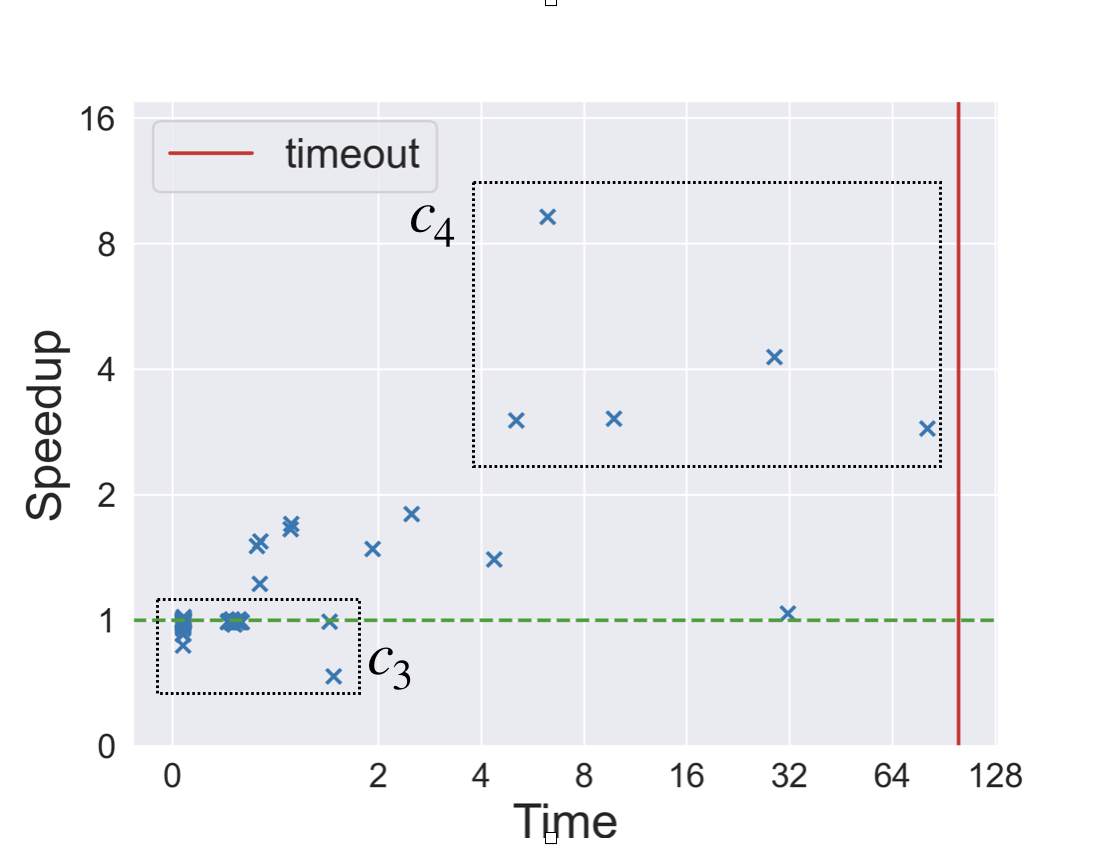}
 \caption{\neta with INT8 quantization}
 \label{fig:scatter1b}
\end{subfigure}
\vspace{-.1in}
\hspace{10mm}
\hfill
\caption{\Tool{} speedup for the verification of local robustness properties on \neta.}
\label{fig:scatter1}\vspace{-.1in}
\end{figure} 
\vspace{-0.1in}


Figure~\ref{fig:scatter1} presents the speedup obtained by \Tool{} on \neta. The x-axis displays the time taken by the baseline verifier for the verification in Seconds. The y-axis denotes the speedup obtained by \Tool{} over the baseline on a specific verification instance. Each cross in the plot shows results for a specific verification property. The vertical line denotes the timeout for the experiment and the dashed line is to separate instances that have a speedup greater than 1x. 

We observe that \Tool{} gets higher speedup on hard instances that take more time for verification on the baseline. \Tool{} has a small overhead for storing the specification tree compared to the baseline. For hard specifications that result in large specification trees, this overhead is insignificant compared to the improvement in the verification time. Our techniques that reuse and refine the tree focus on speeding up such hard specifications. However, for specifications that are easy to prove with small specification trees, we see a slight slowdown in verification time. Since these easy specifications are verified quickly by both \Tool{} and the baseline, they are irrelevant in overall verification time over all the specifications. For instance, the box labeled by $c_2$ in Figure~\ref{fig:scatter1a} contains all of the 83 cases with low $(<1.2x)$ speedup on int16 quantized network. Despite low speedup, all of them take 16.27s to verify with \Tool{}. Whereas the case labeled by $c_1$ alone takes 75.54s on the baseline and 1.73s on \Tool{}, leading to a 43x speedup -- caused by reducing BaB tree size from 345 nodes to 28 nodes on pruning, out of which only 14 leaf nodes are active and lead to analyzer calls. 

We observe a similar pattern in the case of the int8 quantized network in Figure~\ref{fig:scatter1b}. It shows that the cases confined in box $c_3$, despite having lower speedup, take relatively less time. The cases included in box $c_4$ in Figure~\ref{fig:scatter1b} have a much higher impact on the overall verification time. Box $c_3$ includes the majority of the low-speedup 83 cases that take a total of 18.44s time for verification with \Tool{}. Whereas for 5 cases in box $c_4$ with higher speedups, take 401 analyzer calls with baseline and 118 analyzer calls with \Tool{}. Accordingly, solving them takes 130.6s with the baseline and 40.26s with \Tool{}, leading to a 3.3x speedup. 

\begin{figure}[!htbp]
\vspace{-.1in}
\hspace{5mm}
\centering

\begin{subfigure}[b]{0.41\textwidth}
 \centering
 \includegraphics[width=\textwidth]{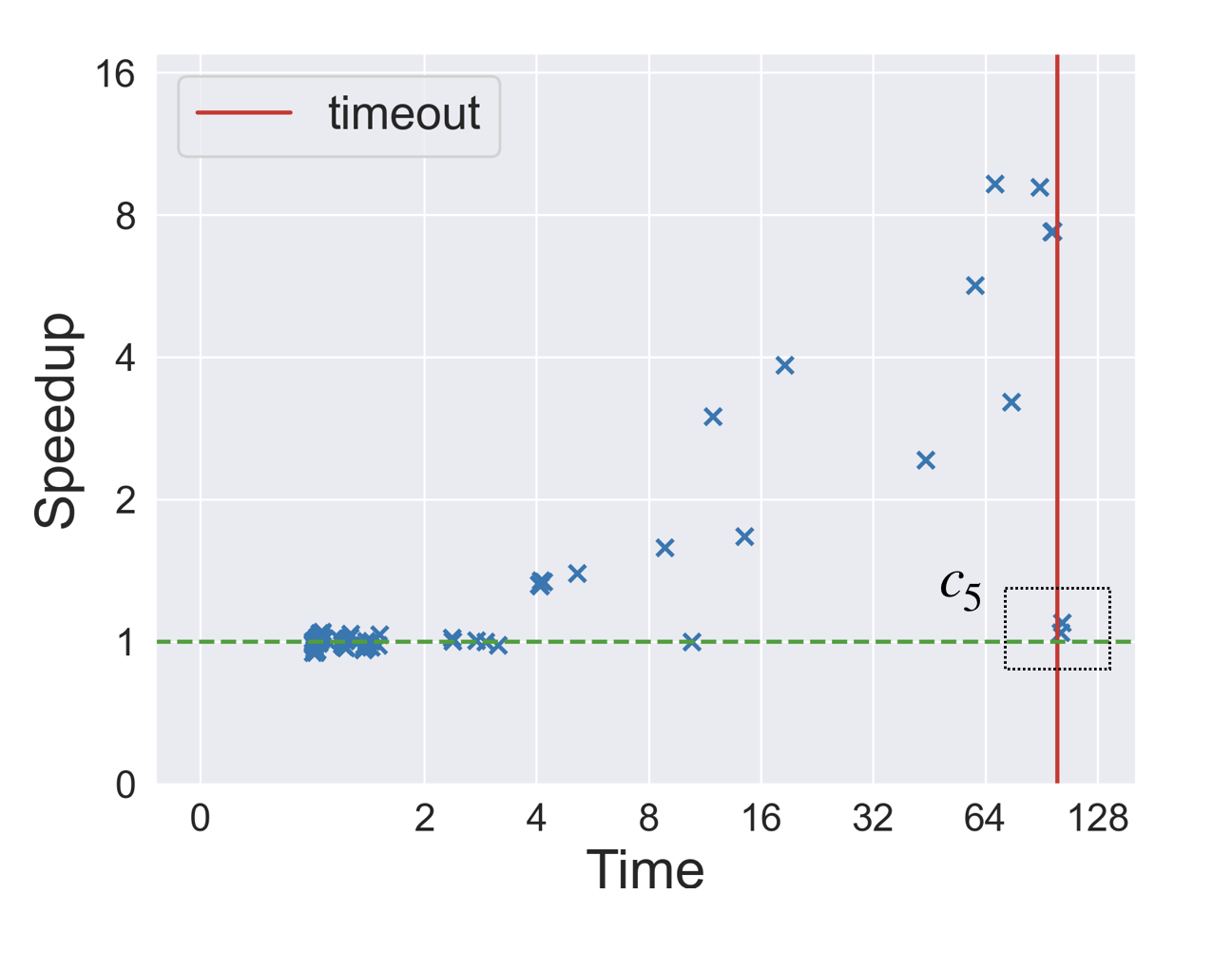}
 \caption{\netb with INT16 quantization}
 \label{fig:scatter2a}
\end{subfigure}
\hspace{10mm}
\begin{subfigure}[b]{0.41\textwidth}
 \centering
 \includegraphics[width=\textwidth]{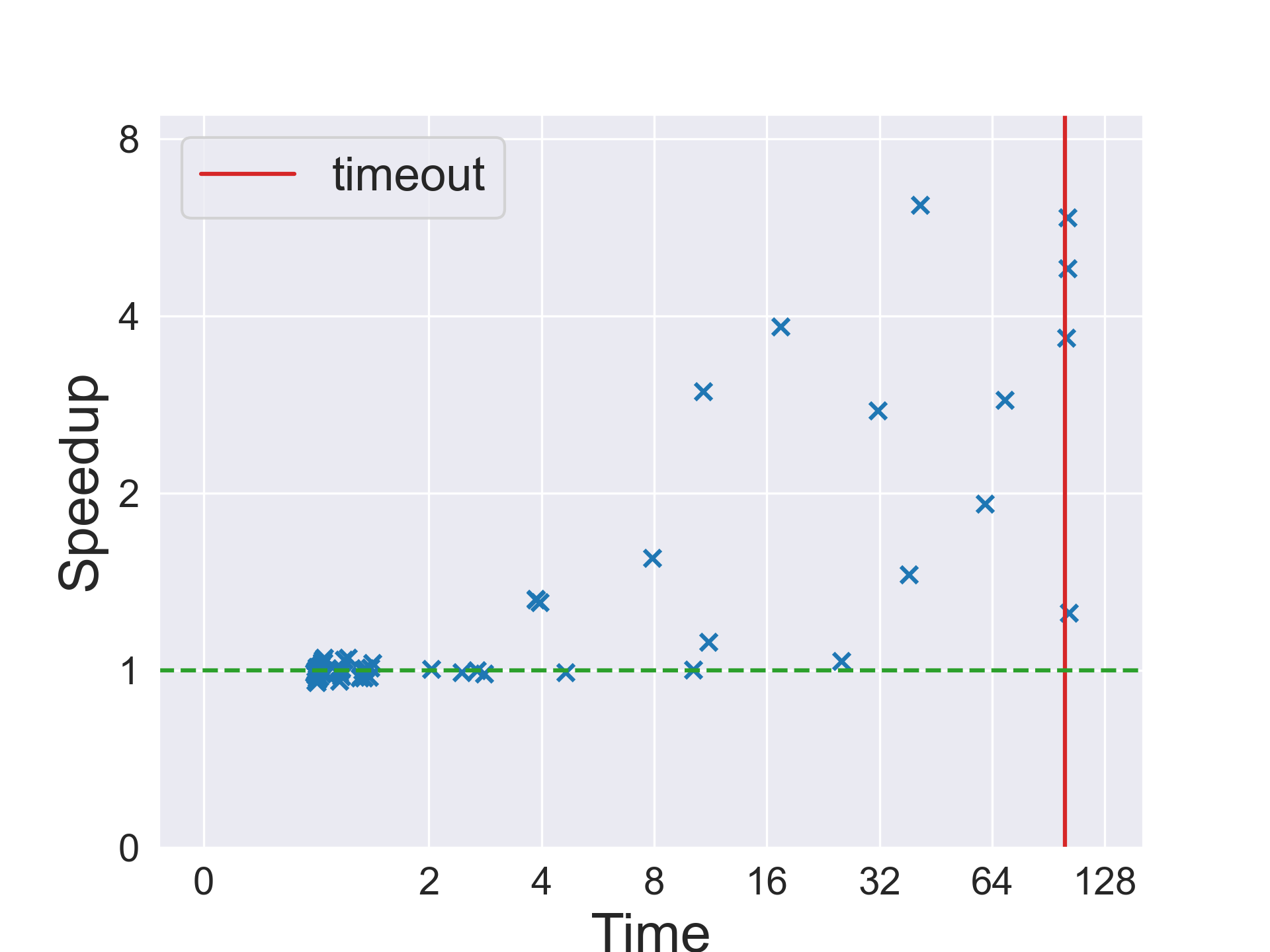}
 \caption{\netb with INT8 quantization}
\end{subfigure}

\begin{subfigure}[b]{0.41\textwidth}
 \centering
 \includegraphics[width=\textwidth]{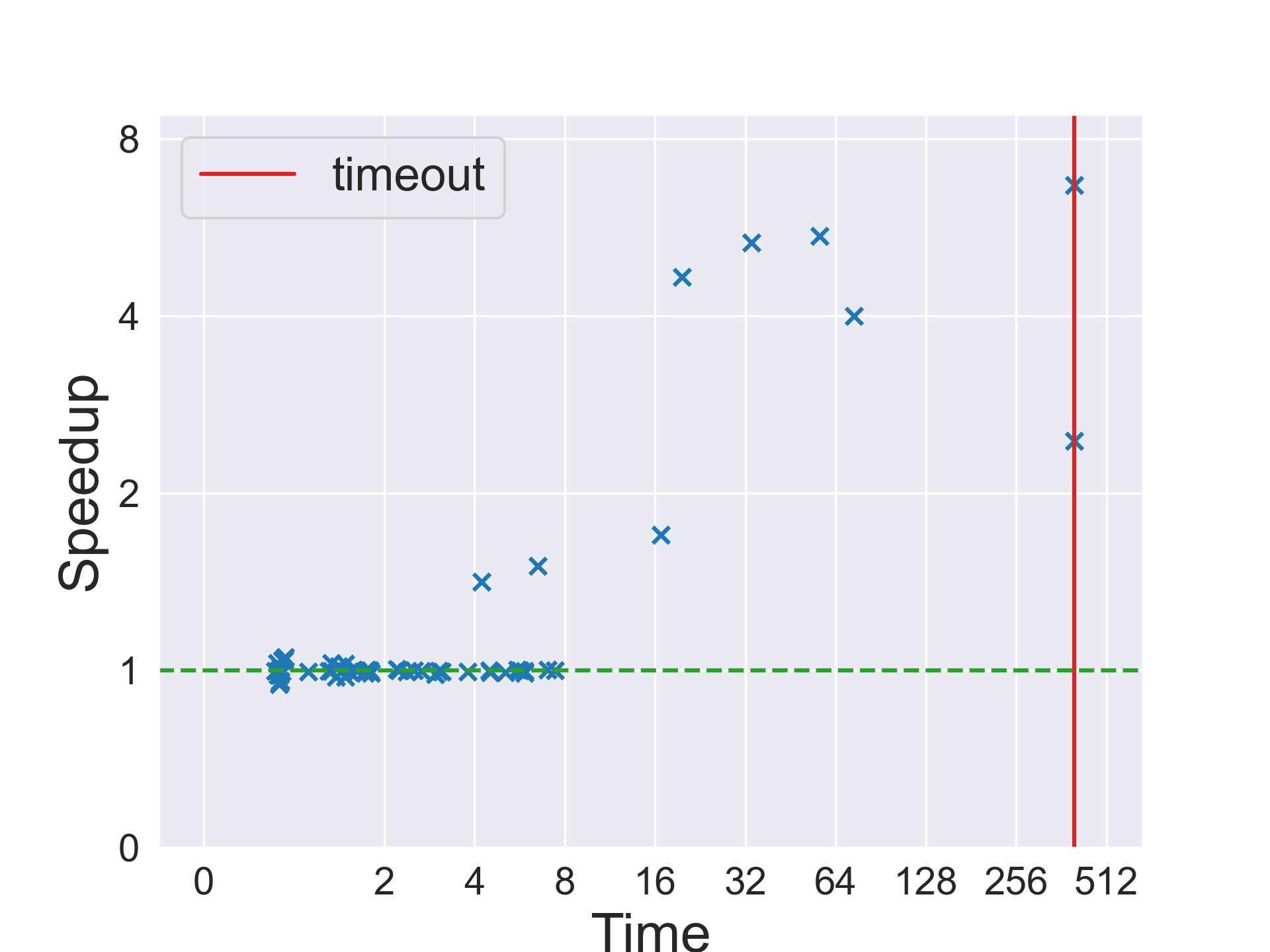}
 \caption{\netd with INT16 quant.}
\end{subfigure}
\hspace{10mm}
\begin{subfigure}[b]{0.41\textwidth}
 \centering
 \includegraphics[width=\textwidth]{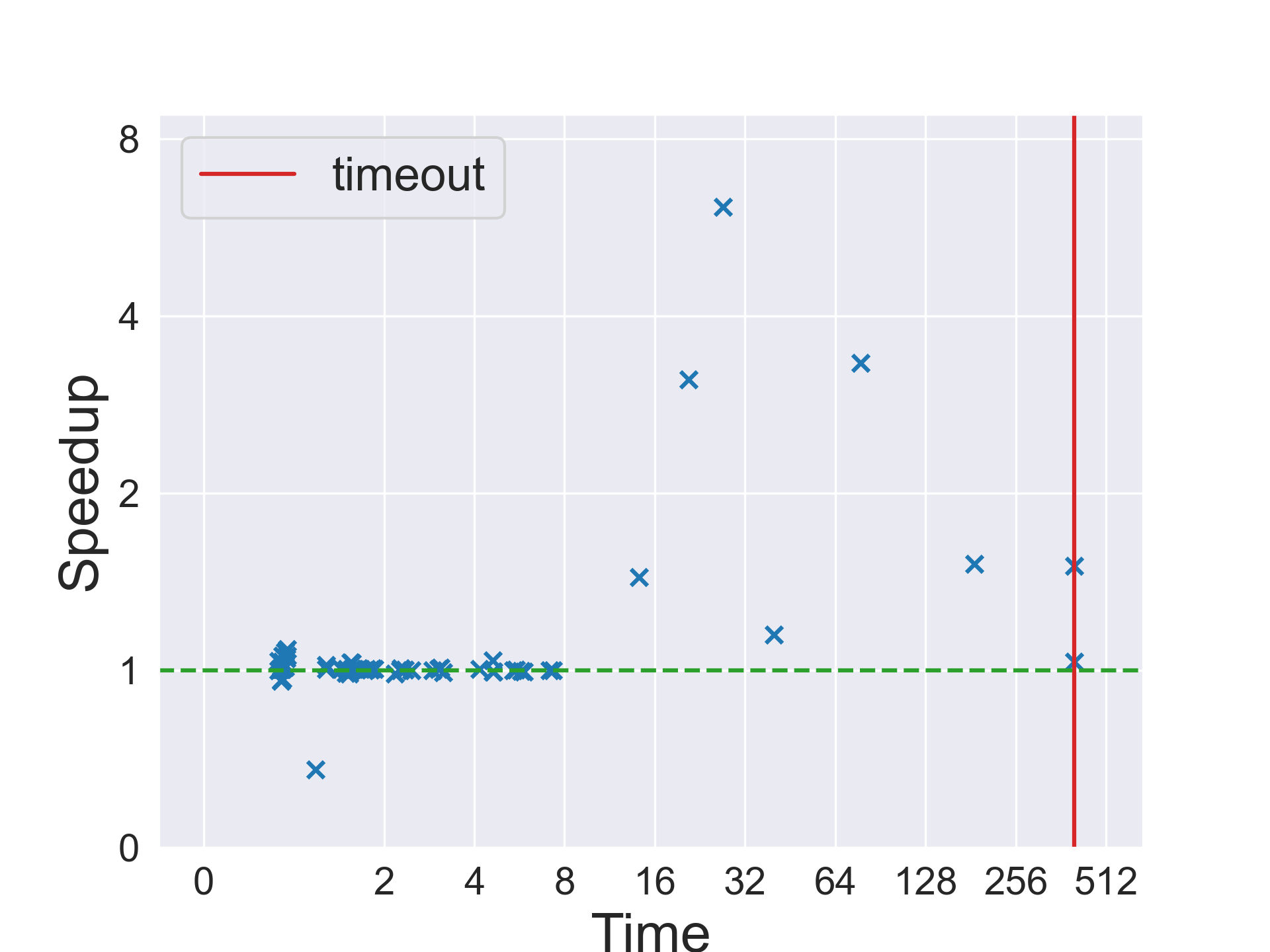}
 \caption{\netd with INT8 quant.}
\end{subfigure}
\caption{\Tool{} speedup for the verification of local robustness properties.}
\label{fig:scatter2}\vspace{-.1in}
\end{figure} 

Figure~\ref{fig:scatter2} presents speedup for several other networks. \Tool{} is notably more effective on hard-to-verify specifications that take more than 10s to verify using the baseline. It achieves \geomeanhard{} geomean speedup on such cases. In many cases, we see more solved cases by \Tool{} over the baseline. For instance, the box $c_5$ in Figure~\ref{fig:scatter2a} contains 2 cases that baseline does not solve within the timeout of 100s, but \Tool{} solves them in 90.6s and 95.8s each. We show speedup vs. time plots for other networks (\netc, \nete{}) and \mbox{more statistics of our evaluation in Appendix~\ref{sec:stats}.}

\vspace{-0.07in}
\subsection{Overall Speedup}
\label{sec:ablation}
We observe no cases when the baseline verifies the property and \Tool{} exceeds the timeout. 
We cannot compute the speedup for the cases where the baseline exceeds the timeout. 
Therefore, we compute the overall speedup over the set $S$ that denotes all the cases that are solved by the baseline within the time limit. 
$\timebase(c)$ and $\timetool(c)$ denote the time taken by baseline and \Tool{} on the case $c$ respectively, then we compute the overall speedup as $\speedup = \frac{\sum_{c \in S}{\timebase(c)}}{\sum_{c \in S}{\timetool(c)}}$.

Table~\ref{table:ablation} presents the comparison of the contribution of each technique used in \Tool{} for each model. Column $\solved$ in each case displays the number of extra verification problems solved by the technique in comparison to the baseline. 
Columns in \Tool{}[Reuse] present results on only using the reuse technique. 
Columns in \Tool{}[Reorder] show results on using the reorder technique. 
Columns in \Tool{} present the results on using all techniques from Section~\ref{sec:main}. 
Column $\speedup$ for each technique demonstrates the overall speedup obtained compared to the baseline. We observe that in most case combination of all techniques performs better than reuse and reordering. We see that reorder performs better than reuse except for one case (\neta on int8). 
\begin{table*}[!t]
\centering\tablesize
\caption{Ablation study for overall speedup across all properties for different techniques in \Tool{}.}
\begin{tabular}{l l | l l | l l | l l}
    \toprule
    Model & Approximation & \multicolumn{2}{|c|}{\Tool{}[Reuse]} & \multicolumn{2}{|c|}{\Tool{}[Reorder]} & \multicolumn{2}{|c}{\Tool{}} \\
    & & $\speedup$ & $\solved$ & $\speedup$ & $\solved$ & $\speedup$ & $\solved$ \\
    \midrule
    \neta & int16 & 2.51x & 0 & 2.71x & 0 & \textbf{4.43x} & 0 \\
    & int8 & 1.07x & 0 & 1.64x & 0 & \textbf{2.02x} & 0 \\
    \netb & int16 & 1.62x & 0 & 2.15x & 0 & \textbf{3.09x} & 2 \\
    & int8 & 1.27x & 2 & 1.34x & 3 & \textbf{1.71x} & 4 \\
    \netc & int16 & 1.02x & 0 & 1.57x & 2 & \textbf{2.52x} & 2 \\
    &int8 & 1.08x & 0 & 1.53x & 0 & \textbf{1.78x} & 0 \\
    \netd & int16 & 1.43x & 1 & 1.51x & 0 & \textbf{1.87x} & 2\\
    & int8 & 0.75x & 0 & \textbf{1.62x} & 1 & 1.53x & \textbf{2} \\
    \nete & int16 & 1.64x & 0 & 2.29x & 0 & \textbf{3.21x} & 0\\
    & int8 & 1.15x & 0 & 1.13x & 1 & \textbf{1.25x} & 1 \\
    \bottomrule
\end{tabular}
\label{table:ablation}
\vspace{-0.1in}
\end{table*}

\vspace{-0.07in}
\subsection{Hyperparameter Sensitivity Analysis}
\label{sec:hyperparam}

\begin{figure}[!htbp]
\vspace{-0.2in}
\centering
\begin{subfigure}[b]{0.41\textwidth}
 \includegraphics[width=\textwidth]{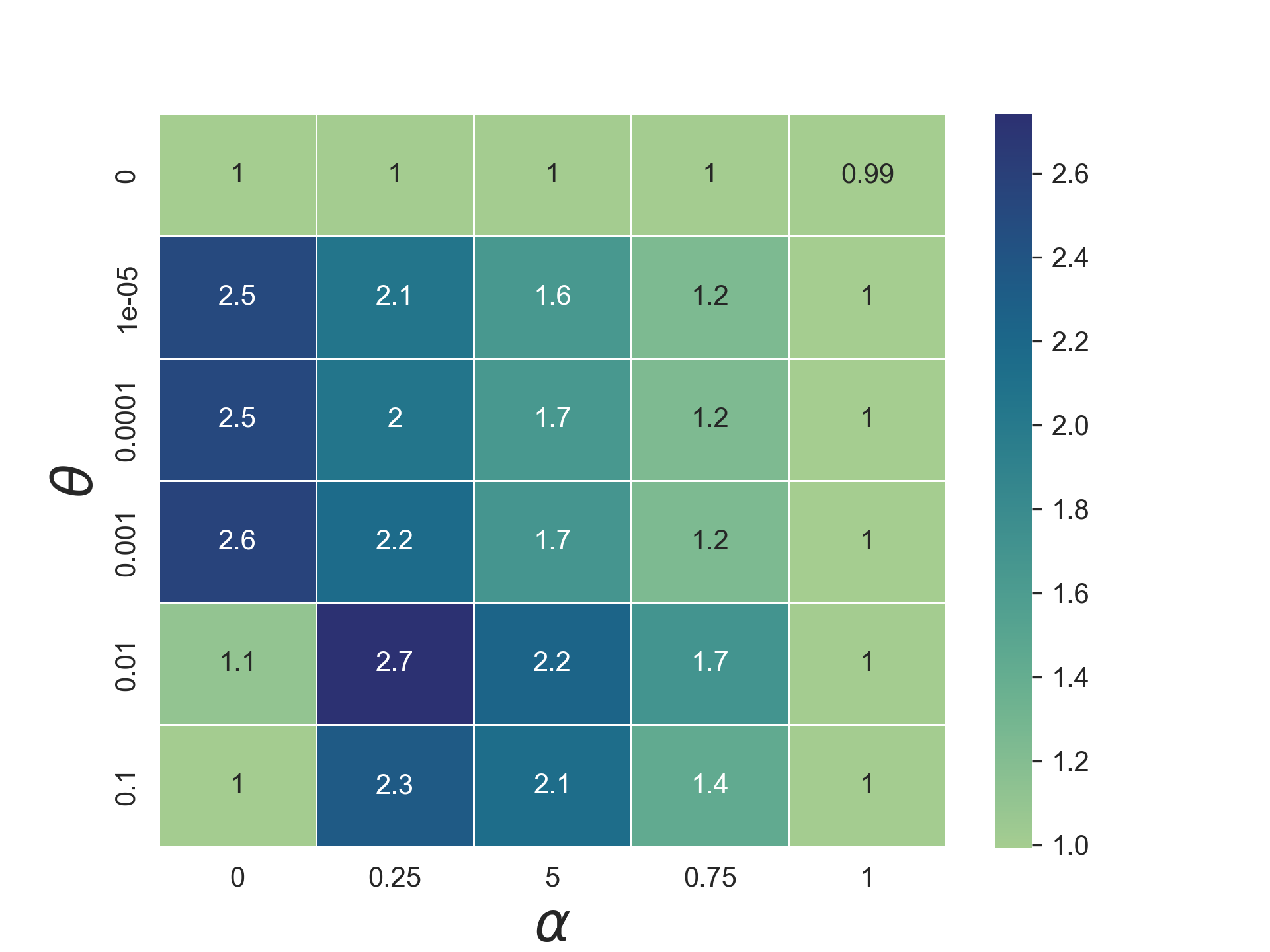}
 \caption{\Tool{}[Reorder]}
 \label{fig:sensitivity1}
\end{subfigure}
\hspace{5mm}
\begin{subfigure}[b]{0.41\textwidth}
 \includegraphics[width=\textwidth]{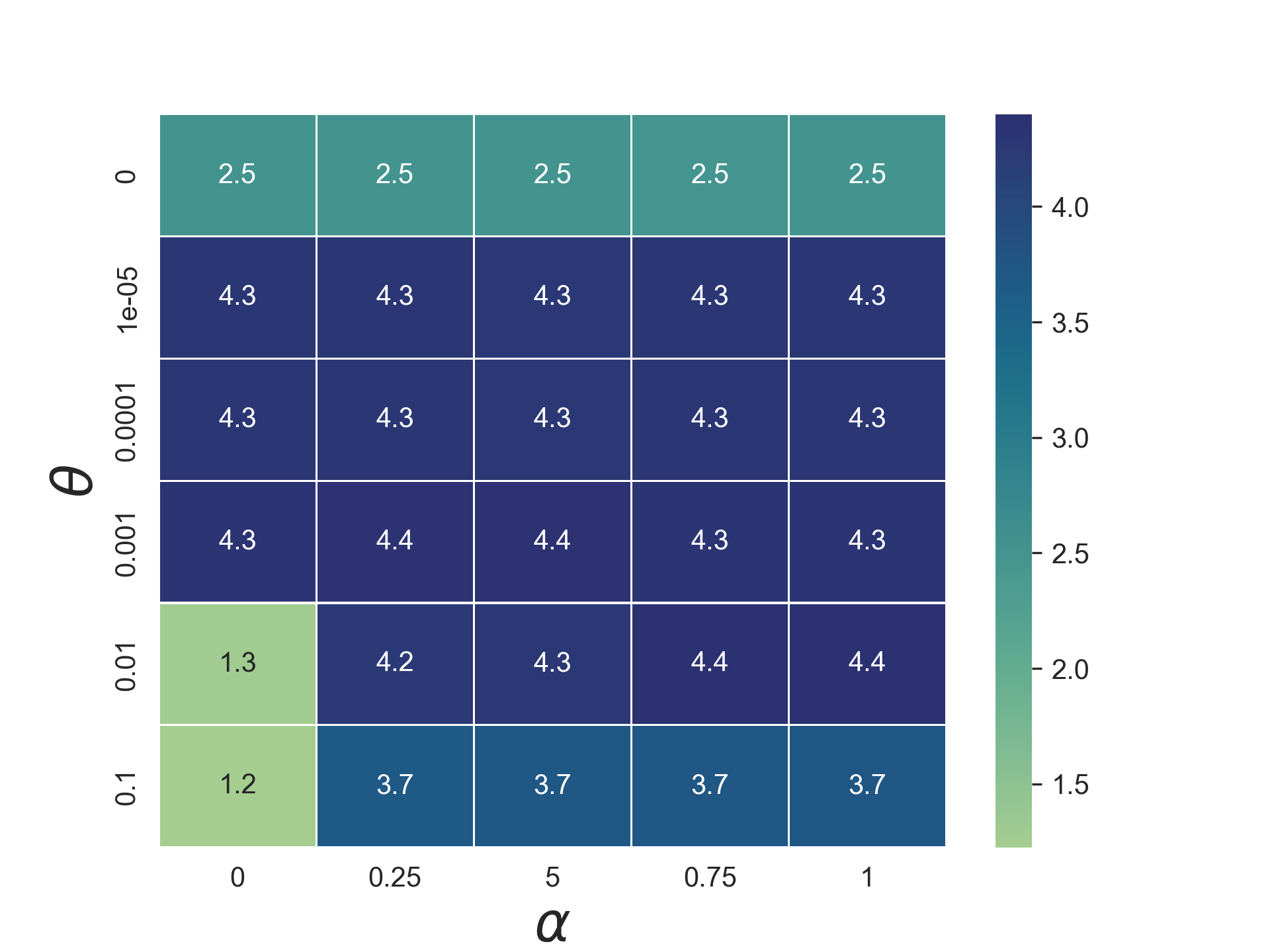}
 \caption{\Tool{}}
 \label{fig:sensitivity2}
\end{subfigure}
\hfill
\caption{Speedup for the combination of hyperparameter values on \neta with int16 quantization.}
\label{fig:sensitivity}
\end{figure} 
\vspace{-0.1in}


Figure~\ref{fig:sensitivity} plots the heatmap for \Tool{} speedup on various hyperparameter values. The x-axis shows the hyperparameter $\alpha$ value and the y-axis shows the $\threshold$ value. Each point in the greed is annotated with the observed $\speedup$ on setting the corresponding hyperparameter values. Figure~\ref{fig:sensitivity1} presents the plot for \Tool{} with on reorder technique. $(\alpha, \threshold) = (0.25, 0.01)$ is the highest speedup point. Choosing $\threshold = 0$ implies that are not deprioritizing the splitting decisions that did not work well. In that case, we observe no speedup with reordering, showing the necessity of $\threshold$ in our $\hbranch_\Delta$ formulation. Figure~\ref{fig:sensitivity2} presents the same plot for our main algorithm that also reuses the pruned tree. We observe that the speedup is less sensitive to hyperparameter value changes in this plot. This is expected since reordering starts from single node $\Tinit$ and purely relied on $\hbranch_\Delta$ formulation for the speedup. While our main technique also reuses the tree, even when $\threshold = 0$ it can get $\sim$2.5x speedup. 


\subsection{Global Properties with Input Splitting}
\label{sec:acas}

\begin{figure}[!htbp]
\centering
\vspace{-0.2in}
\begin{subfigure}[b]{0.49\textwidth}
 \includegraphics[width=0.8\textwidth]{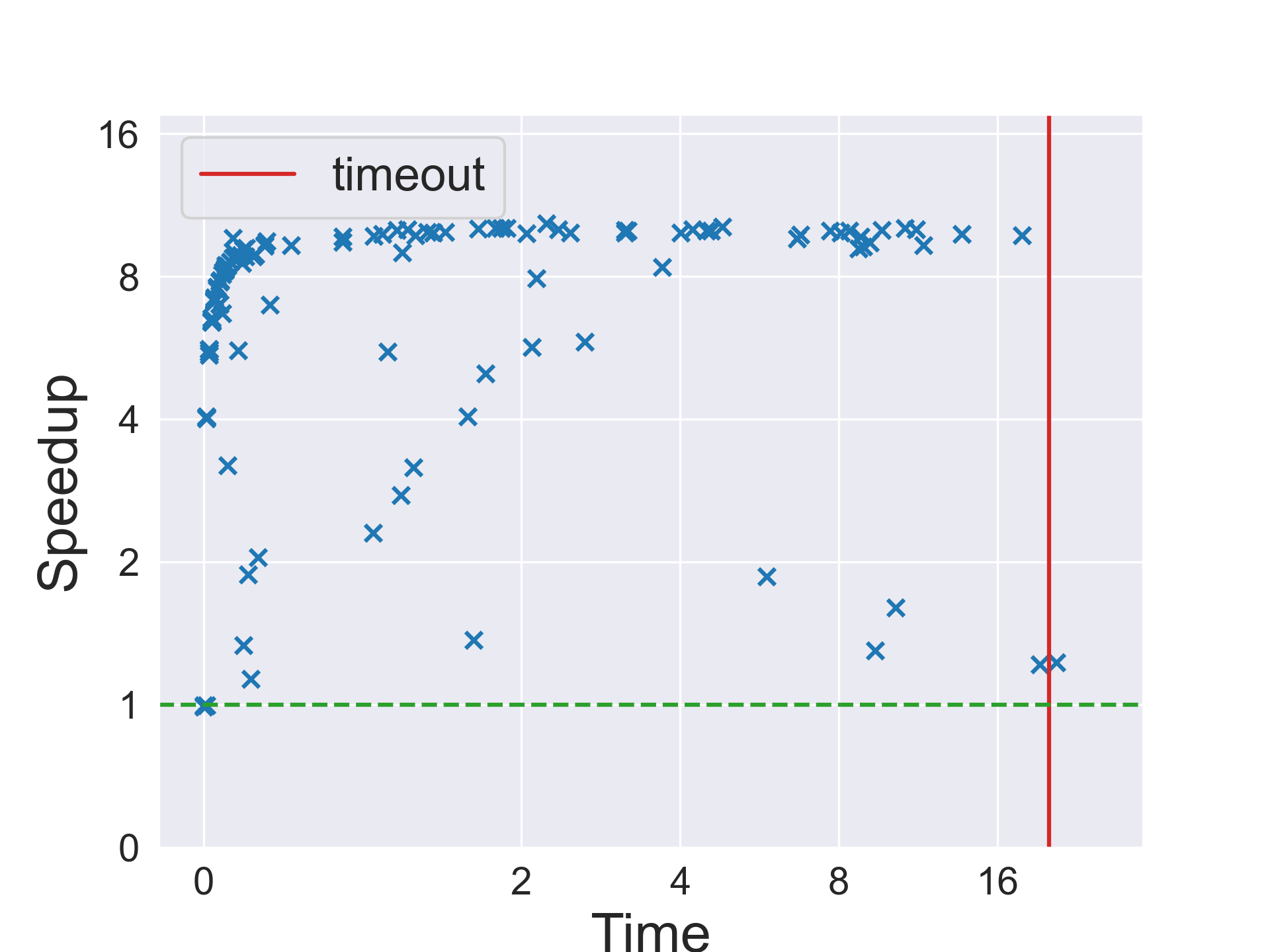}
 \caption{ACAS-XU networks with INT16 quantization}
 \label{fig:scatter11a}
\end{subfigure}
\hspace{1mm}
\begin{subfigure}[b]{0.49\textwidth}
 \includegraphics[width=0.8\textwidth]{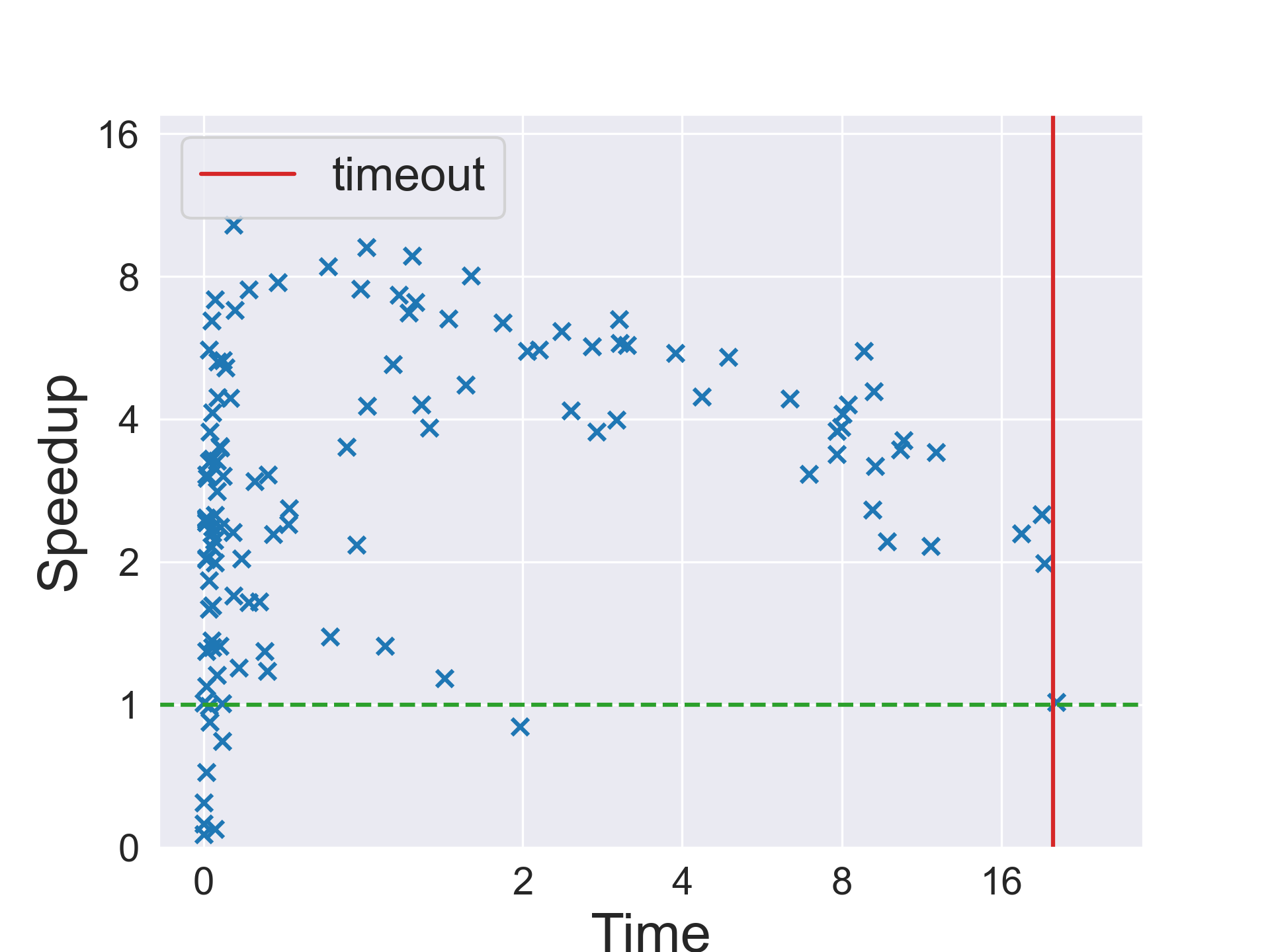}
 \caption{ACAS-XU networks with INT8 quantization}
 \label{fig:scatter11b}
\end{subfigure}

\caption{\Tool{} speedup for the verification of global ACAS-XU properties.}
\vspace{-0.2in}
\label{fig:scatter12}
\end{figure} 


We show that \Tool{} is effective in speeding up the state-of-the-art verifier RefineZono \cite{singh2019boosting} when verifying global properties. This baseline employs input splitting based on a strong branching strategy. 
Figure~\ref{fig:scatter12} presents the speedup achieved by \Tool{} over this baseline. Overall, \Tool{} achieves a 9.5x speedup in the int16 quantization case and a 3.1x speedup in the int8 quantization case. Previous work has observed that ACAS-XU properties take a large number of splits with most analyzers. For the int16 case, the average value of $|T^{\perturbedNetwork}_f|$ with our baseline is 285.4. The baseline takes a total of 305s time for verifying cases that have large tree $|T^{\perturbedNetwork}_f| > 5$. \Tool{} verifies those properties in 32s. 

\new{
\vspace{-0.07in}

\subsection{Random Weight Perturbations}
 \begin{wraptable}{r}{70mm}
\centering\tablesize
\caption{\Tool{} speedup on uniform random weight perturbations}
\vspace{-.15in}

{

\begin{tabular}
{c|c|c|c}
\toprule
& \multicolumn{3}{|c}{Weight perturbation}\\
Model & 2\% & 5\% & 10\% \\
\midrule
\neta & 1.65x & 1.57x & 0.87x \\
\netb & 1.97x & 0.57x & 0.57x \\
\netc & 1.29x & 1.09x & 0.69x  \\
\netd & 1.42x & 1.08x & 0.96x \\
\nete & 1.32x & 1.06x & 1.05x \\
\bottomrule
\end{tabular}
}
\label{table:random}
\end{wraptable}

In this experiment, we stress-test IVAN for incremental verification by applying uniform random perturbation on the DNN weights. Here, we perturbed each weight in the network by 2\%, 5\%, and 10\%. Even the smallest of these perturbations (2\%) to each of the weights already induces larger overall changes in the network than those caused by practical methods such as quantization, pruning, and fine-tuning that often non-uniformly affect specific layers of the network. For each network and perturbation, we run \Tool{} and the baseline to verify 100 properties and compute the average speedup of \Tool{} over the baseline.

Table~\ref{table:random} presents the average speedups obtained by \Tool{}. Each row shows the \Tool{} speedup under various weight perturbations for a particular network. We see that in most cases \Tool{} speedup reduces as the perturbations to the weights increase. It is because the specification tree for the perturbed network is no longer similar to the one for the original network. If \Tool{} is used in such cases, it uses suboptimal splits leading to higher verification time.
}
 


\vspace{-0.07in}
\section{Related Work}
\label{sec:rel}
\noindent \textbf{\bf Neural Network Verification:} Recent works introduced several techniques for verifying properties of neural networks~\cite{tjeng2017evaluating, bunel2020branch, ehlers2017formal, DBLP:conf/cav/KatzBDJK17, anderson2020strong, Pailoor19, wang2018neurify, wang2021beta, https://doi.org/10.48550/arxiv.2209.05446, 10.1145/3563324, yang2022provable}. 
For BaB-based complete verification, previous works used distinct strategies for ReLU splitting. \citet{ehlers2017formal} and \citet{katz2017reluplex} used random ReLU selection for splitting. \citet{wang2018neurify} computes scores based on gradient information to rank ambiguous ReLU nodes. Similarly, \citet{bunel2020branch} compute scores based on a formula based on the estimation equations in \cite{wong18convex}. \citet{ijcai2021p351} use coefficients of zonotopes for these scores.

\noindent \textbf{\bf Incremental Neural Network Verification:} \citet{DBLP:conf/cav/FischerSDSV22} presented the concept of sharing certificates between specifications. They reuse the proof for $L_\infty$ specification computed with abstract interpretation-based analyzers based on the notion of proof templates, for faster verification of patch and geometric perturbations.  \citet{DBLP:journals/pacmpl/UgareSM22} showed that the reusing of proof is possible between networks. It uses a similar concept of network adaptable proof templates. It is limited to certain properties (patch, geometric, $L_0$) and works with abstract interpretation-based incomplete verifiers. \citet{DBLP:journals/corr/abs-2106-12732} considers incremental incomplete verification of relatively small DNNs with last-layer perturbation. All of these works cannot handle incremental and complete verification of diverse specifications, which is the focus of our work

\new{\noindent \textbf{\bf Differential Neural Network Verification:} } ReluDiff \cite{DBLP:conf/icse/PaulsenWW20} presented the concept of differential neural network verification. The follow-up work of \cite{DBLP:conf/kbse/PaulsenWWW20} made it more scalable. ReluDiff can be used for bounding the difference in the output of an original network and a perturbed network corresponding to an input region. ReluDiff uses input splitting to perform complete differential verification. Our method is complementary to ReluDiff and can be used to speed up the complete differential verification with multiple perturbed networks, performing it incrementally. 
\citet{cheng2020continuous} reuse previous interval analysis results for the verification of the fully-connected networks where the specifications are only defined over the last linear layer of an updated network. In contrast, \Tool{} performs end-to-end verification and operates on a more general class of networks, specifications, and perturbations. 

\noindent \textbf{\bf Warm Starting Mixed Integer Linear Programming (MILP) Solvers:} State-of-the-art MILP solvers such as GUROBI \cite{gurobi2018} and CPLEX \cite{cplex2009v12} support warm starting that can accelerate the optimization performance. MILP can warm start based on initial solutions that are close to the optimal solution. This allows MILP solvers to avoid exploring paths that do not improve on the provided initial solution and can help the solver to converge faster. The exact implementation details of these closed-sourced commercial solvers are unavailable. Regardless, our experiments with MILP warm starting of GUROBI for incremental DNN verification showed insignificant speedup.  


\noindent \textbf{\bf Incremental Program Verification:} Incremental verification has improved the scalability of  traditional program verification to an industrial scale \cite{10.1145/2465449.2465456, Lachnech-etal:TACAS01, DBLP:conf/lics/OHearn18, DBLP:conf/pldi/0002CS21}. Incremental program analysis tasks reuse partial results \cite{5306334}, constraints \cite{10.1145/2393596.2393665} and precision information \cite{10.1145/2491411.2491429} from previous runs for faster analysis of individual commits.  
\new{
Frequently, the changes made by the program are limited to a small portion of the overall program (and its analysis requires significant  attention to the impact on control flow). whereas DNN updates typically alter the weights of multiple layers throughout the network (but with no impact on control flow). Therefore, incremental DNN verification presents a distinct challenge compared to the incremental verification of programs.
}

\noindent \textbf{\bf Incremental SMT Solvers:} Modern SMT solvers such as Z3 \cite{10.5555/1792734.1792766} and CVC5 \cite{DBLP:conf/tacas/BarbosaBBKLMMMN22} during constraint solving learn lemmas, which are later reused to solve similar problems. The incrementality of these solvers is restricted to the addition or deletion of constraints. They do not consider reuse in cases when the constraints are perturbed as in our case. 

\vspace{-0.1in}
\section{Conclusion}\label{sec:concl}

Current complete approaches for DNN verification re-run the verification every time the network is modified.
In this paper, we presented \Tool{}, the first general, incremental, and complete DNN verifier.
\Tool{} captures the trace of the BaB-based complete verification through the specification tree. 
We evaluated our \Tool{} on combinations of networks, properties, and updates. 
\Tool{} achieves up to \upto{} speedup and geometric mean speedup of \geomean{} in verifying DNN properties.

\section*{ACKNOWLEDGMENTS}
We thank the anonymous reviewers for their comments. This research was supported in part by NSF Grants No. CCF-1846354, CCF-1956374, CCF-2008883, CCF-2217144, CCF-2238079, CNS-2148583, USDA NIFA Grant No. NIFA-2024827 and Qualcomm innovation fellowship.

\bibliographystyle{ACM-Reference-Format}
\bibliography{paper}


\begin{thebibliography}{70}


\ifx \showCODEN    \undefined \def \showCODEN     #1{\unskip}     \fi
\ifx \showDOI      \undefined \def \showDOI       #1{#1}\fi
\ifx \showISBNx    \undefined \def \showISBNx     #1{\unskip}     \fi
\ifx \showISBNxiii \undefined \def \showISBNxiii  #1{\unskip}     \fi
\ifx \showISSN     \undefined \def \showISSN      #1{\unskip}     \fi
\ifx \showLCCN     \undefined \def \showLCCN      #1{\unskip}     \fi
\ifx \shownote     \undefined \def \shownote      #1{#1}          \fi
\ifx \showarticletitle \undefined \def \showarticletitle #1{#1}   \fi
\ifx \showURL      \undefined \def \showURL       {\relax}        \fi
\providecommand\bibfield[2]{#2}
\providecommand\bibinfo[2]{#2}
\providecommand\natexlab[1]{#1}
\providecommand\showeprint[2][]{arXiv:#2}

\bibitem[Akiba et~al\mbox{.}(2019)]%
        {optuna_2019}
\bibfield{author}{\bibinfo{person}{Takuya Akiba}, \bibinfo{person}{Shotaro
  Sano}, \bibinfo{person}{Toshihiko Yanase}, \bibinfo{person}{Takeru Ohta},
  {and} \bibinfo{person}{Masanori Koyama}.} \bibinfo{year}{2019}\natexlab{}.
\newblock \showarticletitle{Optuna: A Next-generation Hyperparameter
  Optimization Framework}. In \bibinfo{booktitle}{\emph{Proceedings of the 25rd
  {ACM} {SIGKDD} International Conference on Knowledge Discovery and Data
  Mining}}.
\newblock


\bibitem[Albarghouthi(2021)]%
        {albarghouthi-book}
\bibfield{author}{\bibinfo{person}{Aws Albarghouthi}.}
  \bibinfo{year}{2021}\natexlab{}.
\newblock \bibinfo{booktitle}{\emph{Introduction to Neural Network
  Verification}}.
\newblock \bibinfo{publisher}{verifieddeeplearning.com}.
\newblock
\showeprint[arxiv]{2109.10317}~[cs.LG]
\newblock
\shownote{\url{http://verifieddeeplearning.com}}.


\bibitem[Amato et~al\mbox{.}(2013)]%
        {AMATO201347}
\bibfield{author}{\bibinfo{person}{Filippo Amato}, \bibinfo{person}{Alberto
  López}, \bibinfo{person}{Eladia~María Peña-Méndez}, \bibinfo{person}{Petr
  Vaňhara}, \bibinfo{person}{Aleš Hampl}, {and} \bibinfo{person}{Josef
  Havel}.} \bibinfo{year}{2013}\natexlab{}.
\newblock \showarticletitle{Artificial neural networks in medical diagnosis}.
\newblock \bibinfo{journal}{\emph{Journal of Applied Biomedicine}}
  \bibinfo{volume}{11}, \bibinfo{number}{2} (\bibinfo{year}{2013}).
\newblock


\bibitem[Anderson et~al\mbox{.}(2019)]%
        {Pailoor19}
\bibfield{author}{\bibinfo{person}{Greg Anderson}, \bibinfo{person}{Shankara
  Pailoor}, \bibinfo{person}{Isil Dillig}, {and} \bibinfo{person}{Swarat
  Chaudhuri}.} \bibinfo{year}{2019}\natexlab{}.
\newblock \showarticletitle{Optimization and Abstraction: A Synergistic
  Approach for Analyzing Neural Network Robustness}. In
  \bibinfo{booktitle}{\emph{Proc. Programming Language Design and
  Implementation (PLDI)}}.
\newblock


\bibitem[Anderson et~al\mbox{.}(2020)]%
        {anderson2020strong}
\bibfield{author}{\bibinfo{person}{Ross Anderson}, \bibinfo{person}{Joey
  Huchette}, \bibinfo{person}{Will Ma}, \bibinfo{person}{Christian
  Tjandraatmadja}, {and} \bibinfo{person}{Juan~Pablo Vielma}.}
  \bibinfo{year}{2020}\natexlab{}.
\newblock \showarticletitle{Strong mixed-integer programming formulations for
  trained neural networks}.
\newblock \bibinfo{journal}{\emph{Mathematical Programming}}
  (\bibinfo{year}{2020}).
\newblock


\bibitem[Bak et~al\mbox{.}(2021)]%
        {DBLP:journals/corr/abs-2109-00498}
\bibfield{author}{\bibinfo{person}{Stanley Bak}, \bibinfo{person}{Changliu
  Liu}, {and} \bibinfo{person}{Taylor~T. Johnson}.}
  \bibinfo{year}{2021}\natexlab{}.
\newblock \showarticletitle{The Second International Verification of Neural
  Networks Competition {(VNN-COMP} 2021): Summary and Results}.
\newblock \bibinfo{journal}{\emph{CoRR}}  \bibinfo{volume}{abs/2109.00498}
  (\bibinfo{year}{2021}).
\newblock
\showeprint[arXiv]{2109.00498}
\urldef\tempurl%
\url{https://arxiv.org/abs/2109.00498}
\showURL{%
\tempurl}


\bibitem[Bak et~al\mbox{.}(2020)]%
        {DBLP:conf/cav/BakTHJ20}
\bibfield{author}{\bibinfo{person}{Stanley Bak}, \bibinfo{person}{Hoang{-}Dung
  Tran}, \bibinfo{person}{Kerianne Hobbs}, {and} \bibinfo{person}{Taylor~T.
  Johnson}.} \bibinfo{year}{2020}\natexlab{}.
\newblock \showarticletitle{Improved Geometric Path Enumeration for Verifying
  ReLU Neural Networks}. In \bibinfo{booktitle}{\emph{Computer Aided
  Verification - 32nd International Conference, {CAV} 2020, Los Angeles, CA,
  USA, July 21-24, 2020, Proceedings, Part {I}}}
  \emph{(\bibinfo{series}{Lecture Notes in Computer Science},
  Vol.~\bibinfo{volume}{12224})},
  \bibfield{editor}{\bibinfo{person}{Shuvendu~K. Lahiri} {and}
  \bibinfo{person}{Chao Wang}} (Eds.). \bibinfo{publisher}{Springer},
  \bibinfo{pages}{66--96}.
\newblock
\urldef\tempurl%
\url{https://doi.org/10.1007/978-3-030-53288-8\_4}
\showDOI{\tempurl}


\bibitem[Balunovic and Vechev(2020)]%
        {balunovic2020Adversarial}
\bibfield{author}{\bibinfo{person}{Mislav Balunovic} {and}
  \bibinfo{person}{Martin Vechev}.} \bibinfo{year}{2020}\natexlab{}.
\newblock \showarticletitle{Adversarial Training and Provable Defenses:
  Bridging the Gap}. In \bibinfo{booktitle}{\emph{International Conference on
  Learning Representations}}.
\newblock
\urldef\tempurl%
\url{https://openreview.net/forum?id=SJxSDxrKDr}
\showURL{%
\tempurl}


\bibitem[Barbosa et~al\mbox{.}(2022)]%
        {DBLP:conf/tacas/BarbosaBBKLMMMN22}
\bibfield{author}{\bibinfo{person}{Haniel Barbosa}, \bibinfo{person}{Clark~W.
  Barrett}, \bibinfo{person}{Martin Brain}, \bibinfo{person}{Gereon Kremer},
  \bibinfo{person}{Hanna Lachnitt}, \bibinfo{person}{Makai Mann},
  \bibinfo{person}{Abdalrhman Mohamed}, \bibinfo{person}{Mudathir Mohamed},
  \bibinfo{person}{Aina Niemetz}, \bibinfo{person}{Andres N{\"{o}}tzli},
  \bibinfo{person}{Alex Ozdemir}, \bibinfo{person}{Mathias Preiner},
  \bibinfo{person}{Andrew Reynolds}, \bibinfo{person}{Ying Sheng},
  \bibinfo{person}{Cesare Tinelli}, {and} \bibinfo{person}{Yoni Zohar}.}
  \bibinfo{year}{2022}\natexlab{}.
\newblock \showarticletitle{cvc5: {A} Versatile and Industrial-Strength {SMT}
  Solver}. In \bibinfo{booktitle}{\emph{Tools and Algorithms for the
  Construction and Analysis of Systems - 28th International Conference, {TACAS}
  2022, Held as Part of the European Joint Conferences on Theory and Practice
  of Software, {ETAPS} 2022, Munich, Germany, April 2-7, 2022, Proceedings,
  Part {I}}} \emph{(\bibinfo{series}{Lecture Notes in Computer Science},
  Vol.~\bibinfo{volume}{13243})}, \bibfield{editor}{\bibinfo{person}{Dana
  Fisman} {and} \bibinfo{person}{Grigore Rosu}} (Eds.).
  \bibinfo{publisher}{Springer}, \bibinfo{pages}{415--442}.
\newblock
\urldef\tempurl%
\url{https://doi.org/10.1007/978-3-030-99524-9\_24}
\showDOI{\tempurl}


\bibitem[Beyer et~al\mbox{.}(2013)]%
        {10.1145/2491411.2491429}
\bibfield{author}{\bibinfo{person}{Dirk Beyer}, \bibinfo{person}{Stefan
  L\"{o}we}, \bibinfo{person}{Evgeny Novikov}, \bibinfo{person}{Andreas
  Stahlbauer}, {and} \bibinfo{person}{Philipp Wendler}.}
  \bibinfo{year}{2013}\natexlab{}.
\newblock \showarticletitle{Precision Reuse for Efficient Regression
  Verification}. In \bibinfo{booktitle}{\emph{Proceedings of the 2013 9th Joint
  Meeting on Foundations of Software Engineering}} (Saint Petersburg, Russia)
  \emph{(\bibinfo{series}{ESEC/FSE 2013})}. \bibinfo{publisher}{Association for
  Computing Machinery}, \bibinfo{address}{New York, NY, USA},
  \bibinfo{pages}{389–399}.
\newblock
\showISBNx{9781450322379}
\urldef\tempurl%
\url{https://doi.org/10.1145/2491411.2491429}
\showDOI{\tempurl}


\bibitem[Blalock et~al\mbox{.}(2020)]%
        {DBLP:conf/mlsys/BlalockOFG20}
\bibfield{author}{\bibinfo{person}{Davis~W. Blalock}, \bibinfo{person}{Jose
  Javier~Gonzalez Ortiz}, \bibinfo{person}{Jonathan Frankle}, {and}
  \bibinfo{person}{John~V. Guttag}.} \bibinfo{year}{2020}\natexlab{}.
\newblock \showarticletitle{What is the State of Neural Network Pruning?}. In
  \bibinfo{booktitle}{\emph{Proceedings of Machine Learning and Systems 2020,
  MLSys 2020, Austin, TX, USA, March 2-4, 2020}}.
\newblock


\bibitem[Bojarski et~al\mbox{.}(2016)]%
        {bojarski2016end}
\bibfield{author}{\bibinfo{person}{Mariusz Bojarski}, \bibinfo{person}{Davide
  Del~Testa}, \bibinfo{person}{Daniel Dworakowski}, \bibinfo{person}{Bernhard
  Firner}, \bibinfo{person}{Beat Flepp}, \bibinfo{person}{Prasoon Goyal},
  \bibinfo{person}{Lawrence~D Jackel}, \bibinfo{person}{Mathew Monfort},
  \bibinfo{person}{Urs Muller}, \bibinfo{person}{Jiakai Zhang},
  {et~al\mbox{.}}} \bibinfo{year}{2016}\natexlab{}.
\newblock \showarticletitle{End to end learning for self-driving cars}.
\newblock \bibinfo{journal}{\emph{arXiv preprint arXiv:1604.07316}}
  (\bibinfo{year}{2016}).
\newblock


\bibitem[Bunel et~al\mbox{.}(2020b)]%
        {bunel2020branch}
\bibfield{author}{\bibinfo{person}{Rudy Bunel}, \bibinfo{person}{Jingyue Lu},
  \bibinfo{person}{Ilker Turkaslan}, \bibinfo{person}{Pushmeet Kohli},
  \bibinfo{person}{P Torr}, {and} \bibinfo{person}{P Mudigonda}.}
  \bibinfo{year}{2020}\natexlab{b}.
\newblock \showarticletitle{Branch and bound for piecewise linear neural
  network verification}.
\newblock \bibinfo{journal}{\emph{Journal of Machine Learning Research}}
  \bibinfo{volume}{21}, \bibinfo{number}{2020} (\bibinfo{year}{2020}).
\newblock


\bibitem[Bunel et~al\mbox{.}(2020a)]%
        {bunel2020efficient}
\bibfield{author}{\bibinfo{person}{Rudy~R Bunel}, \bibinfo{person}{Oliver
  Hinder}, \bibinfo{person}{Srinadh Bhojanapalli}, {and}
  \bibinfo{person}{Krishnamurthy Dvijotham}.} \bibinfo{year}{2020}\natexlab{a}.
\newblock \showarticletitle{An efficient nonconvex reformulation of stagewise
  convex optimization problems}.
\newblock \bibinfo{journal}{\emph{Advances in Neural Information Processing
  Systems}}  \bibinfo{volume}{33} (\bibinfo{year}{2020}).
\newblock


\bibitem[Chen et~al\mbox{.}(2022)]%
        {chen2022robust}
\bibfield{author}{\bibinfo{person}{Jiefeng Chen}, \bibinfo{person}{Yixuan Li},
  \bibinfo{person}{Xi Wu}, \bibinfo{person}{Yingyu Liang}, {and}
  \bibinfo{person}{Somesh Jha}.} \bibinfo{year}{2022}\natexlab{}.
\newblock \showarticletitle{Robust Out-of-distribution Detection for Neural
  Networks}. In \bibinfo{booktitle}{\emph{AAAI-22 Workshop on Adversarial
  Machine Learning and Beyond}}.
\newblock


\bibitem[Cheng and Yan(2020)]%
        {cheng2020continuous}
\bibfield{author}{\bibinfo{person}{Chih-Hong Cheng} {and}
  \bibinfo{person}{Rongjie Yan}.} \bibinfo{year}{2020}\natexlab{}.
\newblock \bibinfo{title}{Continuous Safety Verification of Neural Networks}.
\newblock
\newblock
\showeprint[arxiv]{2010.05689}~[cs.LG]


\bibitem[Cplex(2009)]%
        {cplex2009v12}
\bibfield{author}{\bibinfo{person}{IBM~ILOG Cplex}.}
  \bibinfo{year}{2009}\natexlab{}.
\newblock \showarticletitle{V12. 1: User’s Manual for CPLEX}.
\newblock \bibinfo{journal}{\emph{International Business Machines Corporation}}
  \bibinfo{volume}{46}, \bibinfo{number}{53} (\bibinfo{year}{2009}),
  \bibinfo{pages}{157}.
\newblock


\bibitem[De~Moura and Bj\o{}rner(2008)]%
        {10.5555/1792734.1792766}
\bibfield{author}{\bibinfo{person}{Leonardo De~Moura} {and}
  \bibinfo{person}{Nikolaj Bj\o{}rner}.} \bibinfo{year}{2008}\natexlab{}.
\newblock \showarticletitle{Z3: An Efficient SMT Solver}. In
  \bibinfo{booktitle}{\emph{Proceedings of the Theory and Practice of Software,
  14th International Conference on Tools and Algorithms for the Construction
  and Analysis of Systems}} (Budapest, Hungary)
  \emph{(\bibinfo{series}{TACAS'08/ETAPS'08})}.
  \bibinfo{publisher}{Springer-Verlag}, \bibinfo{address}{Berlin, Heidelberg},
  \bibinfo{pages}{337–340}.
\newblock
\showISBNx{3540787992}


\bibitem[Dong et~al\mbox{.}(2018)]%
        {Dong_2018_CVPR}
\bibfield{author}{\bibinfo{person}{Yinpeng Dong}, \bibinfo{person}{Fangzhou
  Liao}, \bibinfo{person}{Tianyu Pang}, \bibinfo{person}{Hang Su},
  \bibinfo{person}{Jun Zhu}, \bibinfo{person}{Xiaolin Hu}, {and}
  \bibinfo{person}{Jianguo Li}.} \bibinfo{year}{2018}\natexlab{}.
\newblock \showarticletitle{Boosting Adversarial Attacks With Momentum}. In
  \bibinfo{booktitle}{\emph{Proceedings of the IEEE Conference on Computer
  Vision and Pattern Recognition (CVPR)}}.
\newblock


\bibitem[Dutta et~al\mbox{.}(2017)]%
        {DBLP:journals/corr/abs-1709-09130}
\bibfield{author}{\bibinfo{person}{Souradeep Dutta}, \bibinfo{person}{Susmit
  Jha}, \bibinfo{person}{Sriram Sankaranarayanan}, {and}
  \bibinfo{person}{Ashish Tiwari}.} \bibinfo{year}{2017}\natexlab{}.
\newblock \showarticletitle{Output Range Analysis for Deep Neural Networks}.
\newblock \bibinfo{journal}{\emph{CoRR}}  \bibinfo{volume}{abs/1709.09130}
  (\bibinfo{year}{2017}).
\newblock
\showeprint[arXiv]{1709.09130}
\urldef\tempurl%
\url{http://arxiv.org/abs/1709.09130}
\showURL{%
\tempurl}


\bibitem[Ehlers(2017)]%
        {ehlers2017formal}
\bibfield{author}{\bibinfo{person}{Ruediger Ehlers}.}
  \bibinfo{year}{2017}\natexlab{}.
\newblock \showarticletitle{Formal verification of piece-wise linear
  feed-forward neural networks}. In \bibinfo{booktitle}{\emph{International
  Symposium on Automated Technology for Verification and Analysis}}.
\newblock


\bibitem[Ferrari et~al\mbox{.}(2022)]%
        {ferrari2022complete}
\bibfield{author}{\bibinfo{person}{Claudio Ferrari},
  \bibinfo{person}{Mark~Niklas Mueller}, \bibinfo{person}{Nikola
  Jovanovi{\'c}}, {and} \bibinfo{person}{Martin Vechev}.}
  \bibinfo{year}{2022}\natexlab{}.
\newblock \showarticletitle{Complete Verification via Multi-Neuron Relaxation
  Guided Branch-and-Bound}. In \bibinfo{booktitle}{\emph{International
  Conference on Learning Representations}}.
\newblock
\urldef\tempurl%
\url{https://openreview.net/forum?id=l_amHf1oaK}
\showURL{%
\tempurl}


\bibitem[Fischer et~al\mbox{.}(2022)]%
        {DBLP:conf/cav/FischerSDSV22}
\bibfield{author}{\bibinfo{person}{Marc Fischer}, \bibinfo{person}{Christian
  Sprecher}, \bibinfo{person}{Dimitar~I. Dimitrov}, \bibinfo{person}{Gagandeep
  Singh}, {and} \bibinfo{person}{Martin~T. Vechev}.}
  \bibinfo{year}{2022}\natexlab{}.
\newblock \showarticletitle{Shared Certificates for Neural Network
  Verification}. In \bibinfo{booktitle}{\emph{Computer Aided Verification -
  34th International Conference, {CAV} 2022, Haifa, Israel, August 7-10, 2022,
  Proceedings, Part {I}}} \emph{(\bibinfo{series}{Lecture Notes in Computer
  Science}, Vol.~\bibinfo{volume}{13371})},
  \bibfield{editor}{\bibinfo{person}{Sharon Shoham} {and}
  \bibinfo{person}{Yakir Vizel}} (Eds.). \bibinfo{publisher}{Springer},
  \bibinfo{pages}{127--148}.
\newblock
\urldef\tempurl%
\url{https://doi.org/10.1007/978-3-031-13185-1\_7}
\showDOI{\tempurl}


\bibitem[Fromherz et~al\mbox{.}(2021)]%
        {fromherz2021fast}
\bibfield{author}{\bibinfo{person}{Aymeric Fromherz}, \bibinfo{person}{Klas
  Leino}, \bibinfo{person}{Matt Fredrikson}, \bibinfo{person}{Bryan Parno},
  {and} \bibinfo{person}{Corina Pasareanu}.} \bibinfo{year}{2021}\natexlab{}.
\newblock \showarticletitle{Fast Geometric Projections for Local Robustness
  Certification}. In \bibinfo{booktitle}{\emph{International Conference on
  Learning Representations}}.
\newblock
\urldef\tempurl%
\url{https://openreview.net/forum?id=zWy1uxjDdZJ}
\showURL{%
\tempurl}


\bibitem[Fu and Li(2022)]%
        {fu2022sound}
\bibfield{author}{\bibinfo{person}{Feisi Fu} {and} \bibinfo{person}{Wenchao
  Li}.} \bibinfo{year}{2022}\natexlab{}.
\newblock \showarticletitle{Sound and Complete Neural Network Repair with
  Minimality and Locality Guarantees}. In
  \bibinfo{booktitle}{\emph{International Conference on Learning
  Representations}}.
\newblock
\urldef\tempurl%
\url{https://openreview.net/forum?id=xS8AMYiEav3}
\showURL{%
\tempurl}


\bibitem[Gehr et~al\mbox{.}(2018)]%
        {gehr2018ai2}
\bibfield{author}{\bibinfo{person}{Timon Gehr}, \bibinfo{person}{Matthew
  Mirman}, \bibinfo{person}{Dana Drachsler-Cohen}, \bibinfo{person}{Petar
  Tsankov}, \bibinfo{person}{Swarat Chaudhuri}, {and} \bibinfo{person}{Martin
  Vechev}.} \bibinfo{year}{2018}\natexlab{}.
\newblock \showarticletitle{Ai2: Safety and robustness certification of neural
  networks with abstract interpretation}. In \bibinfo{booktitle}{\emph{2018
  IEEE Symposium on Security and Privacy (SP)}}.
\newblock


\bibitem[Gholami et~al\mbox{.}(2021)]%
        {DBLP:journals/corr/abs-2103-13630}
\bibfield{author}{\bibinfo{person}{Amir Gholami}, \bibinfo{person}{Sehoon Kim},
  \bibinfo{person}{Zhen Dong}, \bibinfo{person}{Zhewei Yao},
  \bibinfo{person}{Michael~W. Mahoney}, {and} \bibinfo{person}{Kurt Keutzer}.}
  \bibinfo{year}{2021}\natexlab{}.
\newblock \showarticletitle{A Survey of Quantization Methods for Efficient
  Neural Network Inference}.
\newblock \bibinfo{journal}{\emph{CoRR}}  \bibinfo{volume}{abs/2103.13630}
  (\bibinfo{year}{2021}).
\newblock
\showeprint[arxiv]{2103.13630}


\bibitem[Gokhale et~al\mbox{.}(2021)]%
        {GokhaleAKTBY:21}
\bibfield{author}{\bibinfo{person}{Tejas Gokhale}, \bibinfo{person}{Rushil
  Anirudh}, \bibinfo{person}{Bhavya Kailkhura}, \bibinfo{person}{Jayaraman~J.
  Thiagarajan}, \bibinfo{person}{Chitta Baral}, {and} \bibinfo{person}{Yezhou
  Yang}.} \bibinfo{year}{2021}\natexlab{}.
\newblock \showarticletitle{Attribute-Guided Adversarial Training for
  Robustness to Natural Perturbations}. In \bibinfo{booktitle}{\emph{{AAAI}}}.
  \bibinfo{publisher}{{AAAI} Press}, \bibinfo{pages}{7574--7582}.
\newblock


\bibitem[{Gurobi Optimization, LLC}(2018)]%
        {gurobi2018}
\bibfield{author}{\bibinfo{person}{{Gurobi Optimization, LLC}}.}
  \bibinfo{year}{2018}\natexlab{}.
\newblock \bibinfo{title}{Gurobi Optimizer Reference Manual}.
\newblock
\newblock


\bibitem[Henriksen and Lomuscio(2021)]%
        {ijcai2021p351}
\bibfield{author}{\bibinfo{person}{Patrick Henriksen} {and}
  \bibinfo{person}{Alessio Lomuscio}.} \bibinfo{year}{2021}\natexlab{}.
\newblock \showarticletitle{DEEPSPLIT: An Efficient Splitting Method for Neural
  Network Verification via Indirect Effect Analysis}. In
  \bibinfo{booktitle}{\emph{Proceedings of the Thirtieth International Joint
  Conference on Artificial Intelligence, {IJCAI-21}}},
  \bibfield{editor}{\bibinfo{person}{Zhi-Hua Zhou}} (Ed.).
  \bibinfo{publisher}{International Joint Conferences on Artificial
  Intelligence Organization}, \bibinfo{pages}{2549--2555}.
\newblock
\urldef\tempurl%
\url{https://doi.org/10.24963/ijcai.2021/351}
\showDOI{\tempurl}
\newblock
\shownote{Main Track}.


\bibitem[Johnson et~al\mbox{.}(2013)]%
        {10.1145/2465449.2465456}
\bibfield{author}{\bibinfo{person}{Kenneth Johnson}, \bibinfo{person}{Radu
  Calinescu}, {and} \bibinfo{person}{Shinji Kikuchi}.}
  \bibinfo{year}{2013}\natexlab{}.
\newblock \showarticletitle{An Incremental Verification Framework for
  Component-Based Software Systems}. In \bibinfo{booktitle}{\emph{Proceedings
  of the 16th International ACM Sigsoft Symposium on Component-Based Software
  Engineering}} (Vancouver, British Columbia, Canada)
  \emph{(\bibinfo{series}{CBSE '13})}. \bibinfo{publisher}{Association for
  Computing Machinery}, \bibinfo{address}{New York, NY, USA},
  \bibinfo{pages}{33–42}.
\newblock
\showISBNx{9781450321228}
\urldef\tempurl%
\url{https://doi.org/10.1145/2465449.2465456}
\showDOI{\tempurl}


\bibitem[Julian et~al\mbox{.}(2018)]%
        {acasxu:18}
\bibfield{author}{\bibinfo{person}{Kyle~D. Julian}, \bibinfo{person}{Mykel~J.
  Kochenderfer}, {and} \bibinfo{person}{Michael~P. Owen}.}
  \bibinfo{year}{2018}\natexlab{}.
\newblock \showarticletitle{Deep Neural Network Compression for Aircraft
  Collision Avoidance Systems}.
\newblock \bibinfo{journal}{\emph{CoRR}}  \bibinfo{volume}{abs/1810.04240}
  (\bibinfo{year}{2018}).
\newblock


\bibitem[Julian et~al\mbox{.}(2019)]%
        {Julian_2019}
\bibfield{author}{\bibinfo{person}{Kyle~D. Julian}, \bibinfo{person}{Mykel~J.
  Kochenderfer}, {and} \bibinfo{person}{Michael~P. Owen}.}
  \bibinfo{year}{2019}\natexlab{}.
\newblock \showarticletitle{Deep Neural Network Compression for Aircraft
  Collision Avoidance Systems}.
\newblock \bibinfo{journal}{\emph{Journal of Guidance, Control, and Dynamics}}
  \bibinfo{volume}{42}, \bibinfo{number}{3} (\bibinfo{date}{mar}
  \bibinfo{year}{2019}), \bibinfo{pages}{598--608}.
\newblock
\urldef\tempurl%
\url{https://doi.org/10.2514/1.g003724}
\showDOI{\tempurl}


\bibitem[Kabaha and Drachsler-Cohen(2022)]%
        {https://doi.org/10.48550/arxiv.2209.05446}
\bibfield{author}{\bibinfo{person}{Anan Kabaha} {and} \bibinfo{person}{Dana
  Drachsler-Cohen}.} \bibinfo{year}{2022}\natexlab{}.
\newblock \bibinfo{title}{Boosting Robustness Verification of Semantic Feature
  Neighborhoods}.
\newblock
\newblock
\urldef\tempurl%
\url{https://doi.org/10.48550/ARXIV.2209.05446}
\showDOI{\tempurl}


\bibitem[Katz et~al\mbox{.}(2017a)]%
        {katz2017reluplex}
\bibfield{author}{\bibinfo{person}{Guy Katz}, \bibinfo{person}{Clark Barrett},
  \bibinfo{person}{David~L Dill}, \bibinfo{person}{Kyle Julian}, {and}
  \bibinfo{person}{Mykel~J Kochenderfer}.} \bibinfo{year}{2017}\natexlab{a}.
\newblock \showarticletitle{Reluplex: An efficient SMT solver for verifying
  deep neural networks}. In \bibinfo{booktitle}{\emph{International Conference
  on Computer Aided Verification}}.
\newblock


\bibitem[Katz et~al\mbox{.}(2017b)]%
        {DBLP:conf/cav/KatzBDJK17}
\bibfield{author}{\bibinfo{person}{Guy Katz}, \bibinfo{person}{Clark~W.
  Barrett}, \bibinfo{person}{David~L. Dill}, \bibinfo{person}{Kyle Julian},
  {and} \bibinfo{person}{Mykel~J. Kochenderfer}.}
  \bibinfo{year}{2017}\natexlab{b}.
\newblock \showarticletitle{Reluplex: An Efficient {SMT} Solver for Verifying
  Deep Neural Networks}. In \bibinfo{booktitle}{\emph{Computer Aided
  Verification - 29th International Conference, {CAV} 2017, Heidelberg,
  Germany, July 24-28, 2017, Proceedings, Part {I}}}
  \emph{(\bibinfo{series}{Lecture Notes in Computer Science},
  Vol.~\bibinfo{volume}{10426})}.
\newblock
\urldef\tempurl%
\url{https://doi.org/10.1007/978-3-319-63387-9\_5}
\showDOI{\tempurl}


\bibitem[Lakhnech et~al\mbox{.}(2001)]%
        {Lachnech-etal:TACAS01}
\bibfield{author}{\bibinfo{person}{Yassine Lakhnech}, \bibinfo{person}{Saddek
  Bensalem}, \bibinfo{person}{Sergey Berezin}, {and} \bibinfo{person}{Sam
  Owre}.} \bibinfo{year}{2001}\natexlab{}.
\newblock \showarticletitle{Incremental Verification by Abstraction}. In
  \bibinfo{booktitle}{\emph{Tools and Algorithms for the Construction and
  Analysis of Systems: 7th International Conference, {TACAS} 2001}},
  \bibfield{editor}{\bibinfo{person}{T.~Margaria} {and}
  \bibinfo{person}{W.~Yi}} (Eds.), Vol.~\bibinfo{volume}{2031}.
  \bibinfo{publisher}{Springer-Verlag}, \bibinfo{address}{Genova, Italy},
  \bibinfo{pages}{98--112}.
\newblock


\bibitem[Laurel et~al\mbox{.}(2021)]%
        {9586276}
\bibfield{author}{\bibinfo{person}{Jacob Laurel}, \bibinfo{person}{Rem Yang},
  \bibinfo{person}{Atharva Sehgal}, \bibinfo{person}{Shubham Ugare}, {and}
  \bibinfo{person}{Sasa Misailovic}.} \bibinfo{year}{2021}\natexlab{}.
\newblock \showarticletitle{Statheros: Compiler for Efficient Low-Precision
  Probabilistic Programming}. In \bibinfo{booktitle}{\emph{Design Automation
  Conference (DAC)}}. \bibinfo{pages}{787--792}.
\newblock


\bibitem[Laurel et~al\mbox{.}(2022)]%
        {10.1145/3563324}
\bibfield{author}{\bibinfo{person}{Jacob Laurel}, \bibinfo{person}{Rem Yang},
  \bibinfo{person}{Shubham Ugare}, \bibinfo{person}{Robert Nagel},
  \bibinfo{person}{Gagandeep Singh}, {and} \bibinfo{person}{Sasa Misailovic}.}
  \bibinfo{year}{2022}\natexlab{}.
\newblock \showarticletitle{A General Construction for Abstract Interpretation
  of Higher-Order Automatic Differentiation}.
\newblock \bibinfo{journal}{\emph{Proc. ACM Program. Lang.}}
  \bibinfo{volume}{6}, \bibinfo{number}{OOPSLA2}, Article
  \bibinfo{articleno}{161} (\bibinfo{date}{oct} \bibinfo{year}{2022}),
  \bibinfo{numpages}{29}~pages.
\newblock
\urldef\tempurl%
\url{https://doi.org/10.1145/3563324}
\showDOI{\tempurl}


\bibitem[Madry et~al\mbox{.}(2017)]%
        {madry2017towards}
\bibfield{author}{\bibinfo{person}{Aleksander Madry},
  \bibinfo{person}{Aleksandar Makelov}, \bibinfo{person}{Ludwig Schmidt},
  \bibinfo{person}{Dimitris Tsipras}, {and} \bibinfo{person}{Adrian Vladu}.}
  \bibinfo{year}{2017}\natexlab{}.
\newblock \showarticletitle{Towards deep learning models resistant to
  adversarial attacks}.
\newblock \bibinfo{journal}{\emph{arXiv preprint arXiv:1706.06083}}
  (\bibinfo{year}{2017}).
\newblock


\bibitem[M{\"u}ller et~al\mbox{.}(2021)]%
        {muller2020neural}
\bibfield{author}{\bibinfo{person}{Christoph M{\"u}ller},
  \bibinfo{person}{Francois Serre}, \bibinfo{person}{Gagandeep Singh},
  \bibinfo{person}{Markus P{\"u}schel}, {and} \bibinfo{person}{Martin Vechev}.}
  \bibinfo{year}{2021}\natexlab{}.
\newblock \showarticletitle{Scaling Polyhedral Neural Network Verification on
  GPUs}.
\newblock \bibinfo{journal}{\emph{Proc. Machine Learning and Systems (MLSys)}}
  (\bibinfo{year}{2021}).
\newblock


\bibitem[O'Hearn(2018)]%
        {DBLP:conf/lics/OHearn18}
\bibfield{author}{\bibinfo{person}{Peter~W. O'Hearn}.}
  \bibinfo{year}{2018}\natexlab{}.
\newblock \showarticletitle{Continuous Reasoning: Scaling the impact of formal
  methods}. In \bibinfo{booktitle}{\emph{Proceedings of the 33rd Annual
  {ACM/IEEE} Symposium on Logic in Computer Science, {LICS} 2018, Oxford, UK,
  July 09-12, 2018}}, \bibfield{editor}{\bibinfo{person}{Anuj Dawar} {and}
  \bibinfo{person}{Erich Gr{\"{a}}del}} (Eds.). \bibinfo{publisher}{{ACM}},
  \bibinfo{pages}{13--25}.
\newblock
\urldef\tempurl%
\url{https://doi.org/10.1145/3209108.3209109}
\showDOI{\tempurl}


\bibitem[Palma et~al\mbox{.}(2021)]%
        {depalma2021scaling}
\bibfield{author}{\bibinfo{person}{Alessandro~De Palma},
  \bibinfo{person}{Harkirat~S. Behl}, \bibinfo{person}{Rudy~R. Bunel},
  \bibinfo{person}{Philip H.~S. Torr}, {and} \bibinfo{person}{M.~Pawan Kumar}.}
  \bibinfo{year}{2021}\natexlab{}.
\newblock \showarticletitle{Scaling the Convex Barrier with Active Sets}. In
  \bibinfo{booktitle}{\emph{9th International Conference on Learning
  Representations, {ICLR} 2021, Virtual Event, Austria, May 3-7, 2021}}.
\newblock


\bibitem[Paulsen et~al\mbox{.}(2020a)]%
        {DBLP:conf/icse/PaulsenWW20}
\bibfield{author}{\bibinfo{person}{Brandon Paulsen}, \bibinfo{person}{Jingbo
  Wang}, {and} \bibinfo{person}{Chao Wang}.} \bibinfo{year}{2020}\natexlab{a}.
\newblock \showarticletitle{ReluDiff: differential verification of deep neural
  networks}. In \bibinfo{booktitle}{\emph{{ICSE} '20: 42nd International
  Conference on Software Engineering, Seoul, South Korea, 27 June - 19 July,
  2020}}.
\newblock
\urldef\tempurl%
\url{https://doi.org/10.1145/3377811.3380337}
\showDOI{\tempurl}


\bibitem[Paulsen et~al\mbox{.}(2020b)]%
        {DBLP:conf/kbse/PaulsenWWW20}
\bibfield{author}{\bibinfo{person}{Brandon Paulsen}, \bibinfo{person}{Jingbo
  Wang}, \bibinfo{person}{Jiawei Wang}, {and} \bibinfo{person}{Chao Wang}.}
  \bibinfo{year}{2020}\natexlab{b}.
\newblock \showarticletitle{{NEURODIFF:} Scalable Differential Verification of
  Neural Networks using Fine-Grained Approximation}. In
  \bibinfo{booktitle}{\emph{35th {IEEE/ACM} International Conference on
  Automated Software Engineering, {ASE} 2020, Melbourne, Australia, September
  21-25, 2020}}.
\newblock
\urldef\tempurl%
\url{https://doi.org/10.1145/3324884.3416560}
\showDOI{\tempurl}


\bibitem[Salman et~al\mbox{.}(2019)]%
        {DBLP:conf/nips/SalmanY0HZ19}
\bibfield{author}{\bibinfo{person}{Hadi Salman}, \bibinfo{person}{Greg Yang},
  \bibinfo{person}{Huan Zhang}, \bibinfo{person}{Cho{-}Jui Hsieh}, {and}
  \bibinfo{person}{Pengchuan Zhang}.} \bibinfo{year}{2019}\natexlab{}.
\newblock \showarticletitle{A Convex Relaxation Barrier to Tight Robustness
  Verification of Neural Networks}. In \bibinfo{booktitle}{\emph{Advances in
  Neural Information Processing Systems 32: Annual Conference on Neural
  Information Processing Systems 2019, NeurIPS 2019, December 8-14, 2019,
  Vancouver, BC, Canada}}.
\newblock


\bibitem[Singh et~al\mbox{.}(2019a)]%
        {singh2019beyond}
\bibfield{author}{\bibinfo{person}{Gagandeep Singh}, \bibinfo{person}{Rupanshu
  Ganvir}, \bibinfo{person}{Markus P{\"u}schel}, {and} \bibinfo{person}{Martin
  Vechev}.} \bibinfo{year}{2019}\natexlab{a}.
\newblock \showarticletitle{Beyond the single neuron convex barrier for neural
  network certification}. In \bibinfo{booktitle}{\emph{Advances in Neural
  Information Processing Systems}}.
\newblock


\bibitem[Singh et~al\mbox{.}(2018)]%
        {singh2018fast}
\bibfield{author}{\bibinfo{person}{Gagandeep Singh}, \bibinfo{person}{Timon
  Gehr}, \bibinfo{person}{Matthew Mirman}, \bibinfo{person}{Markus
  P{\"u}schel}, {and} \bibinfo{person}{Martin Vechev}.}
  \bibinfo{year}{2018}\natexlab{}.
\newblock \showarticletitle{Fast and effective robustness certification}.
\newblock \bibinfo{journal}{\emph{Advances in Neural Information Processing
  Systems}}  \bibinfo{volume}{31} (\bibinfo{year}{2018}).
\newblock


\bibitem[Singh et~al\mbox{.}(2019b)]%
        {singh2019abstract}
\bibfield{author}{\bibinfo{person}{Gagandeep Singh}, \bibinfo{person}{Timon
  Gehr}, \bibinfo{person}{Markus P{\"u}schel}, {and} \bibinfo{person}{Martin
  Vechev}.} \bibinfo{year}{2019}\natexlab{b}.
\newblock \showarticletitle{An abstract domain for certifying neural networks}.
\newblock \bibinfo{journal}{\emph{Proceedings of the ACM on Programming
  Languages}} \bibinfo{volume}{3}, \bibinfo{number}{POPL}
  (\bibinfo{year}{2019}).
\newblock


\bibitem[Singh et~al\mbox{.}(2019c)]%
        {singh2019boosting}
\bibfield{author}{\bibinfo{person}{Gagandeep Singh}, \bibinfo{person}{Timon
  Gehr}, \bibinfo{person}{Markus Püschel}, {and} \bibinfo{person}{Martin
  Vechev}.} \bibinfo{year}{2019}\natexlab{c}.
\newblock \showarticletitle{Boosting Robustness Certification of Neural
  Networks}. In \bibinfo{booktitle}{\emph{International Conference on Learning
  Representations}}.
\newblock


\bibitem[Sotoudeh and Thakur(2019)]%
        {SotoudehT19}
\bibfield{author}{\bibinfo{person}{Matthew Sotoudeh} {and}
  \bibinfo{person}{Aditya~V. Thakur}.} \bibinfo{year}{2019}\natexlab{}.
\newblock \showarticletitle{Computing Linear Restrictions of Neural Networks}.
  In \bibinfo{booktitle}{\emph{Advances in Neural Information Processing
  Systems 32: Annual Conference on Neural Information Processing Systems 2019,
  NeurIPS 2019, December 8-14, 2019, Vancouver, BC, Canada}}.
\newblock


\bibitem[Stein et~al\mbox{.}(2021)]%
        {DBLP:conf/pldi/0002CS21}
\bibfield{author}{\bibinfo{person}{Benno Stein},
  \bibinfo{person}{Bor{-}Yuh~Evan Chang}, {and} \bibinfo{person}{Manu
  Sridharan}.} \bibinfo{year}{2021}\natexlab{}.
\newblock \showarticletitle{Demanded abstract interpretation}. In
  \bibinfo{booktitle}{\emph{{PLDI} '21: 42nd {ACM} {SIGPLAN} International
  Conference on Programming Language Design and Implementation, Virtual Event,
  Canada, June 20-25, 2021}}, \bibfield{editor}{\bibinfo{person}{Stephen~N.
  Freund} {and} \bibinfo{person}{Eran Yahav}} (Eds.).
  \bibinfo{publisher}{{ACM}}, \bibinfo{pages}{282--295}.
\newblock
\urldef\tempurl%
\url{https://doi.org/10.1145/3453483.3454044}
\showDOI{\tempurl}


\bibitem[Szegedy et~al\mbox{.}(2014)]%
        {DBLP:journals/corr/SzegedyZSBEGF13}
\bibfield{author}{\bibinfo{person}{Christian Szegedy},
  \bibinfo{person}{Wojciech Zaremba}, \bibinfo{person}{Ilya Sutskever},
  \bibinfo{person}{Joan Bruna}, \bibinfo{person}{Dumitru Erhan},
  \bibinfo{person}{Ian~J. Goodfellow}, {and} \bibinfo{person}{Rob Fergus}.}
  \bibinfo{year}{2014}\natexlab{}.
\newblock \showarticletitle{Intriguing properties of neural networks}. In
  \bibinfo{booktitle}{\emph{2nd International Conference on Learning
  Representations, {ICLR} 2014, Banff, AB, Canada, April 14-16, 2014,
  Conference Track Proceedings}}.
\newblock


\bibitem[Tajbakhsh et~al\mbox{.}(2016)]%
        {tajbakhsh2016convolutional}
\bibfield{author}{\bibinfo{person}{Nima Tajbakhsh}, \bibinfo{person}{Jae~Y
  Shin}, \bibinfo{person}{Suryakanth~R Gurudu}, \bibinfo{person}{R~Todd Hurst},
  \bibinfo{person}{Christopher~B Kendall}, \bibinfo{person}{Michael~B Gotway},
  {and} \bibinfo{person}{Jianming Liang}.} \bibinfo{year}{2016}\natexlab{}.
\newblock \showarticletitle{Convolutional neural networks for medical image
  analysis: Full training or fine tuning?}
\newblock \bibinfo{journal}{\emph{IEEE transactions on medical imaging}}
  \bibinfo{volume}{35}, \bibinfo{number}{5} (\bibinfo{year}{2016}),
  \bibinfo{pages}{1299--1312}.
\newblock


\bibitem[TFLite(2017)]%
        {tf_quantization}
\bibfield{author}{\bibinfo{person}{TFLite}.} \bibinfo{year}{2017}\natexlab{}.
\newblock \bibinfo{title}{TF Lite post-training quantization}.
\newblock
  \bibinfo{howpublished}{https://www.tensorflow.org/lite/performance/post\_training\_quantization}.
\newblock


\bibitem[Tjeng et~al\mbox{.}(2017)]%
        {tjeng2017evaluating}
\bibfield{author}{\bibinfo{person}{Vincent Tjeng}, \bibinfo{person}{Kai Xiao},
  {and} \bibinfo{person}{Russ Tedrake}.} \bibinfo{year}{2017}\natexlab{}.
\newblock \showarticletitle{Evaluating robustness of neural networks with mixed
  integer programming}.
\newblock \bibinfo{journal}{\emph{arXiv preprint arXiv:1711.07356}}
  (\bibinfo{year}{2017}).
\newblock


\bibitem[Ugare et~al\mbox{.}(2022)]%
        {DBLP:journals/pacmpl/UgareSM22}
\bibfield{author}{\bibinfo{person}{Shubham Ugare}, \bibinfo{person}{Gagandeep
  Singh}, {and} \bibinfo{person}{Sasa Misailovic}.}
  \bibinfo{year}{2022}\natexlab{}.
\newblock \showarticletitle{Proof transfer for fast certification of multiple
  approximate neural networks}.
\newblock \bibinfo{journal}{\emph{Proc. {ACM} Program. Lang.}}
  \bibinfo{volume}{6}, \bibinfo{number}{{OOPSLA}} (\bibinfo{year}{2022}),
  \bibinfo{pages}{1--29}.
\newblock
\urldef\tempurl%
\url{https://doi.org/10.1145/3527319}
\showDOI{\tempurl}


\bibitem[Urban and Miné(2021)]%
        {https://doi.org/10.48550/arxiv.2104.02466}
\bibfield{author}{\bibinfo{person}{Caterina Urban} {and}
  \bibinfo{person}{Antoine Miné}.} \bibinfo{year}{2021}\natexlab{}.
\newblock \bibinfo{title}{A Review of Formal Methods applied to Machine
  Learning}.
\newblock
\newblock
\urldef\tempurl%
\url{https://doi.org/10.48550/ARXIV.2104.02466}
\showDOI{\tempurl}


\bibitem[Visser et~al\mbox{.}(2012)]%
        {10.1145/2393596.2393665}
\bibfield{author}{\bibinfo{person}{Willem Visser}, \bibinfo{person}{Jaco
  Geldenhuys}, {and} \bibinfo{person}{Matthew~B. Dwyer}.}
  \bibinfo{year}{2012}\natexlab{}.
\newblock \showarticletitle{Green: Reducing, Reusing and Recycling Constraints
  in Program Analysis}. In \bibinfo{booktitle}{\emph{Proceedings of the ACM
  SIGSOFT 20th International Symposium on the Foundations of Software
  Engineering}} (Cary, North Carolina) \emph{(\bibinfo{series}{FSE '12})}.
  \bibinfo{publisher}{Association for Computing Machinery},
  \bibinfo{address}{New York, NY, USA}, Article \bibinfo{articleno}{58},
  \bibinfo{numpages}{11}~pages.
\newblock
\showISBNx{9781450316149}
\urldef\tempurl%
\url{https://doi.org/10.1145/2393596.2393665}
\showDOI{\tempurl}


\bibitem[Wang et~al\mbox{.}(2018)]%
        {wang2018neurify}
\bibfield{author}{\bibinfo{person}{Shiqi Wang}, \bibinfo{person}{Kexin Pei},
  \bibinfo{person}{Justin Whitehouse}, \bibinfo{person}{Junfeng Yang}, {and}
  \bibinfo{person}{Suman Jana}.} \bibinfo{year}{2018}\natexlab{}.
\newblock \showarticletitle{Efficient formal safety analysis of neural
  networks}. In \bibinfo{booktitle}{\emph{Advances in Neural Information
  Processing Systems}}.
\newblock


\bibitem[Wang et~al\mbox{.}(2021)]%
        {wang2021beta}
\bibfield{author}{\bibinfo{person}{Shiqi Wang}, \bibinfo{person}{Huan Zhang},
  \bibinfo{person}{Kaidi Xu}, \bibinfo{person}{Xue Lin}, \bibinfo{person}{Suman
  Jana}, \bibinfo{person}{Cho-Jui Hsieh}, {and} \bibinfo{person}{J~Zico
  Kolter}.} \bibinfo{year}{2021}\natexlab{}.
\newblock \showarticletitle{Beta-CROWN: Efficient Bound Propagation with
  Per-neuron Split Constraints for Complete and Incomplete Neural Network
  Verification}.
\newblock \bibinfo{journal}{\emph{arXiv preprint arXiv:2103.06624}}
  (\bibinfo{year}{2021}).
\newblock


\bibitem[Wei and Liu(2021)]%
        {DBLP:journals/corr/abs-2106-12732}
\bibfield{author}{\bibinfo{person}{Tianhao Wei} {and} \bibinfo{person}{Changliu
  Liu}.} \bibinfo{year}{2021}\natexlab{}.
\newblock \showarticletitle{Online Verification of Deep Neural Networks under
  Domain or Weight Shift}.
\newblock \bibinfo{journal}{\emph{CoRR}}  \bibinfo{volume}{abs/2106.12732}
  (\bibinfo{year}{2021}).
\newblock
\showeprint[arXiv]{2106.12732}
\urldef\tempurl%
\url{https://arxiv.org/abs/2106.12732}
\showURL{%
\tempurl}


\bibitem[Weiss et~al\mbox{.}(2016)]%
        {weiss2016survey}
\bibfield{author}{\bibinfo{person}{Karl Weiss}, \bibinfo{person}{Taghi~M
  Khoshgoftaar}, {and} \bibinfo{person}{DingDing Wang}.}
  \bibinfo{year}{2016}\natexlab{}.
\newblock \showarticletitle{A survey of transfer learning}.
\newblock \bibinfo{journal}{\emph{Journal of Big data}} \bibinfo{volume}{3},
  \bibinfo{number}{1} (\bibinfo{year}{2016}), \bibinfo{pages}{1--40}.
\newblock


\bibitem[Wong and Kolter(2018a)]%
        {wong2018provable}
\bibfield{author}{\bibinfo{person}{Eric Wong} {and} \bibinfo{person}{Zico
  Kolter}.} \bibinfo{year}{2018}\natexlab{a}.
\newblock \showarticletitle{Provable defenses against adversarial examples via
  the convex outer adversarial polytope}. In
  \bibinfo{booktitle}{\emph{International Conference on Machine Learning}}.
\newblock


\bibitem[Wong and Kolter(2018b)]%
        {wong18convex}
\bibfield{author}{\bibinfo{person}{Eric Wong} {and} \bibinfo{person}{Zico
  Kolter}.} \bibinfo{year}{2018}\natexlab{b}.
\newblock \showarticletitle{Provable Defenses against Adversarial Examples via
  the Convex Outer Adversarial Polytope}. In
  \bibinfo{booktitle}{\emph{Proceedings of the 35th International Conference on
  Machine Learning}}.
\newblock


\bibitem[Xu et~al\mbox{.}(2020)]%
        {Lirpa:20}
\bibfield{author}{\bibinfo{person}{Kaidi Xu}, \bibinfo{person}{Zhouxing Shi},
  \bibinfo{person}{Huan Zhang}, \bibinfo{person}{Yihan Wang},
  \bibinfo{person}{Kai{-}Wei Chang}, \bibinfo{person}{Minlie Huang},
  \bibinfo{person}{Bhavya Kailkhura}, \bibinfo{person}{Xue Lin}, {and}
  \bibinfo{person}{Cho{-}Jui Hsieh}.} \bibinfo{year}{2020}\natexlab{}.
\newblock \showarticletitle{Automatic Perturbation Analysis for Scalable
  Certified Robustness and Beyond}.
\newblock  (\bibinfo{year}{2020}).
\newblock


\bibitem[Yang et~al\mbox{.}(2009)]%
        {5306334}
\bibfield{author}{\bibinfo{person}{Guowei Yang}, \bibinfo{person}{Matthew~B.
  Dwyer}, {and} \bibinfo{person}{Gregg Rothermel}.}
  \bibinfo{year}{2009}\natexlab{}.
\newblock \showarticletitle{Regression model checking}. In
  \bibinfo{booktitle}{\emph{2009 IEEE International Conference on Software
  Maintenance}}. \bibinfo{pages}{115--124}.
\newblock
\urldef\tempurl%
\url{https://doi.org/10.1109/ICSM.2009.5306334}
\showDOI{\tempurl}


\bibitem[Yang et~al\mbox{.}(2022)]%
        {yang2022provable}
\bibfield{author}{\bibinfo{person}{Rem Yang}, \bibinfo{person}{Jacob Laurel},
  \bibinfo{person}{Sasa Misailovic}, {and} \bibinfo{person}{Gagandeep Singh}.}
  \bibinfo{year}{2022}\natexlab{}.
\newblock \bibinfo{title}{Provable Defense Against Geometric Transformations}.
\newblock
\newblock
\showeprint[arxiv]{2207.11177}~[cs.LG]


\bibitem[Zhang et~al\mbox{.}(2022)]%
        {zhang2022general}
\bibfield{author}{\bibinfo{person}{Huan Zhang}, \bibinfo{person}{Shiqi Wang},
  \bibinfo{person}{Kaidi Xu}, \bibinfo{person}{Linyi Li}, \bibinfo{person}{Bo
  Li}, \bibinfo{person}{Suman Jana}, \bibinfo{person}{Cho-Jui Hsieh}, {and}
  \bibinfo{person}{J~Zico Kolter}.} \bibinfo{year}{2022}\natexlab{}.
\newblock \showarticletitle{General Cutting Planes for Bound-Propagation-Based
  Neural Network Verification}. In \bibinfo{booktitle}{\emph{Advances in Neural
  Information Processing Systems}}, \bibfield{editor}{\bibinfo{person}{Alice~H.
  Oh}, \bibinfo{person}{Alekh Agarwal}, \bibinfo{person}{Danielle Belgrave},
  {and} \bibinfo{person}{Kyunghyun Cho}} (Eds.).
\newblock
\urldef\tempurl%
\url{https://openreview.net/forum?id=5haAJAcofjc}
\showURL{%
\tempurl}


\bibitem[Zhang et~al\mbox{.}(2018)]%
        {zhang2018crown}
\bibfield{author}{\bibinfo{person}{Huan Zhang}, \bibinfo{person}{Tsui-Wei
  Weng}, \bibinfo{person}{Pin-Yu Chen}, \bibinfo{person}{Cho-Jui Hsieh}, {and}
  \bibinfo{person}{Luca Daniel}.} \bibinfo{year}{2018}\natexlab{}.
\newblock \showarticletitle{Efficient neural network robustness certification
  with general activation functions}. In \bibinfo{booktitle}{\emph{Advances in
  neural information processing systems}}.
\newblock


\end{thebibliography}

\section{Appendix}

\subsection{Evaluation Results}
\label{sec:stats}

\begin{figure}[!htbp]
\centering
\vspace{-.2in}
\begin{subfigure}[b]{0.41\textwidth}
 \centering
 \includegraphics[width=\textwidth]{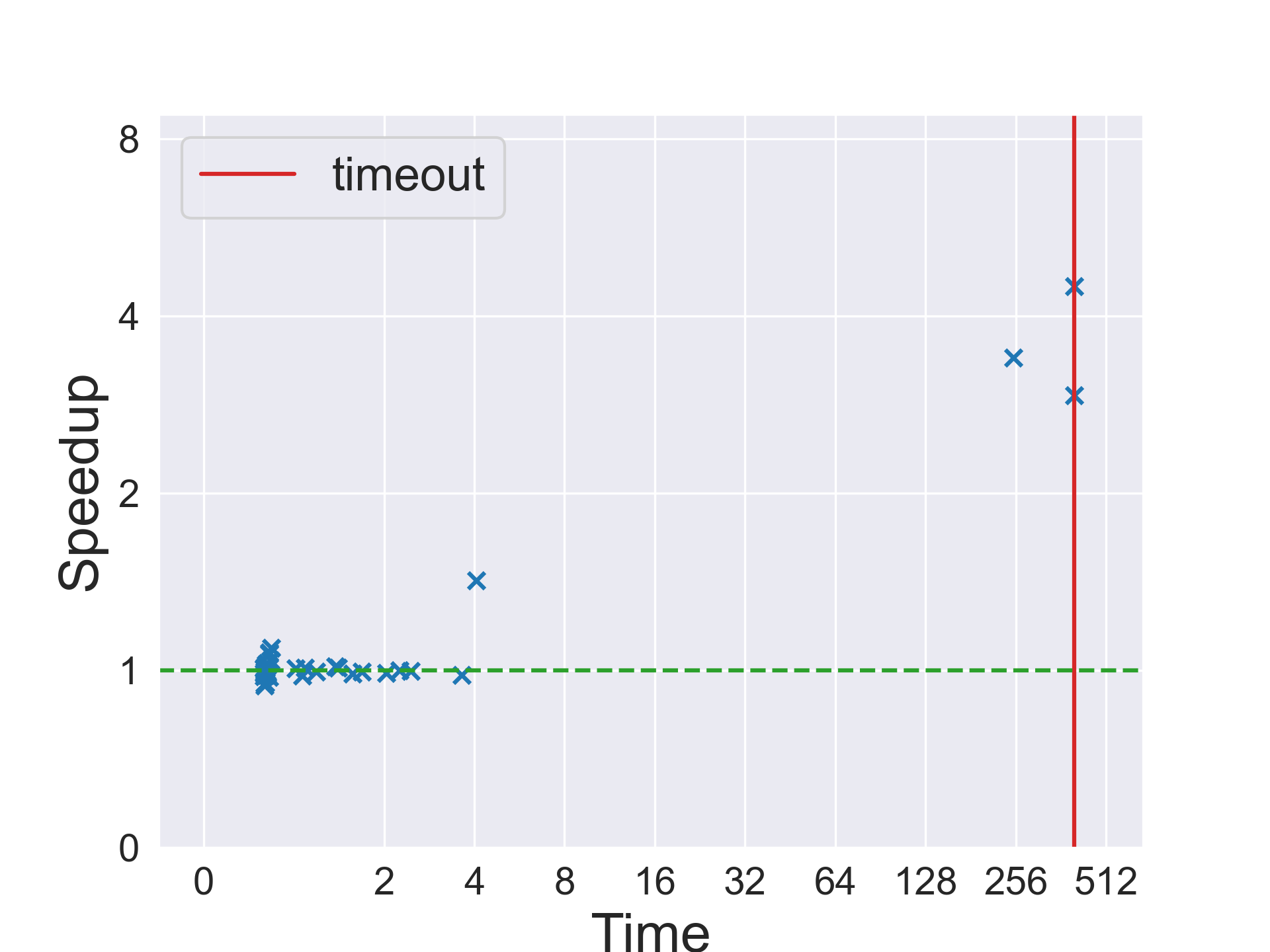}
 \caption{\netc with INT16 quantization}
\end{subfigure}
\hspace{10mm}
\begin{subfigure}[b]{0.41\textwidth}
 \centering
 \includegraphics[width=\textwidth]{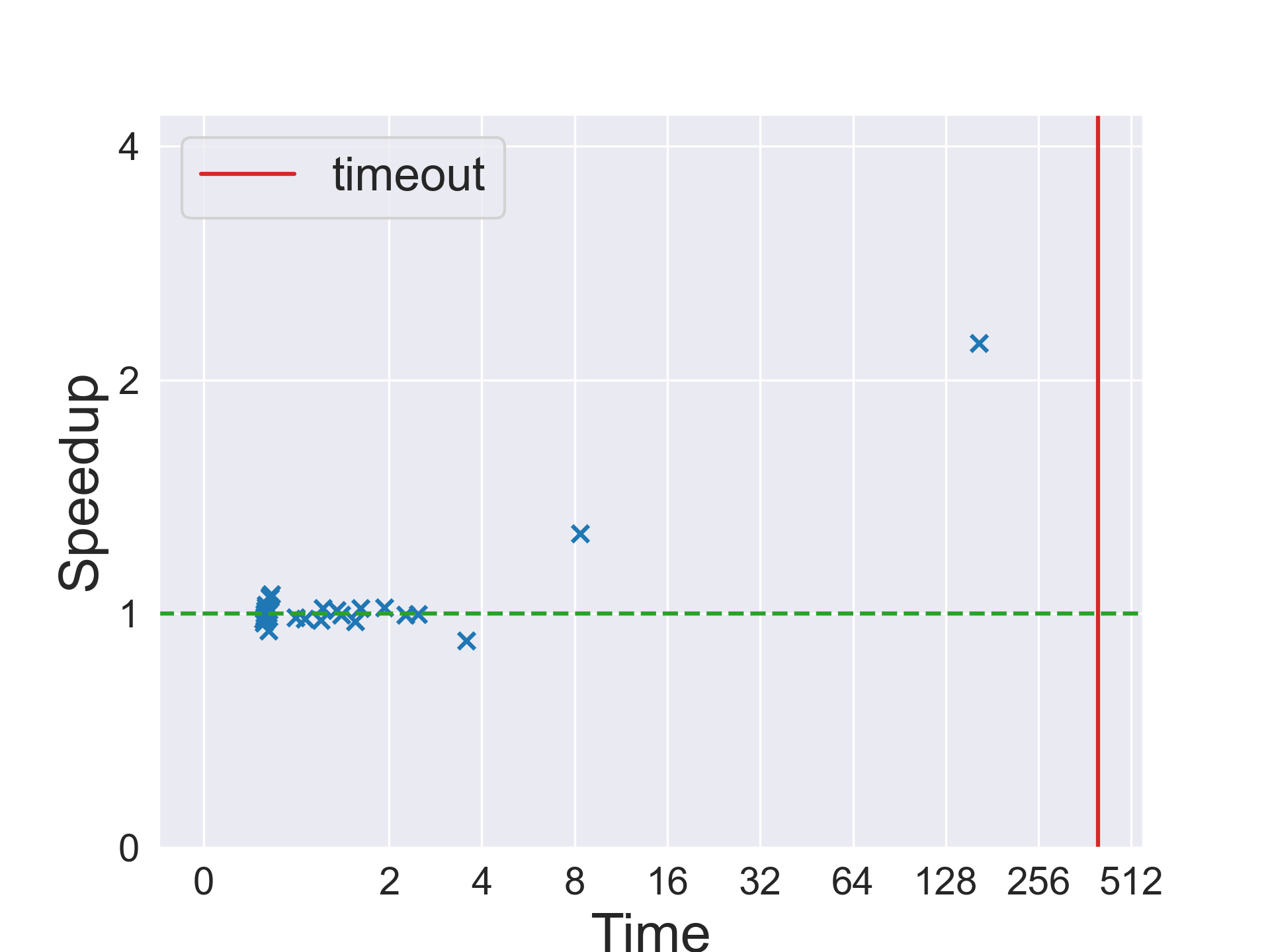}
 \caption{\netc with INT8 quantization}
\end{subfigure}
\hspace{10mm}
\hfill
\begin{subfigure}[b]{0.41\textwidth}
 \centering
 \includegraphics[width=\textwidth]{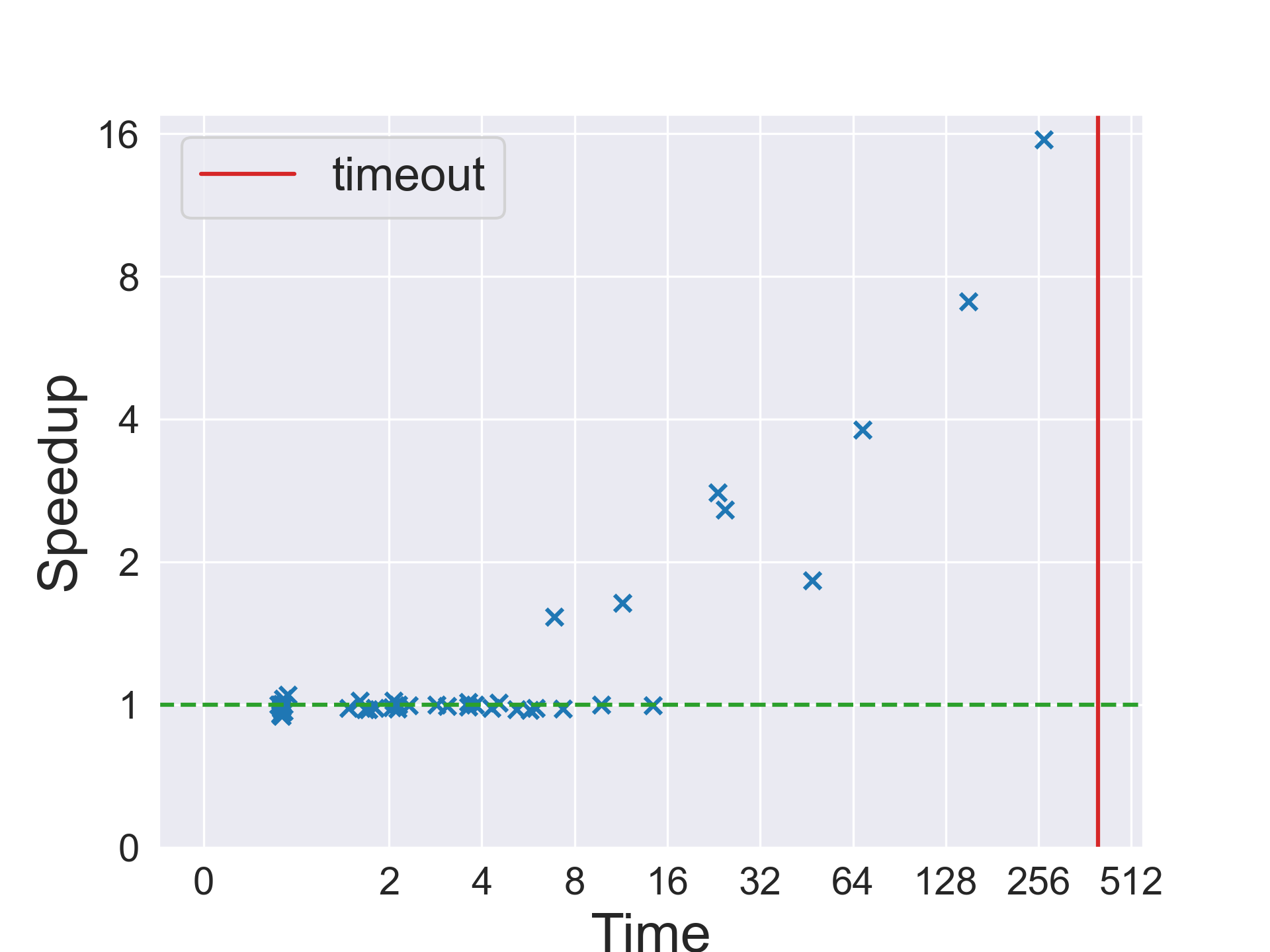}
 \caption{\nete with INT16 quantization}
\end{subfigure}
\hspace{10mm}
\begin{subfigure}[b]{0.41\textwidth}
 \centering
 \includegraphics[width=\textwidth]{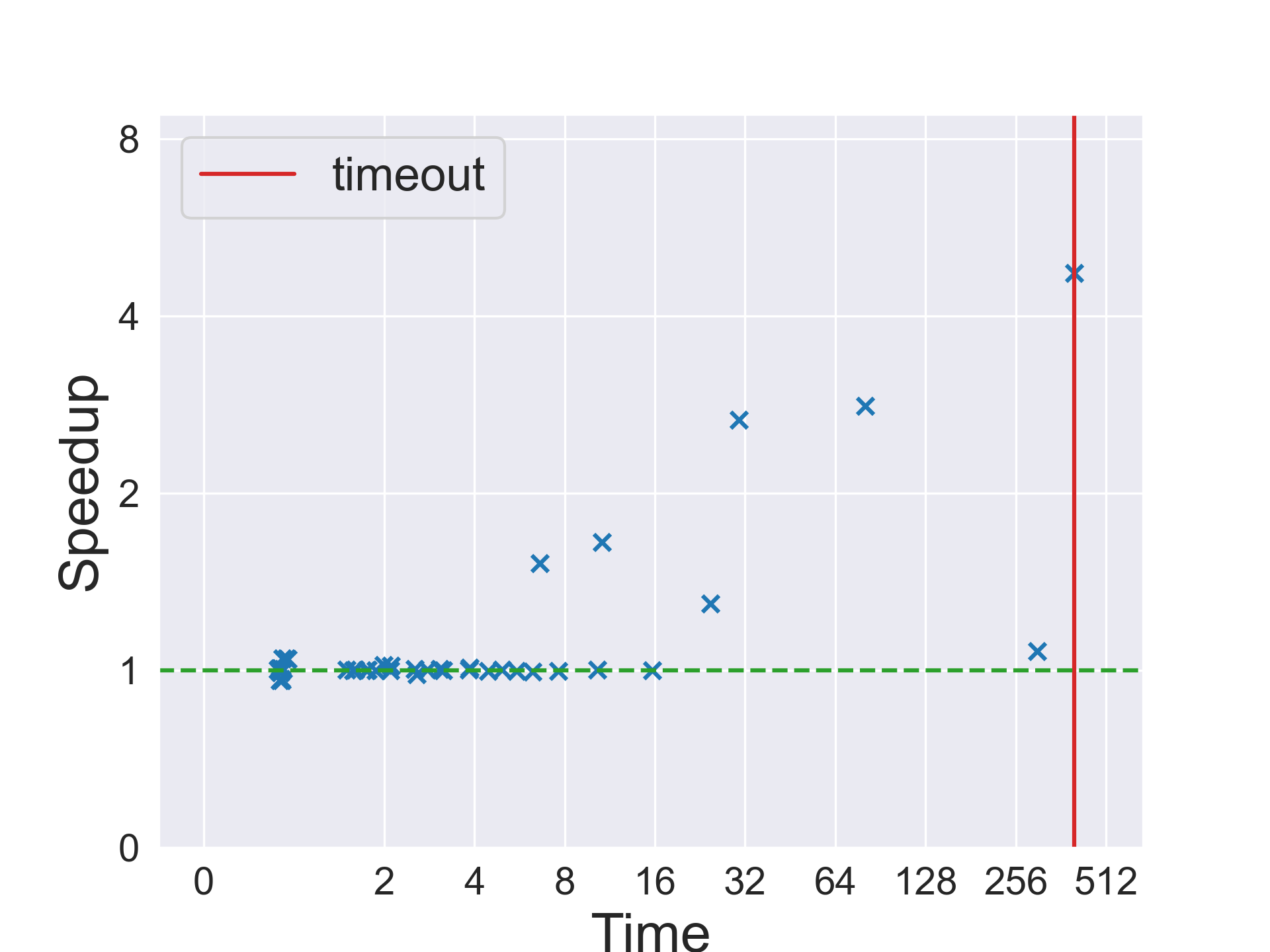}
 \caption{\nete with INT8 quantization}
\end{subfigure}
\vspace{-.15in}
\caption{\Tool{} speedup for the verification of locus robustness properties.}
\label{fig:scatter3}
\end{figure} 

We give more detailed statistics on our experiments in Table~\ref{table:summary}. We separate the results into two cases 
easy instances ($|T^N_f| \leq 5$) and hard instances ($|T^N_f| > 5$). \Tool{} focuses on hard instances, thus we observe more speedup in those cases. Column v/c/u shows the number of instances that are $\ver$, $\counterex$, and $\unknown$ respectively. Columns $Cost_\textit{base}$ and $Cost_\textit{\Tool{}}$ present the average number of analyzer calls made by baseline and \Tool{} respectively. Columns $Solved_\textit{base}$ and $Solved_\textit{\Tool{}}$ show the number of instances solved by the baseline and \Tool{} respectively. Columns $Time_\textit{base}$ and $Time_\textit{\Tool{}}$ give the time taken for verification by the baseline and \Tool{} respectively.

\begin{table*}[!htbp]
\centering
\tablesize
\resizebox{\columnwidth}{!}{
\begin{tabular}{l l |l l l | l l | l l l l | l l l l}
\toprule
& & & & & & & \multicolumn{4}{c|}{$|T^N_f| \leq 5$} & \multicolumn{4}{c}{$|T^N_f| > 5$} \\
Model & Perturbation & Cases & v/c/u & v/c/u & $Cost_\textit{base}$ & $Cost_\textit{\Tool{}}$ & $Solved_\textit{base}$ & $Solved_\textit{\Tool{}}$ & $Time_\textit{base}$ & $Time_\textit{\Tool{}}$ & $Solved_\textit{base}$ & $Solved_\textit{\Tool{}}$ & $Time_\textit{base}$ & $Time_\textit{\Tool{}}$\\ 
\midrule
\neta & int16 & 100 & 85/13/2              & 85/13/2              & 7.53  & 1.94  & 86 & 86 & 19.06  & 18.06  & 12 & 12 & 160.39  & 22.81  \\
 & int8 & 100 & 85/13/2              & 85/13/2              & 7.24  & 4.04  & 86 & 86 & 20.73  & 20.14  & 12 & 12 & 174.87  & 79.84  \\
\netb & int16 & 96  & 74/8/14              & 76/8/12              & 8.64  & 3.74  & 72 & 72 & 123.22 & 113.51 & 10 & 12 & 779.4   & 298.91 \\
 & int8 & 96  & 71/8/17              & 75/8/13              & 8.95  & 3.71  & 70 & 70 & 110.59 & 104.65 & 9  & 13 & 713.35  & 284.01 \\
\netc & int16 & 56  & 33/8/15              & 35/8/13              & 30.16 & 7.12  & 40 & 40 & 44.07  & 42.85  & 1  & 3  & 1054.29 & 300.36 \\
 & int16 & 56  & 33/8/15              & 33/8/15              & 5.63  & 3.07  & 40 & 40 & 47.77  & 46.09  & 1  & 1  & 164.28  & 73.61  \\
\netd & int8 & 73  & 29/32/12             & 31/32/10             & 11.16 & 3.44  & 56 & 56 & 131.14 & 127.89 & 5  & 7  & 1002.33 & 272.6  \\
 & int16 & 73  & 30/32/11             & 32/32/9              & 17.56 & 11.19 & 56 & 56 & 123.08 & 123.98 & 6  & 8  & 1168.83 & 829.82 \\
\nete & int16 & 59  & 27/23/9              & 27/23/9              & 6.32  & 1.76  & 45 & 45 & 152.37 & 131.44 & 5  & 5  & 557.32  & 90.24  \\
 & int8 & 59  & 25/23/11             & 26/23/10             & 9.04  & 4.47  & 45 & 45 & 151.32 & 137.61 & 3  & 4  & 812.87  & 396.85 \\
\hline
\end{tabular}
}
\caption{Summary of statistics on verifying all models with baseline and \Tool{}}
\label{table:summary}
\vspace{-.25in}
\end{table*}

We observe that \Tool{} significantly improves the verifier's performance for verification of specifications that need specification trees of size greater than 5. These proofs contribute to a substantial portion of overall time, and thus \Tool{} offers a large overall speedup. We observe that \Tool{} offers insignificant improvement in the verification time for cases with a small specification tree $T^N_f$. These cases are verified by the baseline in less time. There is not too much our techniques such as reordering and pruning and can do for the already compact trees. 

\subsection{Proofs for Theorems}
\label{sec:proofs}


\timeinc*
\begin{proof}
The incremental verifier starts by bounding $\leaves{\Tinit}$ to check the property, and if needed, it recursively branches the nodes further. 

Consequently, the verifier performs the bounding step for $\leaves{\Tinit}$ and all the new nodes in $T^{\perturbedNetwork}_f$ added to $\Tinit$. 

It performs the branching step for all newly added internal nodes in the specification tree. The number of new internal nodes can be computed as $(|\nodes{T^{\perturbedNetwork}_f}| - |\leaves{T^{\perturbedNetwork}_f}| - |\nodes{\Tinit}| + |\leaves{\Tinit}|)$

Accordingly, we can compute:

 \begin{align*}
\Timeb(\Tinit, T^{\perturbedNetwork}_f) &= \tbo \cdot (|\nodes{T^{\perturbedNetwork}_f}| - |\nodes{\Tinit}| + |\leaves{\Tinit}|)\\
&+ \tbr \cdot (|\nodes{T^{\perturbedNetwork}_f}| - |\leaves{T^{\perturbedNetwork}_f}| - |\nodes{\Tinit}| + |\leaves{\Tinit}|)
\\
&= (\tbo + \tbr) \cdot (|\nodes{T^{\perturbedNetwork}_f}| - |\nodes{\Tinit}| + |\leaves{\Tinit}|) - \tbr \cdot |\leaves{T^{\perturbedNetwork}_f}| \\
\end{align*}

\end{proof}

\invariance*
\begin{proof}
We prove this claim using structural induction on the specification tree. \\
It is trivially true for the specification tree of a single node. Since $\spec{\nroot} = \phi \to \psi$, we can conclude $\spec{\nroot} \Longleftrightarrow  \phi \to \psi$. \\ 
For the inductive step, we assume that for a tree $T_i$ with $i$ splits, the hypothesis is true. If we split node $n_s \in \leaves{T_i}$, we get a tree $T_{i+1}$ with leaf nodes $n_s^+$ and $n_s^-$.  \\
\begingroup
\allowdisplaybreaks
\begin{align*}
   & \Bigg( \bigwedge_{n \in leaves(T_i)} \spec{n} \Bigg) \Longleftrightarrow  (\phi \to \psi)
   && \text{(Induction hypothesis)}\\
   & \Bigg( \bigwedge_{n \in leaves(T_i)/n_s} \spec{n} \Bigg) \land \spec{n_s}  \Longleftrightarrow  (\phi \to \psi) && \\
    & \Bigg( \bigwedge_{n \in \leaves{T_i}/n_s} \spec{n} \Bigg) \land \spec{n_s^+} \land \spec{n_s^-} \Longleftrightarrow  (\phi \to \psi) && \text{(From equation~\ref{eq:split})}\\
    & \Bigg( \bigwedge_{n \in \leaves{T_{i+1}}} \spec{n} \Bigg) \Longleftrightarrow  (\phi \to \psi) && \text{(Combining previous equations)}\\
\end{align*}
\endgroup

Hence, the invariance is true for the specification tree $T_{i+1}$ as well. This completes our induction, and hence, our hypothesis is proved. 
\end{proof}

\begin{restatable}{lemma}{termination}(Termination).
\label{thm:termination}
Algorithm~\ref{alg:algorithm_main} always terminates. 
\label{theorem:complete}
\end{restatable}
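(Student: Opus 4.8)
The plan is to decompose Algorithm~\ref{alg:algorithm_main} into its four sequential phases --- running the base verifier $V$ on $N$ to obtain $\T{N}{f}$, constructing $\Tinit=\Tprune$ from $\T{N}{f}$ (line~\ref{line:prune}), constructing $\hbranch_\Delta$ (line~\ref{line:updateh}), and running the incremental verifier $V_\Delta$ on $\perturbedNetwork$ (line~\ref{line:verinc}) --- and to argue each halts. Once $\T{N}{f}$ is known to be finite, the second and third phases are immediate: Algorithm~\ref{alg:prune} only ever enqueues descendants (in $\T{N}{f}$) of nodes it has already dequeued, so its queue $\queue$ drains after finitely many steps, and the construction of $\hbranch_\Delta$ merely evaluates Equation~\ref{eq:improve} and Equation~\ref{eq:heuristic} over the finitely many nodes of $\T{N}{f}$. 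For the first and fourth phases, the key observation is that Algorithm~\ref{alg:bab} is exactly the loop of Algorithm~\ref{alg:incver} started from the single-node tree; so it suffices to prove: \emph{for any starting specification tree $\Tinit$ in which no ReLU is split twice along a root-to-node path, the loop of Algorithm~\ref{alg:incver} halts.} Phase~1 instantiates this with $\Tinit$ a single node (vacuously satisfying the hypothesis), and phase~4 with $\Tinit=\Tprune$.

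The first step is to verify that $\Tprune$ satisfies the ``no repeated ReLU on a path'' property. I would argue by induction over the top-down construction in Algorithm~\ref{alg:prune} that for every node of $\Tprune$, writing $n$ for the node of $\T{N}{f}$ it is derived from, the set of ReLUs labeling the root-to-node path in $\Tprune$ is a subset of the set of ReLUs labeling the root-to-$n$ path in $\T{N}{f}$: at a good split the path gains the same label $r$ on both children; at a bad split $(n,r)$ the path drops $r$ and instead gains $r_k$, the split of the promoted child $n_k$, and since $n_k$ is a child of $n$ in $\T{N}{f}$ this $r_k$ lies strictly below the root-to-$n$ path and is new to it. Because $\T{N}{f}$ is a BaB trace, its branching heuristic never re-splits a ReLU already on the current path (Definition~\ref{def:branch}), so each of its root-to-node paths has distinct labels; combined with the subset property, the same holds for $\Tprune$.

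The second step is the loop termination under this hypothesis. The invariant is preserved by every \add{} call, since $\hbranch_\Delta$ scores only ReLUs not yet split on the path to the node being split (Definition~\ref{def:branch}), so $r_{chosen}$ is fresh. Hence every root-to-node path of the growing tree $T^{\perturbedNetwork}$ has at most $|\Reluset|$ edges, and over the whole run at most $K := 2^{|\Reluset|+1}-1$ distinct nodes are ever created. Now $\activeList$ is initialized to $\leaves{\Tinit}$, stays finite (each node is replaced by at most two children), and every node is created once and, once removed from $\activeList$, never re-enters. Each pass of the outer loop assigns a status to every node currently in $\activeList$ and then either returns $\counterex$, empties $\activeList$ (all statuses $\ver$), or removes at least one node --- a $\ver$ node, or an $\unknown$ node, which is deleted and replaced by two strictly deeper children (an $\unknown$ node cannot have all of $\Reluset$ fixed on its path, since there $\vbound$ is exact and returns only $\ver$ or $\counterex$). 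Therefore at most $K$ node removals occur, the outer loop runs at most $K+1$ times, and each pass performs finitely many bounding calls, removals, and splits. So $V_\Delta$ (and likewise $V$) halts, and Algorithm~\ref{alg:algorithm_main} terminates.

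The main obstacle I expect is the bookkeeping in the first step: one must check carefully that the pruning in Algorithm~\ref{alg:prune} never reintroduces on a path a ReLU that appears higher up, so that $\Tprune$ inherits the per-path distinctness of BaB traces. Given that, the depth bound $|\Reluset|$, the node-count bound $K$, and the removal-counting argument on $\activeList$ are routine, and the remaining phases require no further work.
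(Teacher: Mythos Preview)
Your argument is correct and rests on the same core observation as the paper's proof: at every branching step a fresh ReLU from $\Reluset$ is chosen, so any root-to-leaf path has length at most $|\Reluset|$, the specification tree stays finite, and the loop halts. The paper's proof is a two-line version of this depth-bound argument applied to Algorithm~\ref{alg:incver} only.

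Where you differ is in thoroughness rather than strategy. The paper does not explicitly check (i) that the base verifier $V$ on $N$ terminates, (ii) that Algorithm~\ref{alg:prune} terminates, (iii) that the pruned tree $\Tprune$ inherits the ``no ReLU repeated on a path'' invariant needed for the depth bound to apply when $V_\Delta$ starts from $\Tinit=\Tprune$, or (iv) the counting that converts the depth bound into a bound on outer-loop iterations. You supply all of these. Your inductive subset argument for (iii) --- that the ReLU labels on a root-to-node path in $\Tprune$ are always a subset of those on the corresponding path in $T^N_f$ --- is a genuine addition: the paper tacitly assumes this holds but never verifies that dropping a bad split and promoting a grandchild cannot introduce a repeated label. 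Your proof is therefore a careful completion of the paper's sketch, not a different route.
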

\begin{proof}
At each specification tree node, we split a ReLU $r \in \mathcal{R}$ that was not split before. Since $|\mathcal{R}|$ is finite, the specification tree cannot have depth $> |\mathcal{R}|$. Thus, Algorithm~\ref{alg:incver} always terminates. \\
\end{proof}
\sound*
\begin{proof}
Let $\treeset$ be the set of specification trees over the architecture $\mathcal{N}$, such that $N, \perturbedNetwork \in \mathcal{N}$\\
By construction $T^N_f \in \treeset$. Algorithm~\ref{alg:algorithm_main} prunes $T^N_f$ to get the tree $\Tinit$. We see that $\Tinit \in \treeset$ since our deletion operation preserves the specification tree property of the tree. \\
Further branching from $\Tinit$ leads to the final tree $T^{\perturbedNetwork}_f$ during the incremental verification. The branching step performs the multiple $\add$ operations, and therefore $T^{\perturbedNetwork}_f \in \treeset$. \\
Algorithm~\ref{alg:incver} removes a node from the active list only when it is verified. (Line~\ref{line:active}) and $A$ is  a sound analyzer for the bounding step for verifying each node (Definition~\ref{def:sound}). \\ 
Thus, if Algorithm~\ref{alg:algorithm_main} returns $\ver$ then for each leaf node $\node \in \leaves{T}$, $\spec{n}$ holds. \\
Since $T^{\perturbedNetwork}_f \in \treeset$, we can use the  Lemma~\ref{lemma:invariance} and conclude that the property $(\phi, \psi)$ must hold.
\end{proof}

\complete*
\begin{proof}
The proof of termination is in Lemma~\ref{thm:termination}.
We prove the claim, $(\phi, \psi)$ holds then the Algorithm~\ref{alg:algorithm_main} returns $\ver$ through contradiction. Suppose Algorithm~\ref{alg:algorithm_main} does not return $\ver$. \\
Since the algorithm always terminates, it must terminate with a $\counterex$. \\
From Lemma~\ref{lemma:invariance} we know that $(\phi \to \psi) \implies \spec{n}$. This can be transformed to $\lnot \spec{n} \implies \lnot(\phi \to \psi)$ \\ 
Thus, if our algorithm returns a $\counterex$ for a specification tree node $n$ that implies $\lnot \spec{n}$ holds for some node $n$.\\
Hence, this statement implies $\lnot(\phi \to \psi)$ \\ 
This contradicts the assumption of this theorem. Hence, Algorithm~\ref{alg:algorithm_main} must return $\ver$. 
\end{proof}

\subsection{Proofs for Network Perturbation Bounds}
\label{sec:proofs2}


\perturba*
\begin{proof}
We first show that if $\EpsNorm \leq \frac{|\ProblemMin(\FeasibleReg(\OrgNetwork_{\Layers}, T))|}{\|\Lpc\|_2 \cdot \MaxNorm(N, T)}$ then $\neg \SolvFunc(\OrgNetwork, T) \implies \neg \SolvFunc(\perturbedNetwork, T)$.
The specification tree $T$ could not verify the property on $N$ then $\exists Y \in \FeasibleReg(\OrgNetwork_{\Layers}, T)$ such that $\Lpc^TY = \ProblemMin(\FeasibleReg(\OrgNetwork_{\Layers}, T)) < 0$. 
We show that the same specification tree ($T$) can not prove the property on any $\perturbedNetwork \in \mathcal{M}(N, \EpsNorm)$ by showing that $\ProblemMin(\perturbedNetwork_{\Layers}, T) < 0$.
In the following part of proof we show that $\exists Y' \in \FeasibleReg(\perturbedNetwork_{\Layers}, T)$ such that $C^TY' < 0$ which makes $\ProblemMin(\FeasibleReg(\perturbedNetwork_{\Layers}, T)) < 0$.
\begin{equation*}
\begin{split}
    \Lpc^{T} Y' &= \Lpc^T Y + \Lpc^{T}(Y' - Y) \\
             &\leq \Lpc^T Y + \|\Lpc\|_2\|(Y' - Y)\|_2 \\
             &\leq \Lpc^T Y + \|\Lpc\|_2 \cdot \EpsNorm \cdot \MaxNorm(\OrgNetwork_{\Layers}, T) \;\; \text{(Using Lemma~\ref{lem:feasiblesolution})}\\
             &< \ProblemMin(\FeasibleReg(N_{\Layers}, T))  + |\ProblemMin(\FeasibleReg(N_{\Layers}, T))| \leq 0 \;\; \text{given}\;\;\ProblemMin(\FeasibleReg(N_{\Layers}, T)) < 0 
\end{split}
\end{equation*}
We now show that $\EpsNorm \leq \frac{|\ProblemMin(\FeasibleReg(\OrgNetwork_{\Layers}, T))|}{\|\Lpc\|_2 \cdot \MaxNorm(N, T)}$ then $ \SolvFunc(\OrgNetwork, T) \implies \SolvFunc(\perturbedNetwork, T)$. 
We prove this by contradiction. Suppose $\ProblemMin(\perturbedNetwork_{\Layers}, T) < 0$ then $\exists Y' \in \FeasibleReg(\perturbedNetwork_{\Layers}, T)$ such that $\Lpc^{T}Y' < 0$. 
Swapping $\OrgNetwork$ with $\perturbedNetwork$ in lemma~\ref{lem:feasiblesolution} we can show $Y \in \FeasibleReg(\OrgNetwork_{\Layers}, T)$ such that $\|Y - Y'\|_2 \leq \EpsNorm \cdot \MaxNorm(\perturbedNetwork, T)$. 
Given perturbation is done only at the final layer $\forall i \in [\Layers - 1]$ $\OrgNetwork_{i} = \perturbedNetwork_{i}$ which implies $\MaxNorm(\OrgNetwork, T) = \MaxNorm(\perturbedNetwork, T)$. 
As $Y \in \FeasibleReg(\OrgNetwork_{\Layers}, T)$ then $\Lpc^Ty \geq \ProblemMin(\FeasibleReg(\OrgNetwork_{\Layers}, T))$.
\begin{equation*}
\begin{split}
        \Lpc^{T} Y' &= \Lpc^T Y + \Lpc^{T}(Y' - Y) \\
             &\geq \Lpc^T Y - \|\Lpc\|_2\|(Y' - Y)\|_2 \\
             &\geq \Lpc^T Y - \|\Lpc\|_2 \cdot \EpsNorm \cdot \MaxNorm(\OrgNetwork, T) \;\; \text{(Using Lemma~\ref{lem:feasiblesolution})} \\
             &\geq \ProblemMin(\FeasibleReg(\OrgNetwork_{\Layers}, T))  - |\ProblemMin(\FeasibleReg(\OrgNetwork_{\Layers}, T))| \geq 0\;\; \text{given}\;\;\ProblemMin(\FeasibleReg(\OrgNetwork_{\Layers}, T)) \geq 0 
\end{split}
\end{equation*}
The above derivation shows that $\Lpc^TY' \geq 0$ which contradicts the assumptions that $\Lpc^TY' < 0$ and $\ProblemMin(\FeasibleReg(\perturbedNetwork, T)) < 0$.
\end{proof}
\

\begin{restatable}{theorem}{terminationguarantee}
\label{thm:termination_guarantee}
The incremental verification time on any perturbed network $\perturbedNetwork \in \mathcal{M}(N, \EpsNorm)$ with $\EpsNorm \leq \frac{|\ProblemMin(\FeasibleReg(N_{\Layers}, T))|}{\|\Lpc\|_2 \cdot \MaxNorm(\OrgNetwork, T)}$ is $\tbo \cdot |\leaves{T}|$ provided $\Tinit = T$ and $\SolvFunc(N, T) = True$.
\end{restatable}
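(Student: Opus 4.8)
The plan is to combine the perturbation bound of Theorem~\ref{thm:perturb1} with a direct trace of Algorithm~\ref{alg:incver} started from $\Tinit = T$. First I would use that $\SolvFunc(N, T) = \mathit{True}$ together with the hypothesis $\EpsNorm \leq \frac{|\ProblemMin(\FeasibleReg(N_{\Layers}, T))|}{\|\Lpc\|_2 \cdot \MaxNorm(\OrgNetwork, T)}$ to invoke Theorem~\ref{thm:perturb1}, which yields $\SolvFunc(\perturbedNetwork, T) = \mathit{True}$ for every $\perturbedNetwork \in \mathcal{M}(N, \EpsNorm)$; that is, $\ProblemMin(\FeasibleReg(\perturbedNetwork_{\Layers}, T)) \geq 0$. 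Since $\FeasibleReg(\perturbedNetwork_{\Layers}, T)$ is the union of the analyzer's over-approximated output regions over the leaves of $T$, this inequality also holds on the restriction to each leaf; hence for every $n \in \leaves{T}$ the analyzer's lower bound $\lb_{\perturbedNetwork}(n)$ is $\geq 0$, so the bounding step $\vbound(n)$ returns $\ver$ and in particular never returns $\counterex$.

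Next I would run Algorithm~\ref{alg:incver} symbolically with $\Tinit = T$. Line~\ref{line:active} initializes $\activeList = \leaves{\Tinit} = \leaves{T}$. In the first pass of the main loop, the bounding step on line~\ref{line:analyzer} is invoked once per node of $\leaves{T}$, and by the previous paragraph each such call yields status $\ver$; hence every node is removed from $\activeList$ on line~\ref{line:ver}, no node reaches the $\unknown$ branch, and no \add{} (branching) step is ever performed. Therefore $\activeList$ is empty at the end of the pass, the loop exits, and the algorithm returns $\ver$ with final tree $T^{\perturbedNetwork}_f = \Tinit = T$.

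The cost accounting is then immediate. The run performed exactly $|\leaves{T}|$ bounding calls and zero branching calls, so under the constant-time assumption ($\tbo$ per bounding step, $\tbr$ per branching step) the total incremental verification time is $\tbo \cdot |\leaves{T}|$. Equivalently, one can substitute $T^{\perturbedNetwork}_f = \Tinit = T$ into Theorem~\ref{thm:timeinc} and use that a full binary tree satisfies $|\nodes{T}| = 2\,|\leaves{T}| - 1$, which collapses $(\tbo+\tbr)\big(|\nodes{T}| + \tfrac{1-|\nodes{T}|}{2}\big) - \tbr\,|\leaves{T}|$ to $\tbo \cdot |\leaves{T}|$, giving the same answer and serving as a consistency check.

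The step I expect to be the main obstacle is the second half of the first paragraph: making precise that $\SolvFunc(\perturbedNetwork, T) = \mathit{True}$, stated in terms of the aggregate region $\FeasibleReg(\perturbedNetwork_{\Layers}, T)$, forces each of the per-leaf bounding calls made by Algorithm~\ref{alg:incver} to succeed. This requires spelling out how $\FeasibleReg(\cdot, T)$ decomposes over the leaf subproblems of $T$ and checking that the analyzer $\vbound$ used inside the incremental loop computes exactly the sound per-leaf lower bound whose minimum over leaves is $\ProblemMin(\FeasibleReg(\perturbedNetwork_{\Layers}, T))$; once that correspondence is established, everything after it is bookkeeping.
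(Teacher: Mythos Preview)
Your proposal is correct and follows essentially the same approach as the paper: invoke Theorem~\ref{thm:perturb1} to transfer $\SolvFunc(N,T)=\mathit{True}$ to $\SolvFunc(\perturbedNetwork,T)=\mathit{True}$, then observe that the incremental verifier started from $\Tinit=T$ terminates after exactly $|\leaves{T}|$ bounding steps. The paper's own proof is a two-sentence appeal to Theorem~\ref{thm:perturb1}; your trace of Algorithm~\ref{alg:incver} and the cross-check via Theorem~\ref{thm:timeinc} add detail the paper omits, and the ``obstacle'' you flag (that the aggregate bound $\ProblemMin(\FeasibleReg(\perturbedNetwork_{\Layers},T))\geq 0$ implies each per-leaf analyzer call returns \ver) is glossed over in the paper as well.
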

\begin{proof}
This result directly follows from Theorem~\ref{thm:perturb1}. As $\SolvFunc(\OrgNetwork, T) = True$, the proposed algorithm will terminate within $\leaves{T}$ number of bounding steps.
\end{proof}

\begin{equation}
    \ProblemMin(N, \treesetforN) = \min_{T \in \treesetforN} \frac{|\ProblemMin(\FeasibleReg(\OrgNetwork_{\Layers}, T))|}{\MaxNorm(N, T)}
\end{equation}

\noindent For any network $\perturbedNetwork \in \mathcal{M}(N, \EpsNorm)$ with $\EpsNorm \leq \frac{|\ProblemMin(N_{\Layers}, \treesetforN)|}{\|\Lpc\|_2}$ the baseline verifier can only verify the property with $T^{N}_{f}$. (using results from Theorem~\ref{thm:perturb1}). 
Therefore, the baseline verifier makes at least $|\nodes{T^{N}_{f}}|$ number of analyzer calls before terminating. 
While for any perturbed network $\perturbedNetwork$ the incremental verifier always terminates within $|\leaves{T^{N}_{f}}|$ analyzer calls. (Theorem~\ref{thm:termination_guarantee}) 
Assuming all bounding steps take the same time then the speed up achieved by the incremental verifier over the baseline is $\frac{|\nodes{T^{N}_{f}}|}{|\leaves{T^{N}_{f}}|}$.
Note we assume the same branching heuristic is used by both the baseline and incremental verifier such that $\treesetforN_{\Tool{}} = \treesetforN_{baseline} = \treesetforN$. 

\begin{restatable}{theorem}{last}
\label{thm:speed_up}
For any perturbed network $\perturbedNetwork \in \mathcal{M}(N, \EpsNorm)$ with $\EpsNorm \leq \frac{|\ProblemMin(N, \treesetforN)|}{\|\Lpc\|_2}$ the incremental verifier with $\Tinit = T^{N}_{f}$ always achives speed up of $\frac{|\nodes{T^{N}_{f}}|}{|\leaves{T^{N}_{f}}|}$ over the baseline verifier provided $\SolvFunc(N, T^{N}_{f}) = True$ and the branching heuristic $\hbranch$ is unchanged.
\end{restatable}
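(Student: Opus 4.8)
The plan is to compute both verifiers' running times exactly and divide. Under the simplifying model of Section~\ref{sec:formulation} (every bounding step costs $\tbo$, every branching step costs $\tbr \ll \tbo$, and in a from-scratch BaB run each node of the final specification tree triggers exactly one bounding call), it suffices to establish two facts: (a) the baseline run on $\perturbedNetwork$ expands precisely the tree $T^{N}_f$, hence makes $|\nodes{T^{N}_f}|$ bounding calls and spends time $\tbo\cdot|\nodes{T^{N}_f}|$; and (b) the incremental verifier started from $\Tinit = T^{N}_f$ makes exactly $|\leaves{T^{N}_f}|$ bounding calls, spending time $\tbo\cdot|\leaves{T^{N}_f}|$. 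Dividing gives the claimed speedup $|\nodes{T^{N}_f}|/|\leaves{T^{N}_f}|$.

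The first thing I would do is unfold the hypothesis. Since $\ProblemMin(N,\treesetforN) = \min_{T\in\treesetforN} |\ProblemMin(\FeasibleReg(\OrgNetwork_{\Layers},T))|/\MaxNorm(N,T)$, the assumption $\EpsNorm \le |\ProblemMin(N,\treesetforN)|/\|\Lpc\|_2$ forces, for \emph{every} $T\in\treesetforN$, the per-tree bound $\EpsNorm \le |\ProblemMin(\FeasibleReg(\OrgNetwork_{\Layers},T))|/(\|\Lpc\|_2\cdot\MaxNorm(N,T))$ required by Theorem~\ref{thm:perturb1}; hence $\SolvFunc(\OrgNetwork,T) \iff \SolvFunc(\perturbedNetwork,T)$ for all $T\in\treesetforN$ (where $\treesetforN$ collects the trees traversed by the baseline BaB on $N$, so in particular $T^{N}_f\in\treesetforN$ and every proper prefix $T_i$ of the trace lies in $\treesetforN$). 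Fact (b) is then immediate: with $\Tinit=T=T^{N}_f$, the hypotheses of Theorem~\ref{thm:termination_guarantee} hold ($\SolvFunc(N,T^{N}_f)$ is given true, and $\EpsNorm$ lies below the per-tree bound as just shown), so the incremental verifier halts after exactly $|\leaves{T^{N}_f}|$ bounding steps with no branching.

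For fact (a) I would argue by induction on the BaB step index that the from-scratch run on $\perturbedNetwork$ retraces the same chain $T_0 < T_1 < \dots < T_f = T^{N}_f$ produced on $N$. The inductive step rests on (i) $\hbranch$ being unchanged, so that on an unsolved active leaf the heuristic $\argmax_{r}\hbranch(\cdot,n,r)$ picks the same split ReLU on $\perturbedNetwork$ as on $N$; and (ii) the analyzer's \ver/\unknown\ verdict on each active leaf being preserved between $N$ and $\perturbedNetwork$, which I would obtain by rerunning the perturbation estimate from the proof of Theorem~\ref{thm:perturb1} at single-leaf granularity (the governing quantities $|\ProblemMin|$ and $\MaxNorm$ over a leaf's feasible region are still dominated by those captured in $\ProblemMin(N,\treesetforN)$, since $\FeasibleReg(\cdot_{\Layers},T)$ is the union over the leaves of $T$). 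No \counterex can ever arise, because $\SolvFunc(N,T^{N}_f)$ true together with Lemma~\ref{lemma:invariance} certifies $(\phi,\psi)$ on $N$ and, via the equivalence, on $\perturbedNetwork$. The run cannot halt before $T^{N}_f$ because each proper prefix has $\SolvFunc(\OrgNetwork,T_i)$ false, hence $\SolvFunc(\perturbedNetwork,T_i)$ false, so $(\phi,\psi)$ is not yet proved; and it does halt at $T^{N}_f$ since $\SolvFunc(\perturbedNetwork,T^{N}_f)$ is true.

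\textbf{Main obstacle.} The delicate part is step (ii): pushing the whole-tree statement $\SolvFunc(\OrgNetwork,T)\iff\SolvFunc(\perturbedNetwork,T)$ down to the per-leaf analyzer verdicts that actually drive the trace, and being precise about what ``$\hbranch$ unchanged'' means given that Definition~\ref{def:branch} lets the heuristic depend on the network. Everything else — unfolding the $\min$, invoking Theorem~\ref{thm:perturb1} and Theorem~\ref{thm:termination_guarantee}, and the final division — is routine.
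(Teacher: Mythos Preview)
Your plan matches the paper's argument in its essentials. Fact (b) is obtained exactly as you propose: the hypothesis unfolds to the per-tree bound for $T^{N}_{f}$, and Theorem~\ref{thm:termination_guarantee} gives termination in $|\leaves{T^{N}_{f}}|$ bounding steps. For fact (a) the paper is considerably terser than your induction: it simply applies Theorem~\ref{thm:perturb1} at the whole-tree level to every $T\in\treesetforN$ with $T < T^{N}_{f}$, obtaining $\neg\SolvFunc(N,T)\Rightarrow\neg\SolvFunc(\perturbedNetwork,T)$, and concludes that the baseline cannot halt before reaching $T^{N}_{f}$. It does not descend to per-leaf analyzer verdicts at all.

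The obstacle you flag---that the whole-tree equivalence $\SolvFunc(N,T)\iff\SolvFunc(\perturbedNetwork,T)$ does not by itself force identical per-leaf verdicts, and that Definition~\ref{def:branch} lets $\hbranch$ depend on the network---is genuine, and the paper does not resolve it either. The paper's proof relies on the sentence just before the theorem statement, which simply \emph{assumes} that the shared heuristic yields $\treesetforN_{\text{baseline}}=\treesetforN$; under that reading, ``$\hbranch$ unchanged'' is taken to mean the branching trajectory is literally the same, so no per-leaf transfer is needed. Your inductive argument is a more honest attempt to justify that assumption, but the leaf-level domination claim you sketch (that single-leaf $|\ProblemMin|$ and $\MaxNorm$ are controlled by the tree-level quantities in $\ProblemMin(N,\treesetforN)$) would need its own lemma; it does not fall out of Theorem~\ref{thm:perturb1} as stated. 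In short: your proof is at least as complete as the paper's, and the gap you worry about is one the paper leaves open as well.
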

\begin{proof}
As $\frac{|\ProblemMin(N, \treesetforN)|}{\|\Lpc\|_2} \leq \frac{|\ProblemMin(\FeasibleReg(\OrgNetwork, T^{N}_{f}))|}{\|\Lpc\|_2 \cdot \MaxNorm(N, T^{N}_{f}, \Layers - 1)}$ the proposed algorithm always terminate within $|\leaves{T^{N}_{f}}|$ number of bounding steps for any $\perturbedNetwork \in \mathcal{M}(N, \EpsNorm)$. (Using Theorem~\ref{thm:termination_guarantee}). Apart from $T^{N}_{f}$ all specification trees $T \in \treesetforN$ were unsuccessful in verifying the property for $\OrgNetwork$. 
As shown in the following derivation all specification trees apart from $T^{N}_{f}$ will fail to prove the property for any network $\perturbedNetwork \in \mathcal{M}(N, \EpsNorm)$. 
\begin{align*}
\forall T \in \treesetforN (T < T^{N}_{f}) &\implies \neg \SolvFunc(N, T) \\
                                             &\implies \neg \SolvFunc(\perturbedNetwork, T)\;\;\;\text{[Using Theorem~\ref{thm:perturb1}]}
\end{align*}
\end{proof}

In this part, we briefly explain how the analyzers handle non-linear activation functions like $\Relu$ unit while verifying neural networks.
This is helpful in understanding the following proofs.
Let $x = \Relu(\hat{x})$ represents a relu unit with input $\hat{x}$ and output $x$.
As described in Section~\ref{sec:bab} we cannot linearize ambiguous $\Relu$ units where $lb(\hat{x}) < 0 < ub(\hat{x})$.
Therefore, the analyzer over-approximates the output of ambiguous $\Relu$ unit using convex relaxation.
\begin{figure}[!htbp]
\centering
\begin{subfigure}[b]{0.3\textwidth}
 \includegraphics[width=\textwidth]{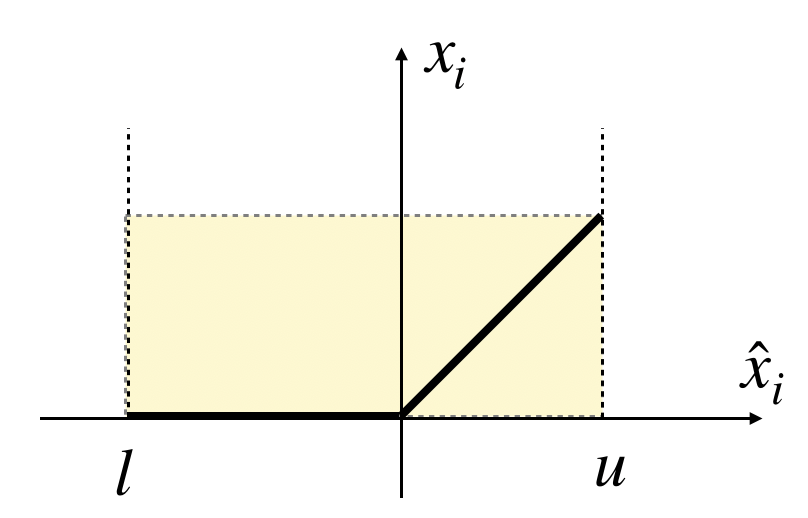}
 \caption{Box relaxation}
 \label{fig:convex_relax1}
\end{subfigure}
\hspace{3mm}
\begin{subfigure}[b]{0.3\textwidth}
 \includegraphics[width=\textwidth]{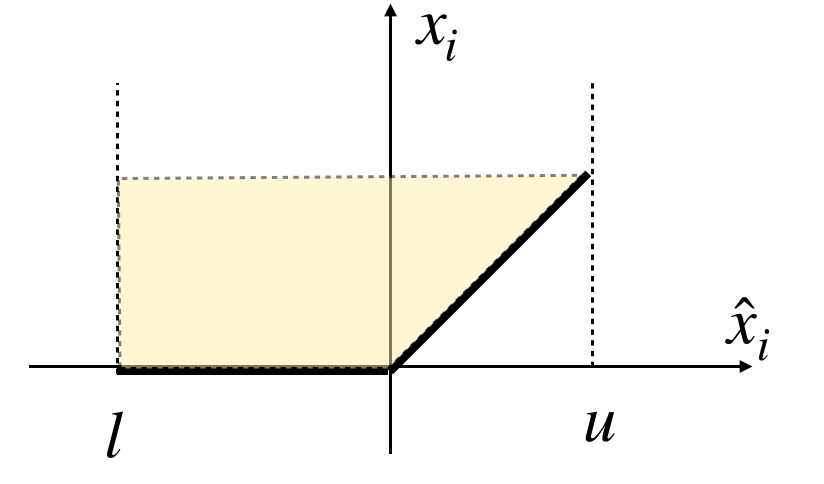}
 \caption{Qudrilateral relaxation}
 \label{fig:convex_relax2}
\end{subfigure}
\hspace{3mm}
\begin{subfigure}[b]{0.3\textwidth}
 \includegraphics[width=\textwidth]{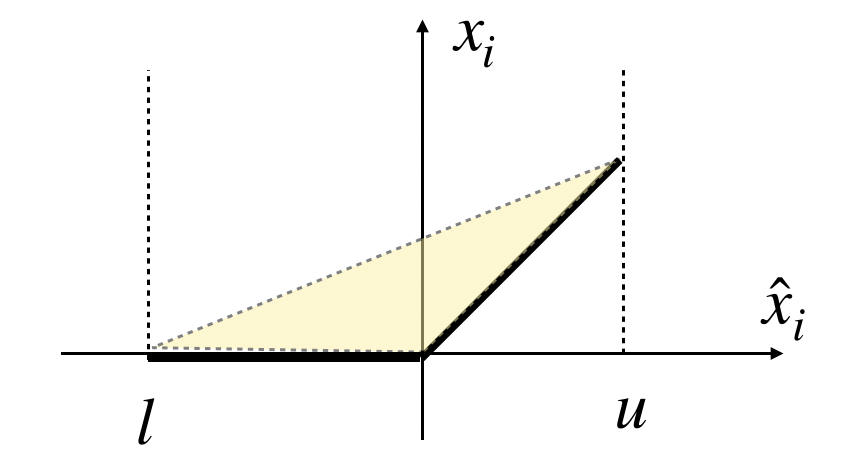}
 \caption{Triangle relaxation}
 \label{fig:convex_relax3}
\end{subfigure}
\hfill
\caption{Different convex relaxations of ambiguous $\Relu$ units. The dark line represents the actual output of the $\Relu$ unit and the shaded regions represent the over-approximated convex relaxations of the output.}
\label{fig:relu_convex_relax}
\end{figure} 
{sec:proofs2} convex relaxations for ambiguous $\Relu$s shown in Fig.~\ref{fig:relu_convex_relax}, the box relaxation is easiest to compute but is imprecise.
On the other hand, triangle relaxation is the most precise but complicated, while quadrilateral relaxation achieves a middle ground between them.
For all theoretical derivation presented below, we assume that the analyzer uses quadrilateral relaxation because it is more precise than box relaxation and simpler than triangle relaxation.
Let $\Reluabstract{\hat{x}}$ denotes over-approximated convex region of $\Relu(\hat{x})$.
For any ambiguous $\Relu$ unit, $x = \Relu(\hat{x})$ the $\Reluabstract{\hat{x}}$ under quadrilateral relaxation is defined by the following constraints.
\begin{align}
    \label{eq:convex_relu_constraints}
    x \geq 0 && x \geq \hat{x} && ub(\hat{x}) \geq x 
\end{align}


\begin{restatable}{lemma}{feasiblesolution}
\label{lem:feasiblesolution}
Let $\OrgNetwork$ and $\perturbedNetwork$ be two $\Layers$-layer networks with the same architecture and weight perturbation made only at the last layer $l$. If $\|\Eps\|_{F} \leq \EpsNorm$ then $\forall Y \in \FeasibleReg(\OrgNetwork_{\Layers}, T)$, $\exists Y' \in \FeasibleReg(\perturbedNetwork_{\Layers}, T)$ such that $\|Y - Y'\|_2 \leq \EpsNorm \cdot \MaxNorm(N, T)$. 
\end{restatable}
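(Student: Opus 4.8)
The plan is to push the whole argument onto the last layer and then combine three ingredients: the analyzer is exact on affine maps, the elementary inequality $\|\Eps Z\|_2\le\|\Eps\|_F\,\|Z\|_2$, and non-expansiveness of the abstract ReLU transformer. First I would note that, since $\Eps$ only perturbs the $\Layers$-th affine map, the first $\Layers-1$ layers of $\OrgNetwork$ and $\perturbedNetwork$ coincide, so the analyzer computes the same region $\mathcal{Z}:=\FeasibleReg(\OrgNetwork_{\Layers-1},T)=\FeasibleReg(\perturbedNetwork_{\Layers-1},T)$ on every leaf of $T$. Because the analyzer is exact on affine functions, the pre-activation region at layer $\Layers$ is the exact image $\{A_\Layers Z+B_\Layers : Z\in\mathcal{Z}\}$ for $\OrgNetwork$ and $\{(A_\Layers+\Eps)Z+B_\Layers : Z\in\mathcal{Z}\}$ for $\perturbedNetwork$, and $\FeasibleReg(\OrgNetwork_\Layers,T)$ (resp.\ $\FeasibleReg(\perturbedNetwork_\Layers,T)$) is obtained from it by applying the quadrilateral ReLU relaxation of Eq.~\eqref{eq:convex_relu_constraints} neuron by neuron.

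Given $Y\in\FeasibleReg(\OrgNetwork_\Layers,T)$ I would extract a witness $Z\in\mathcal{Z}$ together with its pre-activation $\hat X=A_\Layers Z+B_\Layers$ such that $(\hat X,Y)$ satisfies the per-neuron relaxation constraints $x\ge 0$, $x\ge\hat x$, $x\le ub(\hat x)$. Transferring the same $Z$ to $\perturbedNetwork$ gives the (feasible) perturbed pre-activation $\hat X^a=(A_\Layers+\Eps)Z+B_\Layers=\hat X+\Eps Z$, and
\[
\|\hat X^a-\hat X\|_2=\|\Eps Z\|_2\le\|\Eps\|_F\,\|Z\|_2\le\EpsNorm\cdot\max_{Z'\in\mathcal{Z}}\|Z'\|_2=\EpsNorm\cdot\MaxNorm(\OrgNetwork,T),
\]
using that the spectral norm of $\Eps$ is at most $\|\Eps\|_F\le\EpsNorm$ and that $Z\in\mathcal{Z}$. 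I would then construct $Y'$ from $Y$ coordinate-wise so that $(\hat X^a,Y')$ lies in the perturbed relaxation and $|Y'_j-Y_j|\le|\hat X^a_j-\hat X_j|$ for each neuron $j$; a short case split on whether neuron $j$ is fixed-positive, fixed-negative, or ambiguous under $\OrgNetwork$ and under $\perturbedNetwork$ — using only the three relaxation inequalities and the fact that $\hat X^a_j$ always lies between the perturbed bounds — yields this. Summing the coordinate-wise bounds gives $\|Y-Y'\|_2\le\|\hat X^a-\hat X\|_2\le\EpsNorm\cdot\MaxNorm(\OrgNetwork,T)$, and since $Y'$ satisfies the defining constraints of $\FeasibleReg(\perturbedNetwork_\Layers,T)$, the lemma follows. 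Swapping the roles of $\OrgNetwork$ and $\perturbedNetwork$ then also gives the symmetric direction used in the proof of Theorem~\ref{thm:perturb1}.

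The delicate step is the coordinate-wise construction for an \emph{ambiguous} neuron whose regime is unchanged: there $Y_j$ may sit near the upper face $ub(\hat X_j)$ of the relaxation while the perturbed upper bound $ub^a_j$ drifts, so one must argue the perturbed feasible interval for $Y'_j$ is not far below $Y_j$. I would control $ub_j-ub^a_j$ by $\max_{Z'\in\mathcal{Z}}|(\Eps Z')_j|$ and aggregate these over $j$ again via $\|\Eps\|_F\le\EpsNorm$ and the definition of $\MaxNorm(\OrgNetwork,T)$. If instead the last layer is the activation-free output layer, this subtlety disappears entirely, since then $\FeasibleReg(\OrgNetwork_\Layers,T)=\{A_\Layers Z+B_\Layers : Z\in\mathcal{Z}\}$ is exact and one simply takes $Y'=(A_\Layers+\Eps)Z+B_\Layers$, for which $\|Y-Y'\|_2=\|\Eps Z\|_2\le\EpsNorm\cdot\MaxNorm(\OrgNetwork,T)$ immediately.
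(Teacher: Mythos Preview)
Your approach is essentially the paper's: pick a witness $Z$ in the shared penultimate region, push it through both last layers, and repair $Y$ coordinate-wise into the perturbed ReLU relaxation before aggregating via the Frobenius norm. One correction: the intermediate claim $|Y'_j-Y_j|\le|\hat X^a_j-\hat X_j|$ (and hence $\|Y-Y'\|_2\le\|\hat X^a-\hat X\|_2$) is too strong in the upper-face case you flag as ``delicate''---there the gap is controlled not by $|\Eps[j]\,Z|$ for the chosen $Z$ but by $|ub(\hat X_j)-ub(\hat X^a_j)|\le\max_{Z'\in\mathcal{Z}}|\Eps[j]\,Z'|\le\|\Eps[j]\|_2\,\MaxNorm(\OrgNetwork,T)$, which is exactly the per-row bound the paper uses uniformly in all cases; summing $\sum_j\|\Eps[j]\|_2^2=\|\Eps\|_F^2\le\EpsNorm^2$ then gives the conclusion. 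Your ``delicate step'' paragraph already supplies this fix, so the argument goes through once you drop the sharper coordinate inequality from the main chain.
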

\begin{proof}
Let $Y[j]$ denotes the $j$-th coordinate of $Y$ and $A[j]$ and $\Eps[j]$ represent the $j$-th row of $A$ and $\Eps$ respectively. 
$Y \in \FeasibleReg(N_{\Layers}, T)$ then $\exists X \in \FeasibleReg(N_{(\Layers - 1)}, T)$ such that $\hat{Y} = A_{\Layers}X + B_{\Layers}$ and $(\forall j)$, $Y[j] \in \Reluabstract{\hat{Y}[j]}$.
 As first $\Layers -1$ layers of both $\OrgNetwork$ and $\perturbedNetwork$ are same then $X \in \FeasibleReg(\perturbedNetwork_{\Layers - 1}, T )$. 
Let $\hat{Y'} = (A_{\Layers} + \Eps)X + B_{\Layers}$. We first show that $(\forall j)$, $|\hat{Y}[j] - \hat{Y'}[j]| \leq \|\Eps[j]\|_2 \cdot \MaxNorm(N, T )$.
\begin{align*}
    |\hat{Y}[j] - \hat{Y'}[j]| &= |\Eps[j]X| \\
                            &\leq \|\Eps[j]\|_2 \cdot \|X\|_2 \\
                            &\leq \|\Eps[j]\|_2 \cdot \MaxNorm(N, T )
\end{align*}
\\ In the following part of the proof we show how to constuct $Y'[j]$ such that $|Y'[j] - Y[j]| \leq \|\Eps[j]\|_2 \cdot \MaxNorm(N, T )$ while ensuring that $Y'[j] \in \Reluabstract{\hat{Y'}[j]}$. In this case $\Reluabstract{Y'[j]}$ is the convex region defined by constraints presented in Eq.~\ref{eq:convex_relu_constraints}.

\begin{itemize}[leftmargin=*]
\item \textbf{Case 1:} $\hat{Y'}[j] \geq 0$ 
\begin{itemize}[leftmargin=*]
\item \textbf{Case 1.a } $ub(\hat{Y'}[j]) \geq Y[j] \geq \hat{Y'}[j])$
\\
In this case $Y'[j] = Y[j]$ satisfies the constraints defined in Eq.~\ref{eq:convex_relu_constraints}. Therefore, $Y'[j] = Y[j] \in \Reluabstract{\hat{Y'[j]}}$.
\item \textbf{Case 1.b } $\hat{Y'}[j] > Y[j]$
\\
$Y \in \FeasibleReg(N_{\Layers}, T)$ then $Y[j] \in \Reluabstract{\hat{Y}[j]}$ and $Y[j] \geq \hat{Y}[j]$. We show below for $Y'[j] = \hat{Y'}[j] \in \Reluabstract{\hat{Y}[j]}$ and $|Y[j] - Y'[j]| < \|\Eps[j]\|_2 \cdot \MaxNorm(N, T )$
\begin{align*}
    |Y[j] - Y'[j]| &= |Y[j] - \hat{Y'}[j]| \;\;\;\text{(for Case 1.b we define $Y'[j] = \hat{Y}[j]$)} \\
                &\leq |\hat{Y}[j] - \hat{Y'}[j]| \;\;\; (given\;\;\hat{Y'}[j] > Y[j] \geq \hat{Y}[j]) \\
                &\leq \|\Eps[j]\|_2 \cdot \MaxNorm(N, T )
\end{align*}
\item \textbf{Case 1.c } $Y[j] > ub(\hat{Y'}[j])$
\\
For this case we define $Y'[j] = ub(\hat{Y'}[j])$. We show below that $|Y[j] - Y'[j]| \leq \|\Eps[j]\|_2 \cdot \MaxNorm(N, T )$.
\begin{align*}
    |Y[j] - Y'[j]| &= |Y[j] - ub(\hat{Y'}[j])| \;\;\;\text{(for Case 1.c we define $Y'[j] = ub(\hat{Y'}[j])$)} \\
    &\leq |ub(\hat{Y}[j]) - ub(\hat{Y'}[j])|  \;\;\;(given\;\;ub(\hat{Y}[j]) \geq Y[j] > ub(\hat{Y'}[j])) \\
    &\leq \|\Eps[j]\|_2 \cdot \MaxNorm(N, T )
\end{align*}
\end{itemize}
\item \textbf{Case 2: } $\hat{Y'}[j] < 0$
\begin{itemize}[leftmargin=*]
\item \textbf{Case 2.a} $ub(\hat{Y'}[j]) \geq Y[j] \geq 0$
\\
Similar to case 1.a we define $Y'[j] = Y[j]$.
\item \textbf{Case 2b.} $Y[j] > ub(\hat{Y'}[j])$
\\
For this case we define $Y'[j] = max(0, ub(\hat{Y'}[j]))$. The proof $|Y[j] - Y'[j]| \leq \|\Eps[j]\|_2 \cdot \MaxNorm(N, T )$ is same as case 1.c.
\end{itemize}
\end{itemize}
In all the previous cases we assumed $j$-th $\Relu$ unit of the final layer is not splitted. Otherwise for $Y[j]$ either 0 or $Y[j]= \hat{Y}[j]$. Similarly $Y'[j]$ either 0 or $Y'[j]= \hat{Y'}[j]$.
In all of these cases $|Y[j] - Y'[j]| \leq \|\Eps[j]\|_2 \cdot \MaxNorm(N, T )$ as we already proved $|\hat{Y}[j] - \hat{Y'}[j]| \|\Eps[j]\|_2 \cdot \MaxNorm(N, T )$
All these cases we have shown that $(\forall j)\;|Y[j] - Y'[j]| \leq \|\Eps[j]\|_2 \cdot  \MaxNorm(N, T
)$ with $Y' \in \FeasibleReg(\perturbedNetwork, T)$
\begin{align*}
    \|Y - Y'\|^{2}_{2} &= \sum^{\Dimension_{\Layers+1}}_{j=1} |Y[j] - Y'[j]|^2 \\
    \|Y - Y'\|^{2}_{2} &\leq \MaxNorm(N, T)^2 \cdot \sum^{\Dimension_{\Layers+1}}_{j=1} \Eps[j]^2 \\
    \|Y - Y'\|_{2} &\leq \EpsNorm \cdot \MaxNorm(N, T )
\end{align*}
\end{proof}


\end{document}